\newtheorem{theorem}{Theorem}
\newtheorem{assumption}[theorem]{Assumption}
\newtheorem{remark}[theorem]{Remark}
\newtheorem{lemma}[theorem]{Lemma}
\newtheorem{corollary}[theorem]{Corollary}
\newtheorem{proposition}[theorem]{Proposition}
\newcommand{\real}{{\mathbb{R}}}
\newcommand{\by}{{\mathbf{y}}}
\newcommand{\bg}{{\mathbf{g}}}
\newcommand{\bu}{\boldsymbol{u}}
\newcommand{\bW}{{\mathbf{W}}}
\newcommand{\bG}{{\mathbf{G}}}
\newcommand{\bX}{{\mathbf{X}}}
\newcommand{\rate}{\lambda}
\newcommand{\B}{B}
\newcommand{\btheta}{{\boldsymbol{\theta}}}
\newcommand{\bTheta}{{\boldsymbol{\Theta}}}
\newcommand{\htheta}{\widehat{\btheta}}
\newcommand{\hTheta}{{\widehat{\bTheta}}}
\newcommand{\cL}{{\mathcal{L}}}
\newcommand{\cG}{{\mathcal{G}}}
\newcommand{\cV}{{\mathcal{V}}}
\newcommand{\cE}{{\mathcal{E}}}
\newcommand{\cN}{{\mathcal{N}}}
\newcommand{\tO}{{\widetilde{O}}}
\newcommand{\argmin}{\mathop{\rm argmin}}
\newcommand{\norm}[1]{\left \| #1 \right\|}
\newcommand{\inmat}[2]{\langle \kern-0.3ex \langle #1, #2 \rangle \kern-0.3ex \rangle}
\newcommand{\vertiii}[1]{{\left\vert\kern-0.3ex\left\vert\kern-0.3ex\left\vert #1 
		\right\vert\kern-0.3ex\right\vert\kern-0.3ex\right\vert}}
\newtheorem{rep@theorem}{\rep@title}
\newcommand{\newreptheorem}[2]{%
\newenvironment{rep#1}[1]{%
 \def\rep@title{#2 \ref{##1}}%
 \begin{rep@theorem}}%
 {\end{rep@theorem}}}
\renewcommand\appendix{\par
  \setcounter{section}{0}%
  \setcounter{subsection}{0}%
  \gdef\thesection{Appendix \@Alph\c@section}}
\newcommand\supplement{\par
  \setcounter{section}{0}%
  \setcounter{subsection}{0}%
  \gdef\thesection{S.\@arabic\c@section}}
\begin{document}

\title{\bf   Decentralized Sparse Linear Regression via Gradient-Tracking\vspace{1cm}}

\author{  Marie Maros$^{1}$, Gesualdo Scutari$^{2}$, Ying Sun$^{3}$, Guang Cheng$^{4}$    \bigskip  \\
	  $^{1}$ Texas A\&M University, {\it mmaros@tamu.edu}\smallskip\\   $^{2}$ Purdue University, {\it   gscutari@purdue.edu}\smallskip\\
	  $^{3}$ The Pennsylvania State University, {\it ysun@psu.edu}\smallskip \\
  $^{4}$ The University of California, Los Angeles, \it{guangcheng@ucla.edu.} \bigskip \\
        The order of the first three authors is alphabetic.}
\date{First submission: Jan. 2022. Final revised version: Dec. 2024.\footnote{This work was conducted when Sun, Maros, and Cheng were at Purdue University. \newline The work of the first three authors have been supported by   the Office of Naval Research (ONR Grant N. N000142112673 and ONR Grant N. N000142412751).}
}

\maketitle

\begin{abstract}  
We study   sparse linear regression over a network of agents, modeled as an undirected  graph without a center node. The estimation of the $s$-sparse parameter is  formulated as a constrained LASSO problem wherein each agent   owns a subset of the $N$ total   observations. We analyze the convergence rate and statistical guarantees of a distributed projected gradient tracking-based algorithm under high-dimensional scaling, allowing the ambient  dimension $d$ to grow with (and possibly exceed) the sample size $N$. Our theory shows that, under standard   notions of restricted strong convexity and smoothness of the average loss functions,  suitable conditions on the network connectivity  and algorithm tuning, the distributed algorithm converges   globally at a {\it  linear} rate to an estimate that is within the centralized {\it statistical precision} of the model, $O(s\log d/N)$. When $s\log d/N=o(1)$, a condition necessary for statistical consistency, an $\varepsilon$-optimal solution is attained after  $\mathcal{O}(\kappa \log (1/\varepsilon))$ gradient computations  and $O (\kappa/(1-\rho) \log (1/\varepsilon))$  communication rounds,
where $\kappa$ is the restricted condition number of the loss function and $\rho$ measures the network connectivity. 
The computation cost matches that of  the centralized projected gradient algorithm despite  having data distributed; whereas the communication rounds reduce as the network connectivity improves.
Overall, our study   reveals  interesting connections between statistical efficiency, network connectivity \& topology, and  convergence rate in  the high dimensional setting. 
\end{abstract}

\textbf{Keywords:}
High-dimensional estimation, distributed convex optimization, linear convergence, sparse linear regression, gradient-tracking.\bigskip   

\section{Introduction}\label{sec:Intro} Datasets with massive sample size and high-dimensionality are ubiquitous in modern science and engineering; examples include genomics and  biomedical data, social media, financial time-series, and e-commerce data, just to name a few. The  archetypal scenario is the one wherein data  are  produced, collected, or stored at different times and locations. 
The sheer volume and   spatial/temporal disparity of  data render centralized processing and storage on standalone machines prohibitively expensive or outright impossible.  
This has motivated in recent years a surge of interest in developing distributed methods that enable analytics and computations across multiple machines (henceforth referred to   as agents), connected by a communication network.  {}{Broadly speaking,   two architectures  have captured most of the interest of the research  community, namely: (i)  the {\it master-worker} topology    and (ii) a general, connected, multiagent architecture (a.k.a. {\it mesh network}). They are depicted in Fig.~\ref{fig1:topology} and briefly commented next.} \\\indent  \begin{figure}
    \centering
    \includegraphics[scale=0.4]{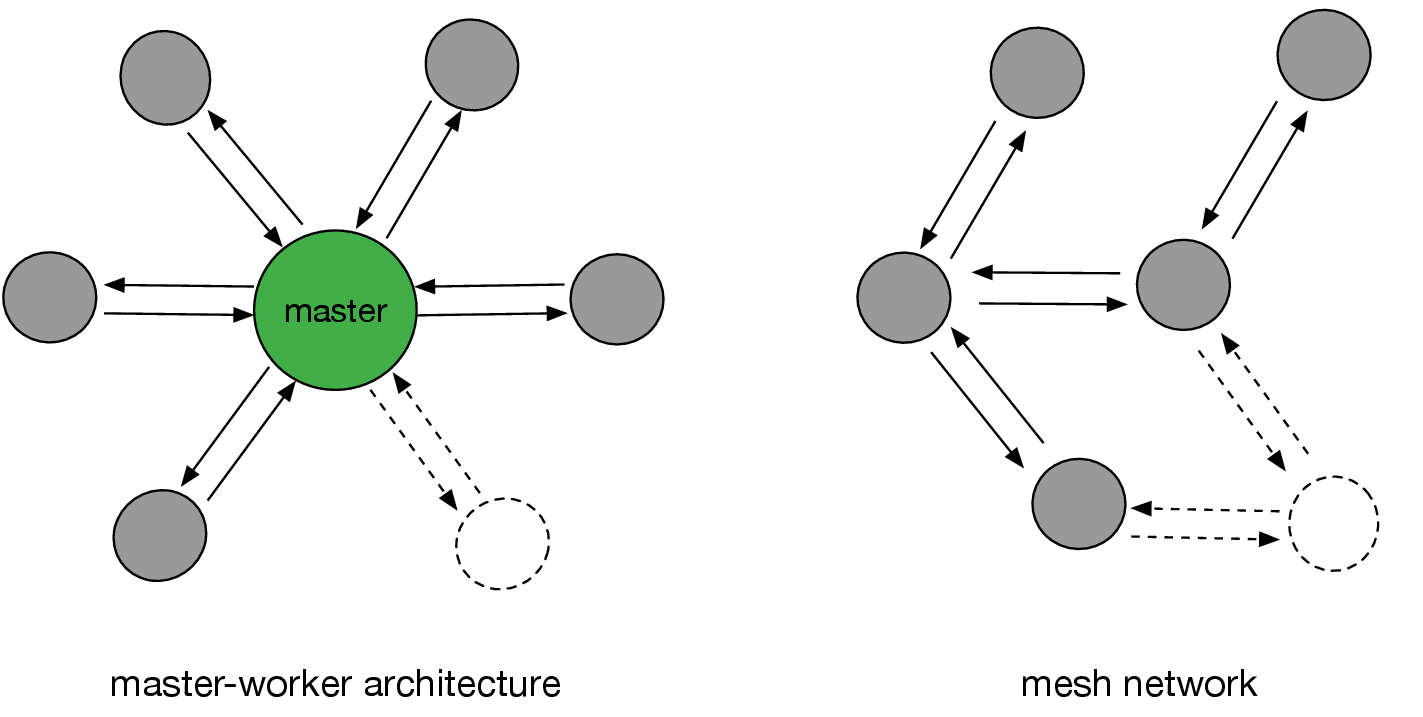}
    \caption{{}{Two network architectures: master-worker    and mesh networks.}}
    \label{fig1:topology}\vspace{-0.4cm}
\end{figure} 
{}{In a {\it master-worker}  architecture, there is typically one or more master nodes connected to all other nodes, termed workers. The workers store part of the data, execute  (in parallel) intermediate   computations, and communicate (iteratively) suitable outcomes to the master node, which maintains and updates   the authoritative copy of the optimization variables, producing eventually the final  output. These federated architectures have been adopted in several applications to parallelize and decompose a variety of learning and optimization tasks; see, e.g.,  \citep{smith_fedLen_SPMag20, pmlr-v54-mcmahan17a, boyd2011distributed}. The statistical community is best acquainted with   
{\it divide-and-conquer} (D\&C) methods whereby the master node combines  the  estimators produced by the workers using their local datasets. The idea has been widely applied in statistical estimation and inference  such as M-estimation~\citep{zhang2013communication,chen2014split,shi2018massive}, sparse high dimensional models~\cite{battey2018distributed,lee2017communication},  PCA~\cite{fan2019distributed}, matrix factorization~\cite{mackey2011divide}, quantile regression~\cite{volgushev2019distributed,chen2019quantile} and non- and semi-parametric methods including~\citep{neiswanger2014asymptotically,shang2017computational,zhao2016partially,zhang2013divide,kleiner2014scalable}, just to name a few. }\\\indent
{}{However, such architectures may be impractical or undesirable in various scenarios. For example, in large-scale systems, the master node can become a systemic bottleneck: (i) its failure can disrupt the entire network, (ii) the node's communication resources may become insufficient as the network size grows   (owing to restricted bandwidth), and (iii) the implementation of some algorithms may require  prohibitive computation  efforts on standalone machines \citep{hendrikx2020}, particularly when processing high-dimensional and large datasets.
 Furthermore, there are scenarios where establishing a star topology (or a spanning tree) is simply not feasible or too resource-expensive; this is the case, e.g.,   of   wireless networks with low-power devices, which can   communicate only with  nodes in their physical proximity. 
To circumventing these impediments, a natural approach is to eliminate master nodes, transitioning to   peer-to-peer architectures   wherein each node is connected exclusively to a subset of other nodes. These are the systems considered in this work. } 

 {
\subsection{Notation}
Throughout the paper vectors  are boldfaced and matrices are boldfaced and capitalized. We denote by $\mathbf{1}$ the vector of all ones, and by $\mathbf{J}=\mathbf{1}\mathbf{1}^\top$ the projection onto the consensus subspace; dimensions of these quantities will be clear from the context. Let $\|\cdot\|_{p}$ be denote$\ell_p$ norm, for $p \geq 1.$ When $p$ is not specified. With a slight abuse of notation, we also denote by $\|\cdot\|_p$ the operator norm when applied to matrices,  induced by the vector  $\ell_p$ norm;   $\|\cdot\|_F$ denotes the Frobenius norm.}

{}{\subsection{Sparse linear regression over mesh networks}}
We study  sparse linear regression problem over a mesh network of $m$  agents, modeled as a general connected, undirected graph. Each agent $i$  takes $n$ linear measures of an $s$-sparse signal $\btheta^* \in \real^d$:  \vspace{-0.2cm}
\begin{equation}\label{eq:I/O}
	\by_i = \bX_i \btheta^* + \mathbf{n}_i,
\end{equation}
where $\by_i\in \mathbb{R}^n$ is the vector of measurements,   $\bX_i \in \real^{n \times d}$ is the  design matrix, and  $\mathbf{n}_i \in \real^n$ is the observation noise. For simplicity we assume that all agents acquire the same number $n$ of measurements, accumulating to a total of   $N=m\cdot n$ over the network. Our focus  is on  the  high-dimensional setting where both  the ambient dimension $d$ and  the   sample size $N$ grow, with $d$ faster than $N$.

A standard approach to estimate $\btheta^*$ from   $\{(\by_i,\bX_i)\}_{i=1}^m$ is solving the LASSO problem, whose constrained form reads\vspace{-0.2cm} 
\begin{equation}\label{p:regularized_ERM_constraint}
	\widehat \btheta \in \argmin_{\| \btheta\|_1 \leq r} \, \left\{ \cL(\btheta) \triangleq \frac{1}{m} \sum_{i=1}^{m} 	\cL_i(\btheta)\right\}, \qquad \text{where} \qquad \cL_i(\btheta) \triangleq \frac{1}{2n} \| \by_i - \bX_i \btheta\|^2.
\end{equation}
In~\eqref{p:regularized_ERM_constraint}, each $\cL_i$ is the least squares loss, local to agent $i$, and the $\ell_1$-norm constraint (with $r$ known to all the agents) aims at promoting sparsity on the solution  $\widehat \btheta$.

As an instance of the  empirical risk minimization problem with nonsmooth regularization, there is a large body of distributed first order algorithms applicable for~\eqref{p:regularized_ERM_constraint}. Among these methods, the gradient tracking based ones have been demonstrated successful in improving the algorithmic efficiency both empirically and theoretically~\citep{Xu-TAC:hs,NEXT16,nedich2016achieving}. As such, this technique has been widely applied to design distributed algorithms in various problem setups. 
Based on decentralizing the proximal gradient algorithm (PGD), the DGT algorithm for problem~\eqref{p:regularized_ERM_constraint}  decouples the optimization by letting each agent $i$ locally maintain an estimator  $\btheta_i \in \mathbb{R}^d$ of the common variable $\btheta$. At every iteration $t$, each agent $i\in [m]$ performs in parallel the following update:   \begin{subequations}\label{alg:DGT}
 \begin{align}
        \text{comm. step:} \quad & \btheta_i^{t}   = \sum_{j =1}^m w_{ij} \,\btheta_j^{t-\frac{1}{2}}, \quad
		\bg_i^{t}  =  \sum_{j = 1}^m w_{ij} \,\Big(\bg_j^{t-1} + \nabla \cL_j (\btheta_j^t) -  \nabla \cL_j (\btheta_j^{t-1}) \Big), \label{eq:tracking}\\
    \text{comp. step:} \quad & \btheta_i^{t + \frac{1}{2}}  = \prod_{\|\btheta_i\|_1 \leq r}~ \left( \btheta_i^t - \gamma^{-1} \bg_i^t\right).\label{eq:loc_opt}
 \end{align}
 \end{subequations}
 In the computation step,   each agent performs a proximal (projected) gradient update, where $\bg_i^t$ plays the role of estimating the centralized gradient $\nabla \cL (\btheta_i^t)$; $\prod_{\|\btheta_i\|_1 \leq r}$ denots the Euclidean projection onto the $\ell_1$ ball $\{\btheta_i\in \mathbb{R}^d\,:\,\|\btheta_i\|_1 \leq r\}$. 
 In  the communication step,  each agent updates  the local variable  $\btheta_i^{t-\frac{1}{2}}$  and gradient estimator 
 $\bg_i^{t-1}$ based on local averaging. The update of $\bg_i^{t}$ (gradient tracking step)  can be  explained as follows:  
At each iteration $t$,  agent $i$   subtracts the outdated gradient $\nabla \cL_i (\btheta_i^{t-1})$ at the previous iteration and adds the new one $\nabla \cL_i (\btheta_i^{t})$;  this refreshes the ``memory'' of $\bg_i$ and ensures that  the  sum of the $\bg_i^t$'s is equal to the sum gradient $\sum_{i = 1}^{m} \nabla \cL_i (\btheta_i^t)$. Then, agents perform a local averaging  by computing a weighted sum using $w_{ij}$, enforcing thus consensus among the $\bg_i$'s. If both $\btheta_i$'s and $\bg_i$'s are asymptotically consensual, then\vspace{-0.2cm}
\begin{align}
	\bg_i^t \to \frac{1}{m} \sum_{j = 1}^{m} \bg_j^t = \frac{1}{m}\sum_{j=1}^m \nabla\cL_j(\btheta_j^t) \to \frac{1}{m}\sum_{j=1}^m \nabla\cL_j(\btheta_i^t) \quad \text{as} \quad  t \to \infty.
\end{align}
That is, $\bg_i^t$ converges to $\nabla \cL (\btheta_i^t)$, as desired.

{}{Despite the vast literature on decentralized algorithms, 
 existing results of DGT are of pure optimization type;  hence they  lack   guarantees in the high-dimensional setting, even for the sparse  linear regression problem \eqref{p:regularized_ERM_constraint}. In contrast, the PGD algorithm as its centralized counterpart has been proven to converge at a fast linear rate to  statistically optimal solutions \citep{agarwal2012fast}. For a detailed review of the literature, we refer readers to Section~\ref{sec_related works}. Given this disparity, it naturally raises the question of whether DGT  can enjoy the same fast rate and statistical optimality of PGD in the decentralized setting, and how the network configuration affects its performance. This paper addresses this open question. }\medskip 

 \subsection{Major contributions}\label{sec_major_contributions}
%
We provide the  statistical and convergence analysis of the DGT algorithm for solving the sparse linear regression problem~\eqref{p:regularized_ERM_constraint}.  Our major contributions are the  following. 

\begin{description}
\item[\bf (i)] {\bf  Statistical-computational guarantees:}  
By exploiting the  Restricted Strong Convexity (RSC) and Smoothness (RSM) properties of $\cL$~\citep{agarwal2012fast},  we establish regularity conditions under which the sequence generated by DGT provably convergences at a \emph{linear rate} to an estimate  whose distance to $\htheta$ is \emph{smaller than the  statistical precision $\| \htheta - \btheta^*\|$}.     Specifically, to enter an $\varepsilon$-neighborhood of such precision,  it takes \begin{equation}\label{eq:it_complexity}
 	O\left( \kappa  \log (1/\varepsilon)\right)
 \end{equation}
iterations (gradient evaluations) as long as  $\rho \leq \texttt{poly}(m,\kappa)^{-1}$, where $\kappa$ is the restricted condition number of $\cL$ and $\rho$ measures the network connectivity.
The condition on $\rho$, when not met by the given network, can be enforced via   multiple rounds of communication per  iteration, resulting in   
 \begin{equation}\label{eq:comm_complexity}
O\left(\frac{\kappa}{1 - \rho}  \log(m\kappa) \log (1/\varepsilon)\right) \end{equation} overall   communication rounds.  
As an implication, our result  shows statistically optimal estimators can be computed at a linear rate by the DGT algorithm in the limit $N,d \to \infty$ and $s \log d/N = o(1)$.  Furthermore,    \eqref{eq:it_complexity}  matches   the iteration complexity    of the {\it centralized} proximal gradient descent (PGD)~\citep{agarwal2012fast}.  

\item[\bf (ii)] {\bf  Sample/network scaling: } The communication complexity (\ref{eq:comm_complexity}) remains unchanged, for    $s\log d/N$ being  constant and for a fixed network.   
We also study the scalability of (\ref{eq:comm_complexity}) with respect to the network size $m$,  revealing tradeoffs between communication rounds and   communication costs for different graph topologies, as summarized in Table~\ref{tab:comm-cost}.  This sheds light on which network architectures are more favorable for high-dimensional estimation. 
For instance, among the commonly used network architectures,   DGT running on  the Erd\H{o}s-R\'{e}nyi graph with edge connecting probability $p = \log m/m$ achieves the lowest communication cost at the busiest node, and ties with  the implementation on the star topology  in the amount of total channel use.
{ 
\item[\bf(iii)]{\bf Technical novelties:} Our  analysis represents a shift from both centralized approaches in high-dimension (e.g., \citep{agarwal2012fast}) and the     prevailing techniques used for studying gradient-tracking algorithms in decentralized optimization  (e.g., \citep{nedich2016achieving,sun2019distributed,xu2021distributed}), across several  aspects. }

{  {\bf  (a)} {\it Inexact descent under RSC/RSM:} Linear convergence of gradient-tracking methods has   been proved  under  the assumption of {\it global } strong convexity of $\cL$ and global smoothness  of $\cL_i$'s--conditions that fall short in the high-dimensional setting where $d>>N$. In contrast, our analysis employs the more  nuanced  RSC and  RSM conditions--better suited for high-dimensional data and known to hold with high probability across various data generation models \citep{Wainwright_2019}.  Our first result is an inexact descent property of the estimation error (detailed in Proposition~\ref{eq:inexact_opt}), achieving linear shrinkage subject to consensus and gradient-tracking errors.  This can be view as a generalization of  \citep{agarwal2012fast}  to more complex scenarios posed by the mentioned errors (not present in   \citep{agarwal2012fast}) and decentralized contexts.  }

{   \textbf{(b)} {\it A new analysis of gradient tracking errors in high-dimensions:} Traditional  analyses of gradient tracking errors leverage the Lipschitz continuity of each $\nabla \cL_i$ to establish {\it uniform} bounds 
 on the  tracking errors   $\mathbf{g}_i^t-\nabla \cL(\btheta_i^t)$, in terms of consensus and optimization errors (see Sec.~\ref{sec:track_err}). However, this approach proves inadequate in high-dimensional settings, where in the    asymptotics $d/N\to \infty$,    the Lipschitz constants  of $\nabla \cL_i$'s  scale as $\mathcal{O}(d)$, for various predictor models \citep{Wainwright_2019},  thereby diverging as $d\uparrow \infty$.  Fundamentally, this issue arises because the gradients   $\nabla \cL_i(\btheta_i)$  remain dense vectors even when all $\btheta_i$'s are   sparse. This would translate in convergence rates scaling unfavorably with the ambient dimension.  
   Our new analysis posits that, under the RSC and RSM conditions,  it is sufficient to manage the tracking errors only along {\it sparse directions}. {\it Within these directions},   $\mathbf{g}_i^t$ is proved to serves as a sufficiently accurate estimate   of $\nabla \cL(\btheta_i^t)$ (see Proposition~\ref{prop:track_error}). This crucial adjustment enables us to achieve convergence rates for DGT that are   {\it independent of  the ambient dimensions}. }

{   \textbf{(c)} {\it Combining error dynamics via the $z$-transform:} The refined treatment of gradient-tracking errors, as discussed in (b), significantly complicates the convergence analysis of DGT. The error dynamics at any given iteration now depend on the consensus/tracking errors and optimization residuals from {\it all past iterates}, starting from initialization. This extended dependency chain precludes the use of simpler, traditional  analytical techniques,  such as those in \citep{Xu-TAC:hs,sun2019distributed,nedich2016achieving} based on the small gain theorem,   typically applied only to dynamics with {\it single-hop} time dependencies.  To effectively manage this historical information, we employ   the  
   $z$-transform for finite-length sequences  and   obtain a sufficient condition for the linear convergence of DGT.  This approach represents a technique of independent interest, offering potential applications to other problem  of decentralized optimization. } 

\end{description}
 
\begin{figure}[H]
 \SetVertexNoLabel
\setlength{\tabcolsep}{5ex}
{\Huge
 \resizebox{\columnwidth}{!}{
\begin{tabular}{ccccc}
\begin{tikzpicture}
		\grPath[prefix=h,RA=2,RS=2]{4} 
\end{tikzpicture} &
	\begin{tikzpicture}
	\begin{scope}
		\grLadder[RA=2,RS=2,prefix = g, prefixx = h]{4}%
	\end{scope}
	\begin{scope}[yshift = -4cm]
		\grLadder[RA=2,RS=2]{4}%
	\end{scope}
	\EdgeMod{b}{g}{4}{0}
\end{tikzpicture} &
\begin{tikzpicture}
	\grStar[RA=2.5]{8}%
\end{tikzpicture} & 
\begin{tikzpicture}
	\grComplete[RA=2/sin(60)]{8}
\end{tikzpicture} &
\begin{tikzpicture}
	\begin{scope}[xshift=12cm]
		\grEmptyCycle[prefix=a,RA=2/sin(60)]{8}
	\end{scope}
	\Edges[style={dashed,lightgray}](a0,a1,a4,a7,a3,a2,a6,a5)
	\Edges[style={dashed,lightgray}](a7,a2,a5)
\end{tikzpicture} \\[5ex]
line graph & 2-d grid & star & complete graph & Erd\H{o}s-R\'{e}nyi 
\end{tabular}
}
}
\caption{Some commonly used graphs $\cG$ for the communication network. Dashed line in the Erd\H{o}s-R\'{e}nyi graph stands for the existence of an edge with some probability. }\vspace{-0.2cm}\label{fig:graph-top}
\end{figure}
 \begin{table}[h!]
 \resizebox{\columnwidth}{!}{
\begin{tabular}{cccccccc}
                   & line  & $2$-d grid    & star  & complete  & push-pull (star)& Erd\H{o}s-R\'{e}nyi & Erd\H{o}s-R\'{e}nyi \\[1ex]\hline\\[-2ex]
$(1 - \rho)^{-1}$    & $O(m^2)$   & $O(m \log m)$ & $O(m^2)$   & $1$   & $1$         & $O(1)$  [$p = \log m/m$]                           & $O(1)$  [$p = O(1)$]                         \\
max degree & $2$        & $4$           & $m$        & $m$   & $m$         & $O(\log m)$                                            & $O(m)$                                                \\
channel use/comm.  & $O(m)$     & $O(m)$        & $O(m)$     & $O(m^2)$  & $O(m)$     & $O(m \log m)$                                         & $O(m^2)$                                              \\
max channel use    & $\tO(m^2)$ & $\tO(m)$      & $\tO(m^3)$ & $\tO(m)$     & $\tO(m)$ & $\boldsymbol{\tO(1)}$                                            & $\tO(m)$                                        \\
total channel use  & $\tO(m^3)$ & $\tO(m^2)$    & $\tO(m^3)$ & $\tO(m^2)$     & $\boldsymbol{\tO(m)}$& $\boldsymbol{\tO(m)}$                                              & $\tO(m^{2})$  \\[1ex]\hline     \vspace{0.1cm}                           
\end{tabular}
}
\caption{Scalability of the communication complexity of DGT with the network size $m$, for  the  network topology listed in Fig.~\ref{fig:graph-top}.  {\bf Max degree}:   maximum degree of the nodes; {\bf channel use/comm.}: total channel use per communication round; {\bf max channel use}:  channel use of the busiest node to reach $\varepsilon$ optimization precision; {\bf total channel use}: total  channel uses to reach $\varepsilon$ optimization precision. $\tO$ hides the logarithmic factors in  communication complexity. Push-pull(star) refers to the centralized PGD algorithm implemented   over star-networks, we refer the readers to the description around~\eqref{alg:PGD} for more details.
}\label{tab:comm-cost}\vspace{-0.2cm}
\end{table}

\subsection{Related works}~\label{sec_related works}
\noindent{}{\noindent\textbf{Distributed methods over master-worker systems.} The last decade has witnessed a significant surge in research on statistical methods within master-worker architectures, as highlighted in Sec.~1. Much of this research has focused on D\&C  strategies \citep{zhang2013communication,chen2014split,shi2018massive,battey2018distributed,lee2017communication,fan2019distributed,mackey2011divide,volgushev2019distributed,chen2019quantile,neiswanger2014asymptotically,shang2017computational,zhao2016partially,zhang2013divide,kleiner2014scalable}. These methods typically involve a single round of communication from clients to the server and achieve statistical optimality, but {\it only with a limited number of client nodes}, restricting their use to small networks.  
  To address these limitations, iterative approaches have been developed that allow for larger networks by removing the cap on the number of workers; examples include  \citep{jordan2018communication,CEASE-JASA,wang2017efficient}. At the cost of multiple communication rounds, they effectively remove restrictions on the number of workers,  at the expense of requiring a {\it minimum sample size} at each client to ensure the statistical optimality of the final estimator at the server.}\\
 \indent  {}{All these algorithms heavily rely on the presence of   a server node that, at each iteration,  aggregates local gradients from clients into a centralized gradient and distributes updates. This setup ensures that the iterates are generated using {\it exact} gradient information, mimicking a fully centralized setting. Without such a centralized node, this mechanism vanishes, rendering these algorithms inapplicable on mesh networks. Moreover, simply replacing the exact gradient with approximations obtained by averaging neighboring local gradients is inadequate. This approach fails to achieve statistically optimal estimates or guarantee convergence, as it does not align with the assumptions and analytical frameworks of centralized methods, which do not account for gradient inaccuracies.}
  \\[1ex]
\noindent\textbf{Distributed methods over  mesh networks.} {}{Early decentralized methods over mesh networks are  distributed (sub-)gradient descent (DGD) type methods including~\citep{Nedic2009,Nedic2010,chen2012fast,Chen-Sayed,nedic2014distributed}. These algorithms are not directly applicable to (\ref{p:regularized_ERM_constraint}) because they cannot handle constraints. Subsequent   schemes handling constraints include  \citep{Sayed,shi2015proximal,NEXT16,sun2019distributed,ScutariSunMathProg,xu2021distributed}. }

Based on the order of local computation and communication, the DGD algorithm has the following two variations: 
\begin{align}
    \btheta_i^{t+1} & = \prod_{\|\btheta_i\|_1 \leq r}~ \left( \sum_{j = 1}^m w_{ij}\btheta_j^t - \gamma^{-1} \nabla \cL_i (\btheta_i^t) \right), \tag*{\text{DGD-CTA}}\\
    \btheta_i^{t+1} & = \prod_{\|\btheta_i\|_1 \leq r}~ \left( \sum_{j = 1}^m w_{ij} \Big(\btheta_j^t - \gamma^{-1} \nabla \cL_j (\btheta_j^t) \Big) \right). \tag*{\text{DGD-ATC}}
\end{align}
In DGD-CTA,   each agent  at iteration $t$ first averages the neighboring variables through a consensus step and then performs a gradient step using its local gradient. Conversely, in the  DGD-ATC, each agent first executes the local gradient descent step before carrying out the consensus averaging. It is well established that DGD algorithms, even when applied to smooth and strongly convex loss functions  $\cL$,   converge at a sublinear rate. To overcome this drawback, the  DGT [cf. eq.~\eqref{alg:DGT}]~\citep{Xu-TAC:hs,NEXT16,sun2019distributed,nedich2016achieving}  and various primal-dual algorithms~\citep{shi2015proximal,yuan2018exact,xu2021distributed} utilize an improved estimate of the centralized gradient at each agent's location. Consequently, these algorithms achieve a linear convergence rate for loss functions 
$\cL$ that are both strongly convex and smooth. For a detailed overview of these methods, please see   ~\citep{nedic10distributed_tutorial,Nedic_Olshevsky_Rabbat2018}.

{}{Despite intensive study, existing convergence results for these algorithms, derived purely from an optimization perspective, lead to pessimistic conclusions when applied to the high-dimensional problem  \eqref{p:regularized_ERM_constraint}. Specifically, since 
$N<d$, the average loss function 
$\cL$
  lacks strong convexity, resulting in sublinear convergence rates for any {\it fixed} ambient dimension $d$.  Moreover, when $d$ increases at a rate faster than $N$, which is the typical regime in high-dimension \citep{Wainwright_2019}, 
  the Lipschitz constant of 
$\nabla \cL$  will  diverge as well, under standard choices of the  design matrices $\{\bX\}_{i=1}^m$. This requires progressively smaller step sizes, ultimately hindering the convergence of the algorithms.} \\[1ex]
\noindent\textbf{First order algorithms  in the high-dimensional setting.} {     It is well-recognized that  convergence analysis and parameter tuning  of {\it centralized} first-order methods for (\ref{p:regularized_ERM_constraint})  must   exploit the landscape properties  of $\cL$ as determined by the underlying  statistical model. Notably, linear  convergence to   statistical optimal estimates  is established in~\citep{agarwal2012fast,wang2014optimal,lin2014adaptive,xiao2013proximal} for the proximal gradient algorithm applied to~\eqref{p:regularized_ERM_constraint}, under the  RSC and RSM properties of $\cL$, which hold with high probability across various random data generation  models and noise distribution \citep{Wainwright_2019}. These conditions, coupled with a proper algorithm tuning,  ensure that the algorithmic trajectory remains within a confined region where   $\cL$ is approximately  strongly convex and smooth, thereby securing linear convergence up to statistical optimality.} {  An intriguing question is whether {\it distributed} first-order algorithms can similarly leverage this landscape to achieve rapid convergence toward statistically optimal estimates, particularly when local agents lack sufficient data samples to ensure statistical consistency locally. }

{  In contrast to their centralized counterparts, the performance of distributed algorithms in high-dimension is less clear. While research is still evolving, existing studies (concurrent to this work \citep{NetLASSOarXiv}) suggest that not all categories of distributed algorithms consistently benefit from these favorable landscape properties, indicating a more complex interaction among algorithmic design, distributed nature of data, and network topology, to be revealed. }   For instance,  the analysis of a DGD-CTA type algorithm for the Lagrangian formulation of the LASSO problem,  developed in the companion paper~\citep{Ji-DGD21}, proves that the iterates generated by the algorithm  enter an $\varepsilon$-neighborhood of a statistically optimal estimate of $\btheta^\ast$ after  \begin{equation}\label{eq:DGD-comm-complexity}
\mathcal{O} \left( \frac{\kappa}{1-\rho} \cdot d\cdot m \log m \cdot \log \frac{1}{\varepsilon} \right)\end{equation} number of  iterations, where $\rho\in [0,1)$ is a measure of the connectivity of the network (the smaller $\rho$, the more connected the graph, see Assumption~\ref{assump:weight} in Sec.~\ref{sec:Main-results} for a formal definition); and  $\kappa$  is  the restricted condition number of  $\cL$.  This result shows that DGD-CTA converges at a linear rate, but  the number of iteration towards statistically  optimal estimates grows as $\mathcal{O}(d)$, even in the asymptotic regime   $N,d\uparrow \infty$ where  the statistical error $\mathcal O((s\log d)/N)$ remains unchanged. In contrast,   the DGD-ATC algorithm achieve statistical optimal estimates at a converges at rate ~\citep{ji2023distributed} 
\begin{equation}\label{eq:DGD-ATC}
    \widetilde{\mathcal{O}} \left( \frac{\kappa }{1 - \rho} \cdot \log (dm) \cdot \log \frac{1}{\varepsilon}\right),
\end{equation}
which has the more favorable $\mathcal{O}(\log d)$ scalability with $d$,  compared to the CTA version.

The key difference between the two DGD variants lies in  their local optimization approaches.  In the CTA version, each agent uses its local gradient evaluated at a point that is \emph{inconsistent} with the iterate obtained after the communication step, 
necessitating a step size of  $\mathcal{O}(d^{-1})$ regardless of   network connectivity. Conversely, by swapping the order of the communication and optimization steps, the ATC version allows for a {\it constant} step size in well-connected networks, enhancing scalability with the ambient dimension. Building on the insights  from these analyses,  in a paper \citep{maros2022dgd} subsequent to this work, we introduce a modified version of DGD  (termed DGD$^2$) that  achieves statistical optimality at comparable communication and computational cost of DGT.  

\subsection{Paper organization}
The remainder of the paper is structured as follows: Sec.~\ref{sec:Main-results} outlines the primary assumptions and details the statistical-computational guarantees--the proofs  are reported  in Sec.~\ref{sec:convergence}, with  intermediate results available in the appendix.   Numerical simulations corroborating our theoretical findings are discussed in Sec.~\ref{sec:numerical_result}.  Finally, Sec.~\ref{sec:conclusions} 
 draws some conclusions.   \vspace{-0.2cm}

\section{Convergence Analysis}\label{sec:Main-results}

We start by introducing the assumptions related to the LASSO problem~\eqref{p:regularized_ERM_constraint} and  the network topology, used  for analyzing the convergence of DGT, given in~\eqref{alg:DGT}.

\subsection{Problem Setup and Main Assumptions}
\begin{assumption}[Global RSC/RSM]\label{assump:G_RSM}
Given $\bX = [\bX_1^\top, \ldots, \bX_m^\top]^\top \in \mathbb{R}^{N \times d}$, with $N = n \cdot m$,  the following conditions   hold:\vspace{-0.2cm}
\begin{align}
	\text{(Global RSC/RSM) }\qquad
\begin{split}\label{eq:G_RE}
	\frac{\| \bX \bu\|^2}{N} & \geq \mu_{\Sigma} \| \bu\|^2 - \tau_\mu \|\bu\|_1^2,\\
	\frac{\| \bX \bu\|^2}{N} & \leq L_\Sigma \| \bu\|^2 + \tau_g \|\bu\|_1^2, 
\end{split}  \quad \forall \ \bu \in \real^d,
\end{align}
where $(\mu_{\Sigma}, \tau_\mu)$ and $(L_\Sigma, \tau_g)$ are positive RSC/RSM parameters.
\end{assumption}

{  Assumption~\ref{assump:G_RSM} is a standard requirement for ensuring linear convergence of the  PGD algorithm \citep{agarwal2012fast} to statistically optimal estimates, even in centralized settings.} Additionally, we  impose the following local restricted smoothness condition on the agents' losses, which is instrumental to ensure that the gradient tracking  procedure estimates the centralized gradient accurately.

\begin{assumption}[Local RSM]\label{assump:L_RSS}
For each local design matrix $\bX_i$, the following   holds:
\begin{align}
	\frac{\| \bX_i \bu\|^2}{n} & \leq \ell_\Sigma \| \bu\|^2 + \tau_\ell \|\bu\|_1^2, \quad \forall \ \bu \in \real^d,
\end{align}
where $(\ell_{\Sigma}, \tau_\ell)$ are positive RSM parameters.
\end{assumption}


The mesh network is modeled as an undirected graph $\cG \triangleq (\cV,\cE)$, with   $\cV=\{1,\ldots, m\}$ being the set of $m$  nodes and  $\cE\subseteq \cV\times \cV$ being the set of edges corresponding to the communication links:   $(i,j) \in \cE$ iff there exists a communication link between agent $i$ and $j$. We denote by  $\cN_i=\{j\in \cV\,:\, (i,j)\in \cE\}\cup \{i\}$ the set of  neighbors of agent $i$ (including the agent itself). We make the following standard Assumption~\ref{assump:net1} on the graph connectivity, which is necessary to achieve consensus over the network.    

\begin{assumption}[Connected network]\label{assump:net1}
The graph $\mathcal G$ is connected.
\end{assumption}

To achieve consensus and ensure implementation over the communication network  $\mathcal{G}$,   standard conditions in the literature require the following constraints on the weights   $w_{ij}$ used in the DGT algorithm~\eqref{alg:DGT}, where   $\mathcal{P}_K$  denotes the set of polynomials with degree no larger than $K$. 
 
\begin{assumption}\label{assump:weight} [On the gossip matrices]  The matrix $\mathbf{W}=(w_{ij})_{i,j=1}^m$  satisfies the following:  

\noindent\textbf{(a)}   $\mathbf{W}=P^K(\overline{\mathbf{W}})$, where $P_K\in \mathcal{P}_K$ with $P_K(1)=1$, and $\overline{\mathbf{W}}\triangleq \big(\bar{w}_{ij}\big)_{i,j=1}^m$    
 has a sparsity pattern compliant with $\mathcal{G}$, that is \vspace{-0.2cm}
	\begin{enumerate}[label=\roman*)]
		\item $\bar{w}_{ii} > 0$, for all $i = 1, \ldots, m$;\vspace{-0.1cm}
		\item $\bar{w}_{ij} > 0$, if $(i,j) \in \mathcal{E}$; and $\bar{w}_{ij}=0$ otherwise.\vspace{-0.2cm}
	\end{enumerate}
	Furthermore, $\overline{\mathbf{W}}$ is doubly stochastic, that is, $\mathbf{1}^\top \overline{\mathbf{W}} = \mathbf{1}^\top$ and $\overline{\mathbf{W}}  \mathbf{1} = \mathbf{1}$. \smallskip 
	
	\noindent\textbf{(b)} Let $\rho\triangleq \|\mathbf{W}-\mathbf{J}\|_2$; 
    then, it holds   $\|(\bW - \mathbf{J})^{t}\|_\infty \leq c_m \rho^t,$ for all $t=1,2,\ldots,$ and  some $c_m \geq 1$. 
\end{assumption}

 \begin{remark}
 	 Assumption~\ref{assump:net1} and Assumption~\ref{assump:weight}(a) imply $\rho<1$ and $\|(\mathbf{W}-\mathbf{J})^t\|_2=\rho^t$. Using the norm bound  $\|(\bW - \mathbf{J})^{t}\|_\infty \leq \sqrt{m}\,\|(\mathbf{W}-\mathbf{J})^t\|_2$, we infer that  Assumption~\ref{assump:weight}(b) holds with   $c_m\leq \sqrt{m}$. 
 \end{remark}

 Assumption~\ref{assump:weight}(a) requires   $\overline{\bf W}$ being a consensus-forcing matrix with sparsity pattern matching the topolgy of communication network graph $\mathcal{G}$. When implementing the DGT algorithm~\eqref{alg:DGT} with $w_{ij} = \overline{w}_{ij}$, step~\eqref{eq:tracking} can be executed via one-hop communication with each agent's immediate neighbors. 
Several rules of choosing $\overline{\mathbf{W}}$ have been proposed in the literature satisfying Assumption \ref{assump:weight}, such as  the Laplacian,  the Metropolis-Hasting, and the maximum-degree weights rules; see, e.g.,     \citep{Nedic_Olshevsky_Rabbat2018}  and references therein. 
When the connectivity of $\mathcal{G}$ is relatively weak, it may be beneficial to run multiple communication rounds to accelerate information propagation.  This is realized by letting ${\bW} = P_K (\overline{\bW})$.
When $K>1$,   $K$  rounds  of communications per iteration $t$ are employed. 
For instance, by letting $\bW = \overline{\bW}^K$, the connectivity parameter can be improved from $\bar{\rho} = \|\overline{\mathbf{W}}-\mathbf{J}\|_2$ to $\|\overline{\mathbf{W}}^K-\mathbf{J}\|_2 = \bar{\rho}^K$.
Faster information  mixing   can be  obtained using suitably designed polynomials $P_K(\overline{\mathbf W})$, such as Chebyshev \citep{auzinger2011iterative,scaman2017optimal} or orthogonal (a.k.a. Jacobi) \citep{Berthier2020} polynomials.  

It is worth noting that the DGT algorithm contains as a special instance, the proximal gradient descent  when the graph $\mathcal G$ is a  star-network.  In this setting, by electing the first node as the master node (star center), the centralized PGD iterate 
	\begin{align}\label{alg:PGD}
		\btheta^{t+1} = \prod_{\|\btheta\|_1 \leq r}~ \left( \btheta^t - \gamma^{-1}\nabla \cL (\btheta^t)\right)
	\end{align}
can be naturally implemented on the star-network. Specifically,  at each iteration $t$, the master node broadcasts $\btheta^t$.  Then each client node $j$ evaluates $\nabla \cL_j (\btheta^t)$ and sends it to the master. The master node averages the gradients  $\nabla \cL(\btheta^t)=\frac{1}{m}\sum_{j=1}^m  \nabla \cL_j (\btheta^t)$ based on which the optimization step~\eqref{alg:PGD} is computed. The new value $\btheta^{t+1}$ is broadcast back to the clients.
It is not difficult to check that the above procedure can be encompassed by the DGT algorithm~\eqref{alg:DGT} using the mixing matrix  $\bW = \bW' \bW'^{\top}$, with $\bW' = 1/\sqrt{m}[\mathbf{1}_{m}, \mathbf{0}_{m \times m-1}]$.
Note that the resulting $\bW$ is the same as that of a complete graph, but implemented in a more communication efficient way.

We are ready to  present the main convergence properties of DGT and their implications.  
Our statements consists of two parts. In Sec.~\ref{sec:summary_results_RSC/RSM},  we    establish conditions  under which the network average optimization error $ \frac{1}{m} \sum_{i = 1}^m \| \btheta_i^t - \htheta\|^2$ shrinks  linearly up to a tolerance   $o(\|\htheta - \btheta^*\|^2)$, and provide an explicit expression of the rate. In Sec.~\ref{sec:summary_results_high_prob}, we show that for  the statistical model subsumed in~\eqref{eq:I/O} with random design matrix, this result  holds with high probability. We also discuss the impact of the network topology on the algorithm's performance. The  proofs of the theorems are deferred to  Sec.~\ref{sec:convergence}. 

\subsection{Convergence under RSC/RSM}\label{sec:summary_results_RSC/RSM}

Under Assumption~\ref{assump:G_RSM}-\ref{assump:weight}, define 
\begin{equation}\label{eq:Delta_stat_final}
\Delta_{\rm stat} \triangleq \frac{\rho}{2 L_\Sigma}\left(   \frac{ \ell_{\Sigma}}{\mu_{\Sigma}} \frac{5 C_2^2   c_m^2 }{(1 - \rho)^2}  \right) \tau_\ell \,\nu^2  + \frac{C_1 (\tau_\mu + \tau_g )}{L_\Sigma}   \,  \nu^2, \quad \text{with}\quad \nu \triangleq 2\, \|\htheta - \btheta^* \|_1 + 2 \sqrt{s} \| \htheta - \btheta^*\|,
\end{equation}
 and   
\begin{align}\label{eq:rate_expression_final}
       \rate \triangleq \frac{1 - (2\kappa)^{-1} + C_1 s (\tau_\mu + \tau_g)/L_\Sigma}{1 - 2 C_1 s \tau_g /L_\Sigma},\quad \text{with}\quad \kappa \triangleq \frac{L_\Sigma}{\mu_\Sigma},
\end{align}
where   $C_1 >0$, $C_2 >1$ are universal constants.

The convergence rate of DGT is given by the following theorem. 

\begin{theorem}\label{thm:convergence_deterministic_1}
Given the 
 linear model~\eqref{eq:I/O} with $\btheta^*$ being   $s$-sparse, consider the LASSO problem (\ref{p:regularized_ERM_constraint}) over  a network $\cG$ satisfying Assumption~\ref{assump:net1}. Suppose that the global     $\cL$ and   local   $\cL_i$ losses  satisfy Assumption~\ref{assump:G_RSM}  and Assumption~\ref{assump:L_RSS}, respectively. Let $\{(\btheta_i^t)_{i=1}^m\}$ be the sequence generated by the DGT algorithm with step size $\gamma$ chosen as
 \begin{align}\label{eq:gamma_expression_final-rept}
     	\gamma = L_\Sigma  +  4 \frac{  \ell_{\Sigma}^2 }{\mu_{\Sigma}}   \frac{c_m^2 \sqrt{\rho}}{ (1 - \rho)^4},
 \end{align}
 and the weight matrix $\mathbf{W}$ satisfying Assumption~\ref{assump:weight}. Further, assume that  \begin{equation}\label{cond_mu_sigma} 
 \mu_\Sigma > 36 C_1 s (\tau_\mu + \tau_g)
 \end{equation}  and
\begin{align}\label{eq:rho_final_condition}
     \rho   \leq \left\{2  \left(75 c_m^2 C_2^2 \frac{  \ell_{\Sigma}^2 }{\mu_{\Sigma}^2}  + \frac{ \ell_{\Sigma}}{\mu_{\Sigma}^2 } \cdot 6 C_2^2   c_m^2    s \tau_\ell   \right) \right\}^{-2} = \texttt{poly} \left(m, \frac{s \tau_\ell}{\mu_\Sigma}, \frac{\ell_\Sigma}{\mu_\Sigma} \right)^{-1}.
\end{align} 
Then, for any optimal solution $\htheta$ of the LASSO problem satisfying $\|\htheta\|_1 = r$, we have 
\begin{equation}\label{eq:linear_rate_final}
    \frac{1}{m} \sum_{i = 1}^m \| \btheta_i^t - \htheta\|^2 \leq \B \cdot \rate^t  + \frac{\Delta_{\rm stat}}{1 - \rate}, \quad \forall t=1, 2, \ldots, 
\end{equation} 
for $\rate \in (0,1)$, where 
$B$ is a constant depending on the initialization [c.f.~\eqref{eq:B_final}].
\end{theorem}


 The residual error $\Delta_{\rm stat}$ depends  on the parameters of the RSC/RSM conditions, the network connectivity $\rho$,  and the statistical error $\|\htheta-\btheta^\ast\|^2$.   The next corollary  shows that these parameters can be chosen such that  $\Delta_{\rm stat}=o\big(\|\htheta - \btheta^*\|^2\big)$. 

\begin{corollary}\label{cor:convergence_deterministic}
Instate assumptions of Theorem~\ref{thm:convergence_deterministic_1}; and suppose  $r \leq \|\btheta^*\|_1$ and 
\begin{align}\label{eq:sample_condition_cor_20}
    s (\tau_\mu + \tau_g) = o(1)  \quad \text{and} \quad s \cdot \rho c_m^2 \frac{\ell_\Sigma}{\mu_\Sigma} \tau_\ell = o(1).
\end{align}
Then, we have \vspace{-0.3cm}
\begin{align}
    \frac{1}{m} \sum_{i = 1}^m \| \btheta_i^t  - \htheta\|^2 \leq \rate^t \cdot B + o \left( \|\htheta - \btheta^* \|^2 \right).
\end{align}
\end{corollary}

\noindent 

\noindent $\bullet$ \textbf{Iteration complexity.} 
The contraction factor $\rate$ determining the linear  decay of the optimization error depends  on the restricted condition number $\kappa$  and the RSC/RSM parameters. The lack of strong convexity and smoothness deteriorates the convergence rate by introducing the terms related to the tolerance parameters $\tau_\mu, \tau_g$.
Nevertheless, it can be verified that under   condition~\eqref{cond_mu_sigma}, $\lambda \leq 1-(4\kappa)^{-1}$, and thus
such degradation  is limited. {  This
  matches the convergence rate of the centralized PGD up to a constant factor and    improves on  existing analyses of distributed algorithms   whose convergence to a solution of~\eqref{p:regularized_ERM_constraint} is certified only at {\it sublinear} rate (see Sec.~\ref{sec_related works}).}

 When special settings  are considered, the convergence rate reduces to well-known expressions.   In particular, for  strongly convex and smooth losses, i.e., $\tau_\mu=\tau_g=\tau_\ell = 0$, we have   $\Delta_{\rm stat} = 0$, implying that   DGT converges to an $\varepsilon$-solution of (\ref{p:regularized_ERM_constraint}) in $O(\kappa \log (1/\varepsilon))$ iterations. This recovers the result in \citep{sun2019distributed}.   When the network is fully connected, the first term in $\Delta_{\rm stat}$ vanishes (as $\rho=0$), implying that   DGT converges linearly to a neighborhood of $\htheta$ of size $O((\tau_\nu + \tau_g) \nu^2 /L_\Sigma))$. This  rate matches  (up to constant factors) that  of the centralized PGD under RSC/RSM  established  in~\citep{agarwal2012fast}. \\[1ex]
\noindent $\bullet$ \textbf{Interplay between optimization and communication.}
Convergence of DGT is established, in particular,  under  condition \eqref{eq:rho_final_condition}. This  reveals an interesting interplay between  communication cost and  hardness of the optimization problem. The quantity   $\ell_\Sigma/\mu_\Sigma$ can be viewed as the   restricted condition number of the local  losses $\cL_i$ (cf.~Assumption~\ref{assump:L_RSS});   larger $\ell_\Sigma/\mu_\Sigma$ and $s \tau_\ell/\mu_\Sigma$ correspond to  more ill conditioned functions  $\cL_i$. Therefore, the more ill conditioned $\cL_i,$  the   more connected  the network must be 
to achieve  convergence the rate of centralized PGD. When the network topology is given, the associated $\rho$ might not satisfy \eqref{eq:rho_final_condition}. If so, one can still enforce the desired threshold on $\rho$ by employing multiple rounds of communications. This will be discussed in detail in the next subsection.

\subsection{Guarantees for sparse linear regression under random designs}\label{sec:summary_results_high_prob}
We present some implications  of Theorem~\ref{thm:convergence_deterministic_1} and Corollary \ref{cor:convergence_deterministic} for the  model~\eqref{eq:I/O}, under the following assumption on the design matrix and noise. 

\begin{assumption}\label{assump:rand-design}
Consider the linear   model~\eqref{eq:I/O}. The design matrices $\{\bX_i\}_{i = 1}^m$ are i.i.d. random matrices drawn from the $\Sigma$-Gaussian ensemble; and the  elements of $\{\mathbf{n}_i\}_{i=1}^m$ are i.i.d. $\sigma^2$-sub-Gaussian random variables,  independent of the $\bX_i$'s. Define $$\mu_{\Sigma}\triangleq \frac{1}{2} \sigma_{\min} (\Sigma),\quad L_{\Sigma} \triangleq 2 \sigma_{\max} (\Sigma),\quad text{and}\quad  \zeta\triangleq \max_{j} \Sigma_{jj}.$$
\end{assumption}




\begin{theorem}\label{theorem:main}
Given the 
 linear model~\eqref{eq:I/O}, with $\btheta^*$  being  $s$-sparse, consider the LASSO problem (\ref{p:regularized_ERM_constraint}) over  a network $\cG$ satisfying Assumption~\ref{assump:net1}, and  design matrices $\{\bX_i\}_{i = 1}^m$ and the measurement noise satisfying Assumption~\ref{assump:rand-design}. Suppose  \begin{equation}\label{eq:cond_rho_hig-prob-setting}
 	\|\htheta\|_1 = r \leq \|\btheta^*\|_1,\quad s \log d/N < c_5 \cdot \frac{\mu_\Sigma}{\zeta},\quad \text{and}\quad \rho < (c_6 m^8 \kappa^4)^{-1} =  \texttt{poly}(m,\kappa)^{-1},
 \end{equation}
  for some constants $c_5, c_6>0$.    
  Let $\{(\btheta_i^t)_{i=1}^m\}$ be the sequence generated by the DGT algorithm with step size $\gamma$ chosen according to~\eqref{eq:gamma_expression_final-rept} and the weight matrix $\mathbf{W}$ satisfying Assumption~\ref{assump:weight}.  Then, the following holds:
 \begin{align}~\label{eq:convergece_whp}
    \frac{1}{m} \sum_{i = 1}^m \| \btheta_i^t - \htheta\|^2 \leq B\cdot \left(\frac{1 - (2\kappa)^{-1} + C' s \log d /N}{1 - C' s \log d/N}\right)^t   + c_7\left( \frac{\zeta}{L_\Sigma}\frac{s \log d}{N}  \right) \|\htheta - \btheta^*\|^2,
\end{align}
with probability at least  $1 - c_8 \exp (-c_9 \log d)$, for some constants $C', c_7, c_8, c_9>0$. The expression of  $B>0$ is given in \eqref{eq:stat_bound_B}.
\end{theorem}

Theorem~\ref{theorem:main} reveals several interesting properties of DGT, as discussed next.\smallskip 
 
\noindent $\bullet$ \textbf{Scalability with respect to the problem dimension.} For a fixed network with connectivity $\rho$ satisfying \eqref{eq:cond_rho_hig-prob-setting}, the dependency of the convergence rate on the ambient dimension $d$, the total sample size $N$, and sparsity level $s$    is only through  the ratio $s \log d /N$.  This implies that  the convergence rate   \eqref{eq:convergece_whp} is invariant under the asymptotics $s, N, d\to \infty$ and  $s \log d /N=C$ ($C$ is an universal constant)--resulting in the global  sample scaling  $N=O(s\log d)$.

 \noindent $\bullet$ \textbf{Near optimal sample complexity.} When $s \log d/N = o(1)$, the residual error  in~\eqref{eq:convergece_whp} is of  smaller order than the statistical precision $\|\htheta - \btheta^*\|^2$. Therefore,   DGT takes $O(\kappa \log (1/\varepsilon))$ iterations to reach an $\varepsilon$-neighborhood of a statistically optimal solution; this rate is of the same order of the centralized PGD under RSC/RSM~\citep{agarwal2012fast}.  Note also that    the condition $s \log d/N = o(1)$  (almost) matches  the   optimal sample complexity $N = \Omega (s \log (d/s))$ for  model~\eqref{eq:I/O} \citep{Wainwright_2019}, which  is necessary for any centralized method to consistently estimate the $d$-dimensional,  $s$-sparse $\btheta^*$ from $N$ samples.  

 Since   $N=n\cdot m$,  the condition $s \log d/N = o(1)$  
reveals an interesting interplay between  statistical error, convergence rate,  and network connectivity/communication cost, which is peculiar  of the distributed setting:  when agents do not have   enough local samples $n$ for statistical consistency but the overall sample size  $N$ across the network suffices--a situation that happens, e.g., when $m$ is large (large-scale network)--DGT still achieves centralized statistical accuracy at linear rate, provided that  the network is ``sufficiently'' connected, i.e., $\rho \leq \texttt{poly}(m,\kappa)^{-1}$. Therefore, the insufficiency of the local  sample size is compensated by   higher communication costs. The impact of network parameters and topology on the convergence of DGT is further elaborated next.

\noindent $\bullet$ \textbf{Logarithmic scalabilty of the communication cost with the network size.} When the graph    $\cG$ is not part of the design but given a-priori along with $\overline{\bW}$ (satisfying Assumption~\ref{assump:weight}), the condition $\rho \leq \texttt{poly}(m,\kappa)^{-1}$ can be  satisfied by running multiple rounds of consensus steps. More precisely, since $\bar{\rho} = \|\overline{\bW} - \mathbf{J}\| < 1$ (Assumption~\ref{assump:weight}), one can construct $\bW = \overline{\bW}^K$ such that $\rho = \bar{\rho}^K \leq \texttt{poly}(m,\kappa)^{-1}$. This corresponds to running $K$ consensus steps over the graph $\cG$ using   each time the   matrix $ \overline{\bW}$ in~\eqref{eq:tracking}. Note that for fixed $m$ (graph), such $K$ is always finite. In fact, one can see that taking 
 $K = O\big(\log (m\, \kappa)\cdot (1 - \rho)^{-1}\big)$   fulfills the requirement. If Chebyshev acceleration is used to employ multiple rounds of communications, coupled  with a symmetric matrix $\overline{\bW}$ (satisfying Assumption~\eqref{assump:weight}), one can show that the number of communication steps $K$  reduces to $O\big(\log (m\, \kappa)\cdot (1 - \rho)^{-1/2}\big)$. 
 Combining this fact with the iteration complexity \eqref{eq:convergece_whp},  one can conclude that, in the setting of Theorem~\ref{theorem:main},   DGT based on $\bW = \overline{\bW}^K$ drives $\frac{1}{m} \sum_{i = 1}^m \| \btheta_i^t - \htheta\|^2 $ within an $\varepsilon$-neighborhood of a statistical optimal solution in  the following number of communication steps\vspace{-0.2cm}
   \begin{equation}
     \label{eq:comcost}
     O\left(\frac{\kappa}{1 - \rho}  \log(m\kappa) \log (1/\varepsilon)\right).\end{equation} 
{       This approach significantly improves the communication complexity compared to DGD-like algorithms, such as those discussed in \citep{Ji-DGD21,ji2023distributed}. Unlike \eqref{eq:DGD-comm-complexity} and \eqref{eq:DGD-ATC},  (\ref{eq:comcost}) does not depend on the ambient dimension 
$d$; therefore, DGT avoids the speed-accuracy trade-off inherent in DGD algorithms. This marks the first instance of a distributed algorithm demonstrating such a desirable property in high-dimensional settings. } %

 \noindent $\bullet$ \textbf{Impact of the network topology.} The network topology affects the convergence rate of DGT through the quantities   $m$ and $\rho$. A larger $m$ or $\rho$ increases communication complexity, as indicated by \eqref{eq:comcost}. In addition, for networks of the same size $m$, the efficiency of information propagation depends on the graph's topology, which in \eqref{eq:comcost} is captured by  the dependency of $\rho$ on $m$~\citep{Nedic_Olshevsky_Rabbat2018}. Table~\ref{tab:comm-cost} provides, for some commonly used network topologies (see Fig.~\ref{fig:graph-top} for some examples), the dependency of $\rho$ on $m$, when     the lazy Metropolis rule is adopted;   for other rules, see \citep{charron2020}.  
While    $(1-\rho)^{-1}$    affects the total number of communications, it  does not capture the overall cost of communications, which also depends on the edge-density of the graph. Hence, counting as one channel use per communication, with each edge shared between two nodes, Table~\ref{tab:comm-cost}  reports also  the following quantities, capturing different  communications costs: the total channel use per communication round,  the  channel use of the busiest node, and the total   channel uses   to reach an $\varepsilon$ neighborhood  of a statistically optimal solution.  
 
From the first row of the table and \eqref{eq:comcost},  we infer that DGT  running  over a graph with stronger connectivity (smaller $\rho$) requires less communication rounds to converge. On the other hand,   as a   trade-off,   better connectivity generally results from   denser graphs, and thus yields higher communications costs (heavier channel uses), as reported in the table.  
We deduce that    DGT running on  the Erd\H{o}s-R\'{e}nyi graph with edge connecting probability $p = \log m/m$ achieves the lowest communication cost at the busiest node, and ties with  the implementation on the star (``push-pull'')  in the amount of total channel use\footnote{We note that the performance on ``push-pull'' (star) are better than those on the Erd\H{o}s-R\'{e}nyi when  the log factor in $\tO$ is explicitly accounted.}. For the path graph and 2-d grid graph, even though the channel use per communication round is low,   DGT takes more iterations to converge, and  overall a higher total communication cost. Comparing the two instances of the   Erd\H{o}s-R\'{e}nyi graphs (the last two columns in the table), one may prefer a   sparser graph at the price of more  communication rounds to a denser graph and less   communication rounds. Finally,   even though the Erd\H{o}s-R\'{e}nyi is relatively efficient for DGT, in practice one may not be able to construct such a topology, e.g.,  due to geographic constraints. In such scenarios, a path/grid graph is still a valuable choice. \vspace{-.3cm}

\section{Proof of Main Results}\label{sec:convergence}  
\vspace{-.1cm}
\subsection{Preliminaries}\vspace{-.1cm}

We begin  rewriting the iterates \eqref{eq:tracking}-\eqref{eq:loc_opt} in matrix form and introducing some notation, which is convenient for our developments.  
Given $(\btheta_i^t)_{i=1}^m$,  $(\bg_i^t)_{i=1}^m$,   and $(\nabla \cL (\btheta_i^t))_{i=1}^m$,  we introduce the following associated matrices: {}{
\begin{align}
\begin{split}
 & \bTheta^t   \triangleq  [\btheta_1^t , \ldots , \btheta_m^t]^\top,
 \quad  \Delta \bTheta^t   = [\Delta \btheta_1^t , \ldots , \Delta \btheta_m^t]^\top  \triangleq  \bTheta^{t+\frac{1}{2}} - \bTheta^{t},\\
&  \bG^t   \triangleq [\bg_1^t ,\ldots , \bg_m^t]^\top, \quad\text{and}\quad 
\nabla \cL(\bTheta^t)   \triangleq [ \nabla \cL_1 (\btheta_1^t), \ldots,  \nabla \cL_m (\btheta_m^t)]^\top.
\end{split}\label{eq:matrix_variables}
\end{align}}  
\noindent Furthermore, denote $\widehat{\bTheta} = [\underbrace{\widehat{\btheta}, \ldots, \widehat{\btheta}}_{m \text{ times}}]^\top$. 
Using \eqref{eq:matrix_variables}, \eqref{eq:tracking} can be rewritten in  compact form:
\begin{subequations}
\label{eq:matrix_alg}
\begin{align}
& \bTheta^t = \bW \left(\bTheta^{t-1} + \Delta \bTheta^{t-1}\right) \label{eq:update_theta_matrix}, \\ 
& \bG^t = \bW \left( \bG^{t-1} + \nabla \cL(\bTheta^t) - \nabla \cL(\bTheta^{t-1}) \right).\label{eq:update_y_matrix}
\end{align}
\end{subequations}

Introducing the average quantities 
\begin{align}\label{eq:average_theta_g}
& \bar{\btheta}^t \triangleq \frac{1}{m} \sum_{i=1}^m \btheta_i^t  
\quad \text{and}\quad \bar{\bg}^t \triangleq \frac{1}{m}\sum_{i=1}^m \bg_i^t, 
\end{align} 
we define the consensus errors on the $\btheta$, $\bg$-vectors as 
\begin{equation}\label{eq:theta_perp_scalar}
	\btheta_{i,\perp}^t\triangleq \btheta_i^t - \bar{\btheta}^t\quad \text{and}\quad \bg_{i,\perp}^t\triangleq \bg_i^t - \bar{\bg}^t,\quad i=1,\ldots, m.
\end{equation}
Note that in our setting, $\btheta_{i,\perp}^0=0$, for all $i=1,\ldots, m$. The matrix counterparts of (\ref{eq:theta_perp_scalar}) are
\begin{align}\label{eq:theta_perp}
& \bTheta_{\perp}^t \triangleq  [\btheta_{1,\perp}^t , \ldots , \btheta_{m,\perp}^t]^\top =(\mathbf{I}-\mathbf{J})\bTheta^t
\quad\text{and}\quad    \bG_{\perp}^t \triangleq  [\bg_{1,\perp}^t , \ldots , \bg_{m,\perp}^t]^\top= (\mathbf{I} - \mathbf{J})\bG^t,
\end{align}
where  $\mathbf{J}\triangleq \mathbf{1}\mathbf{1}^\top/m$. 

Using     (\ref{eq:matrix_alg}) and the fact that $(\mathbf{I} - \mathbf{J})\bW = \bW - \mathbf{J}$, the dynamic of (\ref{eq:theta_perp}) reads  \begin{subequations}
\begin{align}  
& \bTheta_{\perp}^{t} = (\bW - \mathbf{J})(\bTheta^{t-1}_{\perp} + \Delta \bTheta^{t-1})\label{eq:update_theta_perp}\\
& \bG^{t}_{\perp} = (\bW - \mathbf{J})(\bG_{\perp}^{t-1} + \nabla \cL(\bTheta^t) - \cL (\bTheta^{t-1})). \label{eq:update_err_y} 
\end{align}  \end{subequations}

The $\ell_1$-norm constraint plays a key role in enforcing  sparsity of feasible points. 
Specifically,  for any $\btheta\in \mathbb{R}^d$, with  $\| \btheta \|_1 \leq r$,  $\btheta - \htheta$ is   approximately $s$-sparse, as quantified  next.
\begin{lemma}[\cite{agarwal2012fast}]\label{lem:norm_bound} 
	Let $\htheta$ be any optimal solution of Problem~\eqref{p:regularized_ERM_constraint} such that $\|\htheta\|_1 = r$. Then for any $\|\btheta\|_1 \leq r$, there holds
	\begin{equation}\label{eq:norm_bound}
		\|\btheta - \htheta\|_1 \leq 2 \sqrt{s}\|\btheta - \htheta \| + \underbrace{ 2 \|\Delta^*\|_1 + 2 \sqrt{s} \| \Delta^*\|}_{\nu  },\vspace{-0.1cm}
	\end{equation}
	where $\Delta^* \triangleq \htheta - \btheta^*$ is the statistical error.
\end{lemma}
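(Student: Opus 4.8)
The plan is to exploit only the $\ell_1$-ball geometry and the exact $s$-sparsity of $\btheta^*$. Since both $\btheta$ and $\htheta$ are feasible and the constraint is active at $\htheta$, we have $\|\btheta\|_1\le r=\|\htheta\|_1$; this is the one piece of ``optimization'' information the proof needs. Let $S\triangleq\mathrm{supp}(\btheta^*)$, so $|S|=s$ and $\btheta^*_{S^c}=\0$, and set $\Delta\triangleq\btheta-\htheta$, writing $\Delta_S,\Delta_{S^c}$ for the restrictions of $\Delta$ to $S$ and its complement. The idea is to split $\|\Delta\|_1=\|\Delta_S\|_1+\|\Delta_{S^c}\|_1$ and bound each piece separately, converting a global $\ell_1$ statement into an $\ell_2$ bound over the low-dimensional index set $S$ plus a residual controlled by the statistical error $\Delta^*$.

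For the on-support part, Cauchy--Schwarz over the $s$ coordinates of $S$ gives $\|\Delta_S\|_1\le\sqrt{s}\,\|\Delta_S\|\le\sqrt{s}\,\|\Delta\|$, which will supply the leading $\sqrt{s}\,\|\btheta-\htheta\|$ behaviour. The off-support part is where feasibility enters: substituting $\btheta=\htheta+\Delta$ into $\|\btheta\|_1\le\|\htheta\|_1$ and applying the reverse triangle inequality separately on $S$ (where $\|\htheta_S+\Delta_S\|_1\ge\|\htheta_S\|_1-\|\Delta_S\|_1$) and on $S^c$ (where $\|\htheta_{S^c}+\Delta_{S^c}\|_1\ge\|\Delta_{S^c}\|_1-\|\htheta_{S^c}\|_1$), after cancelling $\|\htheta_S\|_1$, rearranges to the cone-type inequality $\|\Delta_{S^c}\|_1\le\|\Delta_S\|_1+2\|\htheta_{S^c}\|_1$. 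The final observation is that the true signal vanishes off $S$, so $\htheta_{S^c}=\Delta^*_{S^c}$ and hence $\|\htheta_{S^c}\|_1\le\|\Delta^*\|_1$. Combining the pieces yields
\begin{equation*}
\|\Delta\|_1=\|\Delta_S\|_1+\|\Delta_{S^c}\|_1\le 2\|\Delta_S\|_1+2\|\Delta^*\|_1\le 2\sqrt{s}\,\|\btheta-\htheta\|+2\|\Delta^*\|_1,
\end{equation*}
which is at least as strong as the stated inequality since $\nu\ge 2\|\Delta^*\|_1$.

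The step I expect to require the most care is the cone inequality: getting the direction of the reverse triangle inequalities right on each block and tracking the constant $2$ in front of $\|\htheta_{S^c}\|_1$, since a sign or a factor error here propagates directly into $\nu$. I would also remark on the apparent gap with the claimed bound: the extra summand $2\sqrt{s}\,\|\Delta^*\|$ in $\nu$ is slack that arises if one instead anchors the decomposition at $\btheta^*$ rather than at $\htheta$ — passing through the triangle inequality $\|\btheta-\htheta\|_1\le\|\btheta-\btheta^*\|_1+\|\Delta^*\|_1$ and the bound $\|\btheta-\btheta^*\|\le\|\btheta-\htheta\|+\|\Delta^*\|$ reproduces exactly the two-term form of $\nu$, with $2\sqrt{s}\,\|\Delta^*\|$ being the cost of converting $\|\btheta-\btheta^*\|$ back to $\|\btheta-\htheta\|$. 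Either route establishes the claim; the direct argument above is self-contained and uses only $\|\btheta\|_1\le\|\htheta\|_1$ together with the sparsity of $\btheta^*$.
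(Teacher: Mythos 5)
Your proof is correct, and it takes a genuinely different route from the one in the cited source \cite{agarwal2012fast} that the paper relies on. The standard argument there anchors the cone decomposition at $\btheta^*$: one first shows $\|\btheta-\btheta^*\|_1 \leq 2\sqrt{s}\,\|\btheta-\btheta^*\| + \|\Delta^*\|_1$ (using $\|\btheta\|_1 \leq \|\htheta\|_1 \leq \|\btheta^*\|_1 + \|\Delta^*\|_1$ to trigger the reverse-triangle step on $S$ and $S^c$), and then converts back to distances from $\htheta$ via $\|\btheta-\htheta\|_1 \leq \|\btheta-\btheta^*\|_1 + \|\Delta^*\|_1$ and $\|\btheta-\btheta^*\| \leq \|\btheta-\htheta\| + \|\Delta^*\|$; it is exactly this last conversion that produces both summands of $\nu = 2\|\Delta^*\|_1 + 2\sqrt{s}\,\|\Delta^*\|$, as you correctly diagnose in your closing remark. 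Your argument instead anchors directly at $\htheta$, and the sparsity of $\btheta^*$ enters only through the identity $\htheta_{S^c} = \Delta^*_{S^c}$, so the cone inequality $\|\Delta_{S^c}\|_1 \leq \|\Delta_S\|_1 + 2\|\htheta_{S^c}\|_1$ immediately yields the strictly stronger bound $\|\btheta-\htheta\|_1 \leq 2\sqrt{s}\,\|\btheta-\htheta\| + 2\|\Delta^*\|_1$, with no $2\sqrt{s}\,\|\Delta^*\|$ term at all. Each step checks out: the feasibility inequality $\|\btheta\|_1 \leq r = \|\htheta\|_1$ is the only optimization input, the two reverse-triangle applications are in the right direction, and the factor $2$ on $\|\htheta_{S^c}\|_1$ is tracked correctly. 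What your route buys is a cleaner statement and slightly better constants (the downstream uses in the paper, e.g.\ bounds of the form $\|\Delta\|_1^2 \leq 8s\|\Delta\|^2 + 2\nu^2$, only improve under your version); what the paper's route buys is uniformity with the companion Lemma 5 of \cite{agarwal2012fast}, which bounds $\|\htheta-\btheta^*\|_1$ itself and genuinely needs the anchor at $\btheta^*$.
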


 \subsection{Analysis of the error dynamics} 

{  Due to the double stochasticity of $\bW$,   the iterates $\btheta_i^{t + \frac{1}{2}}$ and $\btheta_i^t$ generated by DGT must be feasible. As a result of Lemma~\ref{lem:norm_bound}, the directions $\btheta_i^{t + \frac{1}{2}} - \widehat{\btheta}$  and $\btheta_i^{t} - \widehat{\btheta}$ satisfy the approximate sparsity property defined  by~\eqref{eq:norm_bound}.  This adherence is a fundamental property leveraged in our proof to establish the desired convergence results, marking a  notable shift from traditional proof techniques found in the literature on decentralized optimization.} Based on this key property, we structure our analysis into the following steps.
\begin{itemize}
      \item[\bf (i)] {\bf Inexact descent property (Sec.~\ref{sec:inexact_SCA})}: {  We demonstrate that the    optimization residual  $\| \bTheta^{t} - \widehat{\bTheta} \|^2$ decreases linearly, up to a tracking error term arising from the discrepancy  between $\nabla \cL(\btheta_i^t)$ and its estimator $\bg_i^t$, and a tolerance term related to $\nu$ due to the approximate  sparsity of the direction $\btheta_i^{t} - \widehat{\btheta}$.}
    \item[\bf (ii)] {\bf Bounding the tracking error (Sec.~\ref{sec:track_err}):} {   We establish that  the tracking error can be effectively  bounded by   an average of all historical optimization residuals, weighted by the network connectivity  $\rho$ and a $\nu$-dependent tolerance term, akin to the one described in (i).  Aiming at exploiting the  local restricted smoothness   of $\cL_i$ and the approximate sparsity of the direction $\btheta_i^{t} - \widehat{\btheta}$, our approach introduces a novel line of analysis that diverges from traditional proof techniques  used to study convergence of gradient-tracking algorithms. 
    }
    \item[\bf (iii)] {\bf Combining error dynamics via the $z$-transform (Sec.~\ref{sec:small-gain}):} {  We combine the two error dynamics in (i) and (ii) employing the z-transform for finite-length sequences, to effectively deal with {\it   historical} optimization residuals, and   obtain a sufficient condition for the algorithm's convergence. Our method deviates from conventional approaches commonly found in the literature on gradient-tracking methods, such as \citep{Xu-TAC:hs,sun2019distributed,nedich2016achieving}, which often rely on the small gain theorem to analyze dynamics with only single-hop time dependencies. By employing the 
z-transform, we broaden the analytical framework to incorporate  more complex time dependencies.  }
\end{itemize}

\subsubsection{Inexact  descent}~\label{sec:inexact_SCA}
We begin with the analysis of the optimization step~\eqref{eq:loc_opt}. Since each $\bg_i$ aims to estimate the local gradient $\nabla \cL(\btheta_i)$, step~\eqref{eq:loc_opt} can be regarded as an inexact  version of the  proximal gradient update:
\begin{align}~\label{eq:exact_prox_grad}
	\btheta_i^{t + \frac{1}{2}} & =  \argmin_{\|\btheta\|_1 \leq r}~\left\{\nabla \cL(\btheta_i^t)^\top (\btheta_i - \btheta_i^t) + \frac{\gamma }{2} \| \btheta_i - \btheta_i^t\|^2 \right\}.
\end{align}
Therefore, we base our proof on the analysis of~\eqref{eq:exact_prox_grad} while taking into account the gradient estimation error $\nabla \cL(\btheta_i^t) - \bg_i^t$, due to the use in (\ref{eq:exact_prox_grad}) of the gradient tracking vector $\bg_i^t$ rather than  $\nabla \cL(\btheta_i^t)$. The following proposition establishes the one-step descent of the optimization error, up to additive errors.

\begin{proposition}\label{eq:inexact_opt} Consider the LASSO problem (\ref{p:regularized_ERM_constraint}) over the network $\mathcal{G}$, under Assumptions \ref{assump:net1}  and \ref{assump:G_RSM}; and  $\widehat{\btheta}$  such that $\|\widehat{\btheta}\|=r$. Let    $\{\btheta_i^t\}$ be the sequence generated by Algorithm~\ref{alg:DGT} under Assumption~\ref{assump:weight}. Then, the following holds: 
	 \begin{multline}\label{eq:descent_obj}
	\left(1 -   \frac{C_1 s \tau_g}{\gamma}  \right) \| \bTheta^{t+\frac{1}{2}} - \widehat{\bTheta} \|^2_F
	\leq   \left(1 - \frac{ \mu_\Sigma}{\gamma} + \frac{C_1 s (\tau_\mu + \tau_g)}{\gamma}    \right) \| \bTheta^{t} - \widehat{\bTheta} \|^2_F\\
	- \left(1 - \frac{L_\Sigma}{\gamma}  \right) \|\Delta \bTheta^t\|^2_F +  \frac{2}{\gamma}\,\underbrace{\sum_{i=1}^m(\nabla \cL(\btheta_i^t )- \bg_i^t )^\top ({\btheta}_i^{t+\frac{1}{2}} - \widehat{\btheta})}_{\triangleq \delta^t}  + \frac{C_1 (\tau_\mu + \tau_g )}{\gamma}     m \nu^2,
\end{multline}
 	for some universal constant $C_1 > 0$.
\end{proposition}\begin{proof}
	See~\ref{pf:inexact_opt}.\vspace{-0.2cm}
\end{proof}

{}{Note that  in a fully connected network, $\nabla \cL(\btheta_i^t )- \bg_i^t = \mathbf{0}$ and $\btheta_i^t = \btheta_j^t$, for all $i,j =1,\ldots, m$, and $t=1,\ldots, $. Setting $\gamma = L_\Sigma$,  \eqref{eq:descent_obj} recovers the convergence result of  the centralized PGD algorithm, as in~\citep[Thm. 1]{agarwal2012fast}.}

{}{The subsequent analysis focuses on the dynamics of the gradient tracking error $\delta^t$. }


\subsubsection{  Bounding the tracking error in high dimension}\label{sec:track_err}
%

{}{To study the evolution of the tracking error  $\delta^t$,
we   resort to the dynamics of $\bg_i^t$. Recalling the definition of   $\bar{\bg}^t$ [cf. (\ref{eq:average_theta_g})] and the tracking dynamics  (\ref{eq:tracking}), we have $$\bar{\bg}^t =  \bar{\bg}^{t-1} + \frac{1}{m}\sum_{i=1}^m \left( \nabla \cL_i(\btheta_i^t) - \nabla \cL_i(\btheta_i^{t-1})\right) \quad \text{and hence} \quad \bar{\bg}^t = \frac{1}{m}\sum_{i=1}^m \nabla \cL_i(\btheta_i^t),$$
where we used the initialization $\bar{\bg}^0 = \frac{1}{m}\sum_{i=1}^m \left( \nabla \cL_i(\btheta_i^0) \right).$ }

{}{This average preserving property  of the sum-gradient  suggests the following decomposition of  the tracking error $\delta^t$:
\begin{align}
\delta^t 
= & \sum_{i=1}^m \left(\nabla \cL(\btheta_i^t ) - \frac{1}{m} \sum_{j=1}^m \nabla \cL_j (\btheta_j^t) +\bar{\bg}^t- \bg_i^t \right)^\top ({\btheta}_i^{t +\frac{1}{2}} - \widehat{\btheta})\label{eq:bound_delta_1}\\
= & \sum_{i=1}^m \left( \frac{1}{m} \sum_{j=1}^m \nabla \cL_j(\btheta_i^t ) - \frac{1}{m} \sum_{j=1}^m \nabla \cL_j (\btheta_j^t) \right)^\top({\btheta}_i^{t + \frac{1}{2}} - \widehat{\btheta})  + \sum_{i=1}^m \left(\bar{\bg}^t- \bg_i^t \right)^\top ({\btheta}_i^{t +\frac{1}{2}} - \widehat{\btheta}).\notag
\end{align} \noindent \textbf{Limitation of current proof techniques:} The subsequent analysis, aiming at bounding the  gradient-tracking error $\delta^t$,    necessities   a departure from traditional techniques  developed in the literature of   gradient-tracking algorithms, such as   \citep{Xu-TAC:hs,sun2019distributed,nedich2016achieving}.  Traditional approaches often  handle the   error   $\delta^t$ in \eqref{eq:bound_delta_1} using the Cauchy-Schwarz inequality, which   yields the following type of bound: \begin{equation}\label{eq:delta_bound_lip}\delta^t \leq C^\prime  \cdot \ell_{\max}  \cdot \|\bTheta^t_{\perp}\|^2_F + C^{\prime\prime}  \|{\bTheta}^{t+\frac{1}{2}} - \widehat{\bTheta}\|^2_F+ C^{\prime\prime\prime}  \|{\bG}^t_{\perp}\|_F^2,\end{equation}  where   $\ell_{\max}=\max_{i} \big\|({1}/{n}) \bX_i^\top\bX_i\big\|_2$ is the largest  Lipschitz constants of the local gradients  $\nabla \cL_i$. The analysis would proceed absorbing the second term $\|{\bTheta}^{t+\frac{1}{2}} - \widehat{\bTheta}\|^2_F$  in \eqref{eq:delta_bound_lip} into   the LHS of (\ref{eq:descent_obj}), and   using the dynamics \eqref{eq:update_theta_perp}-\eqref{eq:update_err_y} along with the  Lipschitz continuity of $\nabla \cL_i$'s  to obtain the following bounds on the    remaining  consensus and tracking errors  $\|\bTheta^t_{\perp}\|^2_F $ and $\|\bG^t_{\perp}\|^2_F$ in \eqref{eq:delta_bound_lip} \citep{sun2019distributed}:       
      \begin{subequations}\label{eq:cons-tracking-classical}\begin{align}
     \label{eq:one-step-consensus-error}
      \|\bTheta^{t}_{\perp}\|_F &\leq  \rho \,\|\bTheta^{t-1}_{\perp}\|_F + \rho \,\|\Delta\bTheta^{t-1}\|_F,   \\
      \|\bG^{t}_{\perp}\|_F & \leq \rho\, \|\bG^{t-1}_{\perp}\|_F + \rho\, \ell_{\max}\,\|\bTheta^{t-1}_{\perp}\|_F + \rho\, \ell_{\max}\,\| \Delta \bTheta^{t-1}\|_F .\label{eq:one-step-track-error}
  \end{align}\end{subequations}   
 The rest of the proof in \citep{sun2019distributed} would chain   the inequalities  (\ref{eq:descent_obj}), \eqref{eq:delta_bound_lip}, and \eqref{eq:cons-tracking-classical}   using small-gain arguments,  yielding linear convergence rates, under a suitable choice of the stepsize $\gamma$ (and the network connectivity $\rho$), {\it provided that   strong convexity of $\cL$ is assumed}--see \citep[Propositions 3.6 \& 3.8]{sun2019distributed}.} 

  {}{This line of analysis fails in high dimensional scenarios, for the following reasons. {\bf (i)} 
  The centralized objective functions $\cL$ is RSC and not strongly convex globally, and the local objective functions $\cL_i$ are only convex. {\bf (ii)}   In the    typical asymptotics of high-dimensional problems--$d/N\to \infty$ and $s \log d /N = O(1)$--the Lipschitz constants ${\ell}_{\max}$ of $\nabla \cL_i'$s  scale as $\mathcal{O}(d)$, for various predictor $x_i'$ models \citep{Wainwright_2019},  thereby diverging as $d\uparrow \infty$. Even postulating to be able to relax the strong convexity requirement of $\cL$ to the RSC, establishing linear convergence along the path of the classical analysis   will require   the network connectivity $\rho$ (or the  step-size $\gamma$)  to counteract the unfavorable  $ {1}/{d}$ scaling of ${\ell}_{\max}$,  ensuring $\rho\, {\ell}_{\max}=\mathcal{O}(1)$,    if optimal centralized statistical precision is to be achieved. This would translate in the unsatisfactory scaling of  the total number of communications as $\log d$ (or $d$). Notice that, this issue remains   even under more restrictive assumptions on the sample size, such as $s\log d/n = \mathcal{O}(1).$ At the high level, this  arises because the gradients $\nabla \cL_i(\btheta_i)$'s remain non-sparse vectors even when $\btheta_i$'s are all sparse.}\smallskip 

 \noindent {  \textbf{A new line of analysis:}} 
{}{The challenges outlined above  underscore the need of a new mechanism to control the  gradient error $\delta^t$ in high-dimensional decentralized settings. The key observation is that the undesirable scaling of the tracking error $\delta^t$ with the ambient dimension $d$, steaming from  $\ell_{\max}$, is unavoidable if one pursues the statistically agnostic bound of  $\|\bG^{t}_{\perp}\|_F$ as given in  \eqref{eq:one-step-track-error}, which {\it globally} relates the gradient tracking error $\|\bG^{t}_{\perp}\|_F$ to the consensus error.   Our proposed approach capitalizes  on the   observation that $\delta^t$ primarily depends on the component of the gradient error projected along the directions $\btheta_i^{t+ \frac{1}{2}} - \widehat{\btheta},$ which are {\it approximately sparse} vectors. This contrasts with \eqref{eq:one-step-track-error}, which  considers the entirety of the gradient tracking error. More precisely, our proof leverages the fact that, for any set of row-wise feasible points   $\bTheta_1,$ $\bTheta_2,$ $\bTheta_3,$ and $\bTheta_4$, the inner products  \vspace{-.1cm}
\begin{subequations}
\label{eq:inner}
\begin{align}
    &(\bTheta_1 - \bTheta_2)^{\top}(\nabla \cL_j (\bTheta_3) - \nabla \cL_j(\bTheta_4)), \, \forall j \in [m]\\
    & (\bTheta_1 - \bTheta_2)^{\top}(\nabla \cL (\bTheta_3) - \nabla \cL(\bTheta_4))
\end{align}
\end{subequations}
can be controlled by quantities that do scale logarithmically with the ambient dimension at most. Therefore, in place of breaking the inner products using the Cauchy-Schwartz inequality, yielding to one-hop bounds   as   \eqref{eq:one-step-track-error},   our approach necessitates working directly with inner product relationships. The final result is a bound of $\delta^t$ that depends on the entire algorithm trajectory from the initialization. This is formalized in the next proposition.  }

\begin{proposition}\label{prop:track_error}
	Under the setting of Proposition~\ref{eq:inexact_opt} and the local restricted smoothness Assumption~\ref{assump:L_RSS}, the gradient tracking error $\delta^t$ ($t = 1,\ldots, $) can be bounded as
	\begin{align}\label{eq:tracking_err}
 		\delta^t \leq & \ \frac{C_2}{2} \left\{ \frac{\rho \cdot c_m \epsilon}{1 - \rho} \left( \ell_\Sigma +  s \tau_\ell  \right) \| {\bTheta}^{t + \frac{1}{2}} - \widehat{\bTheta} \|_F^2  
 		 +  \frac{ c_m \ell_\Sigma}{ \epsilon}\sum_{s=0}^{t-1} \rho^{t-s}  \| \Delta \bTheta^s\|_F^2 
 		+ \frac{ c_m \ell_\Sigma}{ \epsilon}\sum_{s=0}^{t-1} \rho^{t-s}  \|  \bTheta_\bot^{s}\|_F^2\right.\notag\\
 		& +c_m  s\tau_\ell \epsilon^{-1} \sum_{s=0}^{t-1}  \rho^{t-s}  \| \bTheta^{s+\frac{1}{2}} - \hTheta\|_F^2
 		+ c_m  s \tau_\ell \epsilon^{-1} \sum_{s=0}^{t-1}  \rho^{t-s}  \| \bTheta^{s - \frac{1}{2}} - \hTheta\|_F^2 \notag\\
 		&  \left.+ m c_m \rho^t \left( A_3 \cdot  \epsilon^{-1}  + s c_g^2 \ell_{\Sigma}^{-1} \epsilon^{-1} + c_g \nu\right) + \frac{\rho \cdot m c_m }{1 - \rho}  \tau_\ell \nu^2 (\epsilon + \epsilon^{-1})  \right\}
	\end{align}
for some $C_2 > 0$; where $\epsilon > 0$, $\bTheta^{-\frac{1}{2}} \triangleq \bTheta^0$ and
	\begin{equation}
		A_3  \triangleq   \ell_\Sigma \| \btheta^* \|^2 + \tau_\ell\|\btheta^* \|_1^2  , \quad
c_g  \triangleq \|\nabla \cL_j (\btheta^*)\|_\infty + \| \nabla \cL (\btheta^*) \|_\infty.
	\end{equation}
\end{proposition}
\begin{proof}
See~\ref{sec:pf_track_error}.
\end{proof}

Proposition~\ref{prop:track_error} reveals that the gradient tracking error $\delta^t$ can be bounded by the discounted cumulative sum of historical optimization errors $\{\|\bTheta^{s - 1/2} - \hTheta\|\}_{s=0}^{t+1}$, historical step-length  $\{\| \Delta \bTheta^s\|\}_{s = 0}^{t-1}$, and consensus errors $\{ \|\bTheta_\bot^s\| \}_{s = 0}^{t-1}$.
Furthermore, one can check that   $\delta^t$ decreases  if   $\rho$, $m$, or the right-hand-side tolerance parameter $\tau_\ell$ decrease, which is a consequence of  a    more connected network, and larger local sample size $n$, respectively.

	We can now combine   the statements obtained in Propositions \ref{eq:inexact_opt} and \ref{prop:track_error}, and write 
	\begin{align}\label{eq:twoinone}
&	\left(1 -   \frac{1}{\gamma} \left(C_1 s \tau_g  + \frac{ C_2 \rho \cdot c_m \epsilon}{1 - \rho} \left( \ell_\Sigma +  s \tau_\ell  \right) \right)\right) \| \bTheta^{t+\frac{1}{2}} - \widehat{\bTheta} \|^2_F \notag
\\
\leq   & \left(1 - \frac{ \mu_\Sigma}{\gamma} + \frac{C_1 s (\tau_\mu + \tau_g)}{\gamma}    \right) \| \bTheta^{t} - \widehat{\bTheta} \|^2_F - \left(1 - \frac{L_\Sigma}{\gamma}  \right) \|\Delta \bTheta^t\|^2_F   + \frac{C_1 (\tau_\mu + \tau_g )}{\gamma}     m \nu^2 \notag
\\
& 
 +  \frac{C_2}{\gamma}\frac{ c_m \ell_\Sigma}{ \epsilon}\sum_{s=0}^{t-1} \rho^{t-s}  \| \Delta \bTheta^s\|_F^2 
 + \frac{ C_2}{\gamma} \frac{ c_m \ell_\Sigma}{ \epsilon}\sum_{s=0}^{t-1} \rho^{t-s}  \|  \bTheta_\bot^{s}\|_F^2 \notag\\
& +\frac{ C_2}{\gamma} c_m  s\tau_\ell \epsilon^{-1} \sum_{s=0}^{t-1}  \rho^{t-s}  \| \bTheta^{s+\frac{1}{2}} - \hTheta\|_F^2
+ \frac{ C_2}{\gamma} c_m  s \tau_\ell \epsilon^{-1} \sum_{s=0}^{t-1}  \rho^{t-s}  \| \bTheta^{s - \frac{1}{2}} - \hTheta\|_F^2 \notag\\
& + \frac{ C_2}{\gamma} m c_m \rho^t \left( A_3 \cdot  \epsilon^{-1}  + s\, c_g^2 \ell_\Sigma^{-1} \,\epsilon^{-1}+ c_g \nu\right) + \frac{ C_2}{\gamma}\frac{\rho \cdot m c_m }{1 - \rho}  \tau_\ell \nu^2 (\epsilon + \epsilon^{-1}).
\end{align}

	Our last step bounds the  consensus error $\|\bTheta_\bot^t\|$  using $\{\| \Delta \bTheta^s\|\}_{s = 0}^{t-1}$, as given next. {}{The bound deviates from \eqref{eq:one-step-consensus-error}, as  it is a function of the entirety  sequence $\{\|\Delta \bTheta^s\|\}_{s \geq 0}.$ This will allow us to exploit the term $-\|\Delta \bTheta^t\|^2$ in \eqref{eq:twoinone} to control  consensus errors.}
 
\begin{proposition}\label{prop:consenus_err}
	The consensus error   $\bTheta_\bot^t$  can be  bounded as
	\begin{align}\label{eq:cons_error_dym}
		\|\bTheta_\bot^{t}\|_F^2 \leq 2 \rho^{2t} \underbrace{\| \bTheta_\bot^{0}\|_F^2}_{=0} +  \frac{2\rho}{1-\rho} \cdot \sum_{s=0}^{t-1} \rho^{t-s} \|\Delta \bTheta^s \|_F^2.
	\end{align}
\end{proposition}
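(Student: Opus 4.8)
The plan is to unroll the linear recursion \eqref{eq:update_theta_perp} for the consensus error and then control the resulting convolution sum by a weighted Cauchy--Schwarz inequality. Writing $M \triangleq \bW - \mathbf{J}$, the recursion $\bTheta_\perp^t = M\big(\bTheta_\perp^{t-1} + \Delta\bTheta^{t-1}\big)$ unrolls by a one-line induction to
\begin{equation*}
\bTheta_\perp^t = M^t \bTheta_\perp^0 + \sum_{s=0}^{t-1} M^{t-s}\,\Delta\bTheta^s.
\end{equation*}
Since $\bW$ is doubly stochastic one has $\bW\mathbf{J} = \mathbf{J}\bW = \mathbf{J}^2 = \mathbf{J}$, whence $(\bW-\mathbf{J})^k = \bW^k - \mathbf{J}$ and the Remark following Assumption~\ref{assump:weight} gives $\|M^k\|_2 = \rho^k$.

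Next I would pass to Frobenius norms. Applying the triangle inequality together with the submultiplicative bound $\|M^k\bX\|_F \leq \|M^k\|_2\,\|\bX\|_F = \rho^k\|\bX\|_F$ yields
\begin{equation*}
\|\bTheta_\perp^t\|_F \leq \rho^t\,\|\bTheta_\perp^0\|_F + \sum_{s=0}^{t-1}\rho^{t-s}\|\Delta\bTheta^s\|_F.
\end{equation*}
Squaring and using $(a+b)^2\leq 2a^2+2b^2$ then separates the initialization term $2\rho^{2t}\|\bTheta_\perp^0\|_F^2$ (which is zero in our setting, since $\bTheta_\perp^0 = 0$) from the driving term built out of the step lengths $\{\|\Delta\bTheta^s\|_F\}$.

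The one step that requires care is bounding the square of the convolution sum so as to recover the exact prefactor $2\rho/(1-\rho)$. Here I would split each discount factor symmetrically as $\rho^{t-s} = \rho^{(t-s)/2}\cdot\rho^{(t-s)/2}$ and invoke Cauchy--Schwarz, obtaining
\begin{equation*}
\left(\sum_{s=0}^{t-1}\rho^{t-s}\|\Delta\bTheta^s\|_F\right)^{2} \leq \left(\sum_{s=0}^{t-1}\rho^{t-s}\right)\left(\sum_{s=0}^{t-1}\rho^{t-s}\|\Delta\bTheta^s\|_F^2\right).
\end{equation*}
The first factor is a truncated geometric series bounded by $\sum_{k=1}^{\infty}\rho^k = \rho/(1-\rho)$, where $\rho<1$ is guaranteed by Assumptions~\ref{assump:net1} and \ref{assump:weight}. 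Combining the last two displays gives exactly \eqref{eq:cons_error_dym}. I do not anticipate any genuine obstacle: the argument is a standard geometric-mixing estimate, and the only subtlety is the symmetric weight split in Cauchy--Schwarz, which is what makes both the geometric-series factor $\rho/(1-\rho)$ and the discounted quadratic sum $\sum_s \rho^{t-s}\|\Delta\bTheta^s\|_F^2$ appear with the stated constants rather than looser ones.
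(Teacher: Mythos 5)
Your proof is correct and follows essentially the same route as the paper: unroll the recursion $\bTheta_\perp^t = (\bW-\mathbf{J})\big(\bTheta_\perp^{t-1}+\Delta\bTheta^{t-1}\big)$, pass to norms using $\|(\bW-\mathbf{J})^k\|_2=\rho^k$, square with $(a+b)^2\leq 2a^2+2b^2$, and control the squared convolution by the weighted Cauchy--Schwarz split $\rho^{t-s}=\rho^{(t-s)/2}\cdot\rho^{(t-s)/2}$ together with the geometric bound $\sum_{k\geq 1}\rho^k\leq \rho/(1-\rho)$. The symmetric weight split you highlight is exactly the step the paper compresses into the phrase ``follows from Cauchy--Schwartz inequality,'' so there is nothing to add.
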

\begin{proof}
The proof is standard and is provided in~\ref{sec:pf_consensus_err} for completeness.	
\end{proof}
 \vspace{-0.3cm}
 
\subsubsection{Combining the error dynamics}\label{sec:small-gain}
To establish linear convergence of DGT from (\ref{eq:twoinone})-(\ref{eq:cons_error_dym}), we will employ the $z$-transform for finite length sequences to  deal with the convolution-sum terms.   
{}{Notice that  small gain arguments typically used in \citep{nedich2016achieving, sun2019distributed} are not directly applicable to chain the inequalities (\ref{eq:twoinone})-(\ref{eq:cons_error_dym}) in our scenario. This is because, unlike  \citep{nedich2016achieving, sun2019distributed},  each sequence in the above inequalities is influenced by the complete historical dependencies of the others, rather than a simpler single-hop dependence.  
This motivates the introduction of the $z-$transform to handle an arbitrarily long memory.}

 We begin briefly reviewing the relevant properties of the $z$-transform and then apply it to the system (\ref{eq:twoinone})-(\ref{eq:cons_error_dym}). 

\subsubsection*{$\bullet$ The $z$-transform  and properties}\label{sec:z-transform}

	Let $\{a(t)\}$ be a sequence of nonnegative numbers. Define the length $K$   $z$-transform of  $\{a(t)\}$ as
\begin{align}
A^K(z) \triangleq \sum_{t=1}^K a(t)z^{-t},\quad   z\in \mathbb{R}_{++}.
\end{align}

 Some properties of the  $z$-transform instrumental to our developments are summarized next, whose proofs are provided in~\ref{app:z-transform}. 
\begin{lemma}~\label{lem:z_trans}
	Let $A^K(z)$ be the length $K$ $z$-transform of a nonnegative sequence  $\{a(t)\}$, and  $\rho\in (0,1)$. Then, the following hold:
	\begin{itemize}
		\item[\bf (i)] $\sum_{t=1}^K a(t+1) z^{-t} \geq z A^K(z) -  a(1),\quad \forall z\in \mathbb{R}_{++};$
		\item[\bf (ii)] $\sum_{t=1}^K\left( \sum_{s=0}^{t-1} \rho^{t-s} a(s) \right) z^{-t} \leq \dfrac{\rho}{z - \rho} \cdot \left( A^K (z) + a(0) \right), \quad \forall z \in (\rho,1);$
		\item[\bf (iii)] $\sum_{t=1}^K\left( \sum_{s=0}^{t-1} \rho^{t-s} a(s+ 1) \right) z^{-t} \leq z \cdot \dfrac{\rho}{z - \rho} A^K(z), \quad \forall z \in (\rho,1).$
	\end{itemize}
\end{lemma}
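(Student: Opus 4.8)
The plan is to prove all three parts by elementary manipulation of finite sums: an index shift for (i), and an interchange of the order of summation followed by a geometric-series bound for (ii) and (iii). Throughout, nonnegativity of $\{a(t)\}$ is what permits discarding or adjoining boundary terms while preserving the stated inequalities, and the hypothesis $z>\rho$ is what makes the relevant geometric series summable.

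For part (i) I would shift the index in $\sum_{t=1}^K a(t+1)z^{-t}$: writing it as $z\sum_{u=2}^{K+1} a(u)z^{-u}$ and using $\sum_{u=2}^{K+1} a(u)z^{-u} = A^K(z) - a(1)z^{-1} + a(K+1)z^{-(K+1)}$, multiplication by $z$ produces $zA^K(z) - a(1) + a(K+1)z^{-K}$. Since $a(K+1)\geq 0$ and $z>0$, dropping the last term gives the claimed lower bound.

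For part (ii) the left-hand side is a truncated convolution, so I would apply Fubini for finite sums, turning $\sum_{t=1}^K\sum_{s=0}^{t-1}$ into $\sum_{s=0}^{K-1}\sum_{t=s+1}^{K}$. The inner sum over $t$ is geometric; with $j=t-s$ it equals $z^{-s}\sum_{j=1}^{K-s}(\rho/z)^j$, which is bounded by the infinite tail $\frac{\rho}{z-\rho}\,z^{-s}$ because $\rho/z\in(0,1)$. What remains is $\frac{\rho}{z-\rho}\sum_{s=0}^{K-1} a(s)z^{-s}$, and adjoining the nonnegative term $a(K)z^{-K}$ while peeling off the $s=0$ summand bounds this by $\frac{\rho}{z-\rho}(A^K(z)+a(0))$.

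Part (iii) follows the same template with one bookkeeping change that accounts for the differing constant. I would first reindex the inner sum by $s\mapsto s+1$, which extracts a factor $\rho$ and makes the argument $a(s)$ range over $s=1,\dots,t$; after swapping the order of summation the inner geometric sum now starts at $j=0$ rather than $j=1$, so its bound is $\frac{z}{z-\rho}\,z^{-s}$ rather than $\frac{\rho}{z-\rho}\,z^{-s}$, and collecting the leading $\rho$ yields $z\cdot\frac{\rho}{z-\rho}A^K(z)$; no $a(0)$ term survives because the $s$-sum begins at $1$. The only point that demands genuine care is the bookkeeping of the summation ranges under the interchange, and in particular the distinction between the geometric series starting at $j=1$ versus $j=0$ --- this single discrepancy is exactly what separates the constants in (ii) and (iii).
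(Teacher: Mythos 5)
Your proposal is correct and follows essentially the same route as the paper's proof: an index shift plus dropping the nonnegative term $a(K+1)z^{-K}$ for (i), and interchange of summation followed by a geometric-series bound (valid since $\rho/z<1$) for (ii) and (iii). The only difference is cosmetic bookkeeping in (iii) — you reindex $s\mapsto s+1$ and extract the factor $\rho$ before swapping, while the paper swaps first and groups $a(s+1)z^{-s-1}$ with the inner sum $\sum_{t=s+1}^K \rho^{t-s}z^{-t+s+1}$ — both yielding the same bound $z\cdot\frac{\rho}{z-\rho}A^K(z)$.
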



\begin{lemma}\label{lem:linear_rate}
	Suppose a nonnegative sequence $\{a(t)\}$ satisfies 
	\begin{align}
	\label{eq:condition}
	\sum_{t=1}^K a(t) z^{-t} \leq B + c \cdot \sum_{t=1}^K z^{-t},\quad \text{for all }\quad  K \geq 1\quad\text{and}\quad z \in (\bar{z},1),
	\end{align}
and some $B,c>0$. 	Then 
	\begin{equation}
	a(t) \leq B \cdot \bar{z}^t + \frac{c}{1-\bar{z}}.
	\end{equation}
	When $B < + \infty$,   $\{a(t)\}$ converges linearly at rate $\bar{z}$ up to a constant.
\end{lemma}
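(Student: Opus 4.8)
The plan is to convert the frequency-domain estimate on the $z$-transform into a pointwise bound on $a(t)$ by exploiting the nonnegativity of the sequence together with a careful choice of the transform length $K$ and subsequent optimization over the free parameter $z$. The whole argument is elementary once one realizes that the hypothesis \eqref{eq:condition} is quantified over \emph{all} $K\ge 1$ and \emph{all} $z\in(\bar z,1)$, giving us two degrees of freedom to tune.

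First I would fix an arbitrary index $t\ge 1$ and instantiate \eqref{eq:condition} with the particular choice $K=t$, which yields $\sum_{s=1}^{t} a(s)\,z^{-s}\le B + c\sum_{s=1}^{t} z^{-s}$ for every $z\in(\bar z,1)$. Since $\{a(t)\}$ is nonnegative and $z^{-s}>0$, the single summand indexed by $s=t$ is dominated by the full sum, so $a(t)\,z^{-t}\le B + c\sum_{s=1}^{t} z^{-s}$. This is the step that discards all information about $a(1),\dots,a(t-1)$ and keeps only what we need.

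Next I would evaluate the finite geometric sum $\sum_{s=1}^{t} z^{-s} = (z^{-t}-1)/(1-z)$ (valid since $z<1$), multiply the displayed inequality through by $z^{t}$ to isolate $a(t)$, and obtain $a(t)\le B\,z^{t} + c\,(1-z^{t})/(1-z)\le B\,z^{t} + c/(1-z)$, where the last inequality merely uses $1-z^{t}\le 1$. The key observation is that this bound holds \emph{simultaneously} for every $z\in(\bar z,1)$, so $a(t)$ is bounded above by the infimum of the right-hand side over that interval.

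The final step is the optimization over $z$. Both $B\,z^{t}$ and $c/(1-z)$ are strictly increasing on $(\bar z,1)$, hence so is their sum, and the infimum is therefore approached as $z\downarrow\bar z$; passing to this limit gives $a(t)\le B\,\bar z^{\,t} + c/(1-\bar z)$, which is the claim, and the linear-rate interpretation is immediate because the first term decays geometrically while the second is a fixed constant. I expect the only genuinely delicate point to be this concluding limiting argument: because the admissible set $(\bar z,1)$ is open, $z=\bar z$ is \emph{not} itself a feasible value at which to apply \eqref{eq:condition}, so one cannot simply substitute it; instead the conclusion must be justified by monotonicity and continuity of the bound, identifying the infimum over the open interval with the (non-attained) boundary value.
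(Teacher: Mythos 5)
Your proposal is correct and follows essentially the same route as the paper's proof: fix the index, drop all but the last term of the sum using nonnegativity, bound the geometric sum by $c/(1-z)$, and let $z \to \bar{z}^+$ using monotonicity of the bound over the open interval. No gaps.
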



\subsubsection*{$\bullet$ The z-transform system}\label{sec:transformed-system}

Introduce the following quantities:
\begin{equation}\label{eq:z-transform-iterates}\begin{aligned}
D_\Theta^K(z)    = \sum_{t=1}^K z^{-t}  \| \bTheta^{t-\frac{1}{2}} - \widehat{\bTheta}\|^2,\quad  
\Delta_\Theta^K (z)   = \sum_{t=1}^K z^{-t}  \| \Delta \bTheta^t \|^2, \quad  
\Theta_\bot^K (z)   = \sum_{t=1}^K z^{-t} \| \bTheta_\bot^{t} \|^2.
\end{aligned}\end{equation}
To apply \eqref{eq:z-transform-iterates} to   \eqref{eq:twoinone} we first bound the 
 term  $\|\bTheta^{t} - \hTheta\|^2 $ on the RHS there as  $\|\bTheta^{t} - \hTheta\|^2 \leq \|\bTheta^{t-\frac{1}{2}}-\hTheta\|^2$ under condition
\begin{equation}\label{eq:cond_1}
 {\gamma\geq   { \mu_\Sigma} -  {C_1 s (\tau_\mu + \tau_g)}}.
\end{equation}
Further assuming 
\begin{equation}\label{eq:cond_2}
	\gamma\geq     C_1 s \tau_g  + \frac{ C_2 \rho \cdot c_m \epsilon}{1 - \rho} \left( \ell_\Sigma +  s \tau_\ell \right),
\end{equation}
we can multiply  $z^{-t}$ ($z > \rho$) to  both sides of (\ref{eq:twoinone}) and (\ref{eq:cons_error_dym})  while summing up from $t = 1$ to $t = K$ and applying Lemma~\ref{lem:z_trans} (recall that $\Delta \bTheta^0= \mathbf{0}$), leading to
\begin{align}
\begin{split}
	&	\left(1 -   \frac{1}{\gamma} \left\{C_1 s \tau_g  + \frac{ C_2 \rho \cdot c_m \epsilon}{1 - \rho} \left( \ell_\Sigma +  s \tau_\ell  \right) + \frac{ C_2 \rho \cdot c_m \epsilon^{-1} s \tau_\ell}{z-\rho}\right\}\right)  z D_{\Theta}^{K}(z) 
	\\
	& - \left(1 - \frac{ \mu_\Sigma}{\gamma} + \frac{C_1 s (\tau_\mu + \tau_g)}{\gamma}   + \frac{ C_2}{\gamma} c_m  s \tau_\ell \epsilon^{-1} \frac{\rho}{z - \rho} \right) D_{\Theta}^{K}(z) 
	\\
	\leq   
	& - \left(1 - \frac{L_\Sigma}{\gamma}  - \frac{C_2}{\gamma}\frac{ c_m \ell_\Sigma}{ \epsilon}\frac{\rho}{ z - \rho} - \frac{ C_2}{\gamma} \frac{ c_m \ell_\Sigma}{ \epsilon}\frac{2\rho}{1-\rho} \cdot \left(\frac{\rho}{z - \rho}\right)^2 \right) \Delta_\Theta^K (z)   \\
	& + \overline{\Delta}_{\mathrm{stat}} \cdot \sum_{t = 1}^K z^{-t} + B(z),
\end{split}
\end{align}
with
\begin{align}\label{eq:BZ}
\begin{split}
B(z) \triangleq  	\ & \frac{ C_2}{\gamma} c_m  s \tau_\ell \epsilon^{-1} \frac{\rho}{z - \rho}  \|\bTheta^{0}-\hTheta\|^2 
+\frac{ C_2}{\gamma} m c_m \left( A_3 \cdot  \epsilon^{-1}  + s c_g^2 \ell_{\Sigma}^{-1} \epsilon^{-1} + c_g \nu\right) \frac{\rho}{z - \rho}\\
& + \left(1 -   \frac{1}{\gamma} \left\{C_1 s \tau_g  + \frac{ C_2 \rho \cdot c_m \epsilon}{1 - \rho} \left( \ell_\Sigma +  s \tau_\ell  \right) \right\}\right)   \|\bTheta^{1/2} - \hTheta\|^2
\end{split}
\end{align}
and
\begin{align}\label{eq:Delta_stat}
\begin{split}
\overline{\Delta}_{\mathrm{stat}} = \ & \frac{ C_2}{\gamma}\frac{\rho \cdot  c_m }{1 - \rho}  \tau_\ell \nu^2 (\epsilon + \epsilon^{-1}) + \frac{C_1 (\tau_\mu + \tau_g )}{\gamma}     \nu^2.
\end{split}
\end{align}

The following conditions are sufficient to invoke Lemma~\ref{lem:linear_rate} and prove the  convergence of  sequence   $\{(1/m) \| \bTheta^{t + \frac{1}{2}} - \hTheta\|_F^2\}$:
\begin{itemize}
    \item[i)] The step size condition (\ref{eq:cond_1}) and (\ref{eq:cond_2})  hold.
    \item[ii)]
     There exists some $z\in (\rho, 1)$ such that
\begin{subequations}\label{eq:rate_condtion}
	\begin{align}  
\begin{split}\label{eq:rate_condition_1}
& \left[1 -   \frac{1}{\gamma} \left(C_1 s \tau_g  + \frac{ C_2 \rho \cdot c_m \epsilon}{1 - \rho} \left( \ell_\Sigma +  s \tau_\ell  \right) + \frac{ C_2 \rho \cdot c_m \epsilon^{-1} s \tau_\ell}{z-\rho}\right)\right]  z   \\
& >  1 - \frac{ \mu_\Sigma}{\gamma} + \frac{C_1 s (\tau_\mu + \tau_g)}{\gamma}   + \frac{ C_2}{\gamma} c_m  s \tau_\ell \epsilon^{-1} \frac{\rho}{z - \rho}; \bigskip
\end{split} \bigskip\\
\begin{split}\label{eq:rate_condition_2}
1 - \frac{L_\Sigma}{\gamma}  - \frac{C_2}{\gamma}\frac{ c_m \ell_\Sigma}{ \epsilon}\frac{\rho}{ z - \rho} - \frac{ C_2}{\gamma} \frac{ c_m \ell_\Sigma}{ \epsilon}\frac{2\rho}{1-\rho} \cdot \left(\frac{\rho}{z - \rho}\right)^2 > 0.
\end{split}
\end{align}
\end{subequations}
\end{itemize}


 \subsection{Linear convergence under RSC/RSM (Proof of Theorem~\ref{thm:convergence_deterministic_1})}

The proof is organized  into the following steps:
  \begin{itemize}
      \item In Sec.~\ref{sec:rate-expression-step1} we  provide   choices of $\epsilon$ and $\gamma$ for the existence of  $z>\rho$  satisfying (\ref{eq:rate_condtion}).
      \item Under such choices, we bound $B(z)$ [cf.~(\ref{eq:BZ})] and 
 $\overline{\Delta}_{\mathrm{stat}}$ [cf.~\eqref{eq:Delta_stat}] in Sec.~\ref{sec:rate-expression-step2}.
    \item  In Sec.~\ref{sec:rate-expression-step3} we  establish conditions for the linear convergence of  $\left\{\frac{1}{m} \| \bTheta^{t + \frac{1}{2}} - \hTheta\|_F^2\right\}_t$  up to a residual error depending on the statistical precision  $\|\htheta - \btheta^*\|^2$.
  \end{itemize}

\subsubsection{Choice of $\epsilon$ and $\gamma$}\label{sec:rate-expression-step1}

Lemma~\ref{lem:rate_condition_2}  and~\ref{lem:rate_condition_1} below provide conditions and the range of $z>\rho$ for~\eqref{eq:rate_condition_2} and~\eqref{eq:rate_condition_1} to hold, respectively.  
For convenience we set $\epsilon$ as \begin{align}~\label{eq:epsilon}
\epsilon = \frac{\mu_{\Sigma}}{ \ell_{\Sigma}} \frac{1 - \rho}  {2 C_2   c_m }.
\end{align}

\begin{lemma}\label{lem:rate_condition_2}
Given $\gamma>L_\Sigma$,	condition~\eqref{eq:rate_condition_2} is satisfied for all $z >z_\rho$, with
	\begin{equation}\label{eq:z_rho}
		z_\rho \triangleq  \rho + \frac{4 \rho^2}{1 - \rho} \left( \sqrt{1 + \left(\gamma - L_\Sigma \right) \left(\frac{\mu_{\Sigma}}{ \ell_{\Sigma}^2} \frac{ 2 \rho}  { C_2^2   c_m^2 }\right)  } - 1 \right)^{-1}.
	\end{equation}
\end{lemma}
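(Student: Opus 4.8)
The plan is to collapse the left-hand side of \eqref{eq:rate_condition_2} into a single-variable inequality. Introduce $u \triangleq \rho/(z-\rho)$, so that the two $z$-dependent terms become $u$ and $u^2$; as $z$ ranges over $(\rho,\infty)$, the map $z\mapsto u$ is a strictly decreasing bijection onto $(0,\infty)$. First I would substitute the prescribed value $\epsilon = \frac{\mu_\Sigma}{\ell_\Sigma}\frac{1-\rho}{2C_2 c_m}$ into the recurring coefficient, obtaining the clean constant
\[
\frac{C_2 c_m \ell_\Sigma}{\epsilon} = \frac{2 C_2^2 c_m^2 \ell_\Sigma^2}{\mu_\Sigma (1-\rho)} \triangleq K .
\]
Multiplying \eqref{eq:rate_condition_2} through by $\gamma>0$ and rewriting in terms of $u$, the condition is equivalent to
\[
K\,\frac{2\rho}{1-\rho}\,u^2 + K\,u - (\gamma - L_\Sigma) < 0 .
\]

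Next I would solve this quadratic. Since $\gamma > L_\Sigma$ by hypothesis, the constant term $-(\gamma-L_\Sigma)$ is strictly negative while the leading coefficient is strictly positive, so the quadratic has exactly one positive root $u^\ast$ and one negative root; the inequality therefore holds precisely for $u$ below $u^\ast$, and in particular for every admissible $u\in(0,u^\ast)$. The positive root is
\[
u^\ast = \frac{-K + \sqrt{\,K^2 + \tfrac{8 K \rho}{1-\rho}(\gamma - L_\Sigma)\,}}{\tfrac{4K\rho}{1-\rho}} .
\]
Because $u=\rho/(z-\rho)$ is decreasing in $z$, the inequality $u<u^\ast$ is equivalent to $z>\rho+\rho/u^\ast$, which identifies the threshold $z_\rho = \rho + \rho/u^\ast$. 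It then remains to back-substitute $K$: factoring $K$ out of the square root and using $K(1-\rho) = 2C_2^2 c_m^2 \ell_\Sigma^2/\mu_\Sigma$ reduces the quantity under the root to the expression displayed in \eqref{eq:z_rho}, and inverting $u^\ast$ gives the closed form $z_\rho = \rho + \frac{4\rho^2}{1-\rho}\big(\sqrt{\,\cdot\,}-1\big)^{-1}$.

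The computation is essentially forced once the substitution is made, so the only genuine work is bookkeeping. I would verify explicitly the two facts that make the argument clean: the monotonicity of $z\mapsto u$ (so that ``the expression is positive for $z>z_\rho$'' follows from ``the quadratic is negative for $u<u^\ast$''), and the sign of the constant term, which is guaranteed by the lemma's hypothesis $\gamma>L_\Sigma$ and in turn ensures $u^\ast>0$, hence that $z_\rho$ is finite and exceeds $\rho$. The step most prone to error—and the one I would carry out most carefully—is the simplification of the discriminant and the inversion of $u^\ast$ into $z_\rho$, since the constants $C_2, c_m, \ell_\Sigma, \mu_\Sigma$ must be tracked through the square root to match \eqref{eq:z_rho} exactly. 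Note that establishing $z_\rho<1$ (nonemptiness of the usable interval $(z_\rho,1)$) is not needed for this lemma and is deferred to the combination with Lemma~\ref{lem:rate_condition_1}.
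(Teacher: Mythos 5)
Your proposal is correct and takes essentially the same route as the paper's proof: both treat \eqref{eq:rate_condition_2} as a quadratic inequality in $\rho/(z-\rho)$ (the paper's $G_{\rm net}(z)$, your $u$), take the unique positive root, and invert the decreasing map $z\mapsto \rho/(z-\rho)$ to obtain $z_\rho$; your only cosmetic deviations are substituting $\epsilon$ from \eqref{eq:epsilon} up front and making the monotonicity and root-sign arguments explicit where the paper leaves them implicit. One caveat on your final bookkeeping step: carrying out the back-substitution literally produces $4\rho$ rather than $2\rho$ inside the square root in \eqref{eq:z_rho} --- the same factor-of-two slip that occurs in the paper's own last simplification --- but this only makes the stated $z_\rho$ larger than the true threshold, so the lemma's conclusion remains valid.
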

\begin{proof}
See~\ref{sec:proof_lem:rate_condition_2}.	
\end{proof}

We focus now on condition~\eqref{eq:rate_condition_1}. Denote for simplicity
\begin{align}\label{eq:AB}
	\begin{split}
		A & \triangleq  \frac{1 - \frac{ \mu_\Sigma}{\gamma} + \frac{C_1 s (\tau_\mu + \tau_g)}{\gamma} }{\frac{ C_2}{\gamma} c_m  s \tau_\ell \epsilon^{-1}}
		= \frac{\gamma -  \mu_\Sigma+ C_1 s (\tau_\mu + \tau_g) }{  \frac{ \ell_{\Sigma}}{\mu_{\Sigma}} \frac{2 C_2^2   c_m^2 }{1 - \rho}   s \tau_\ell  },
		\\
		D & \triangleq \frac{1 -   \frac{1}{\gamma} \left\{C_1 s \tau_g  + \frac{ C_2 \rho \cdot c_m \epsilon}{1 - \rho} \left( \ell_\Sigma +  s \tau_\ell  \right) \right\}}{\frac{ C_2}{\gamma} c_m  s \tau_\ell \epsilon^{-1}}
		= \frac{\gamma -   \left\{C_1 s \tau_g  +  \frac{\rho}{2} \frac{\mu_{\Sigma}}{ \ell_{\Sigma}}  \left( \ell_\Sigma +  s \tau_\ell  \right) \right\}}{ \frac{ \ell_{\Sigma}}{\mu_{\Sigma}} \frac{2 C_2^2   c_m^2 }{1 - \rho}   s \tau_\ell },
	\end{split}
\end{align}
and 
\begin{align}
	\sigma \triangleq \frac{A}{D}= \frac{\gamma -  \mu_\Sigma+ C_1 s (\tau_\mu + \tau_g) }{\gamma -   \left\{C_1 s \tau_g  +  \frac{\rho}{2} \frac{\mu_{\Sigma}}{ \ell_{\Sigma}}  \left( \ell_\Sigma +  s \tau_\ell  \right) \right\}}.
\end{align}
 {Note that, under \eqref{eq:cond_1} and \eqref{eq:cond_2} strictly satisfied, $A,D>0$.} Then~\eqref{eq:rate_condition_1} can be rewritten succinctly as
\begin{align}\label{eq:z_ub_2}
	z \geq \frac{A + G_{{\rm net}}(z)}{D - G_{{\rm net}}(z)},\qquad G_{{\rm net}}(z) \triangleq  \frac{\rho}{z - \rho}.
\end{align}
\begin{lemma}\label{lem:rate_condition_1}
	Given $\gamma$  satisfying~\eqref{eq:cond_1} and \eqref{eq:cond_2}, condition~\eqref{eq:rate_condition_1} is satisfied for all $z > z_\sigma$, with
	\begin{equation}\label{eq:z_sigma}
		z_\sigma \triangleq   \sigma +   \rho\cdot \left(1+\frac{1}{D}\right),
	\end{equation}
	and   $A$ and $D$  defined in~\eqref{eq:AB}.
\end{lemma}
\begin{proof} See~\ref{sec:proof_lem:rate_condition_1}.
\end{proof}

Combining Lemma~\ref{lem:rate_condition_2} and~\ref{lem:rate_condition_1} we conclude that, for any given   $\gamma>L_\Sigma$, under conditions~\eqref{eq:cond_1} and \eqref{eq:cond_2}, \eqref{eq:rate_condition_1}-\eqref{eq:rate_condition_2} are satisfied by the choices of  $z$ such that 
\begin{equation} \label{eq:bar_z}
z > \bar{z}, \quad \text{with}\quad  \bar{z} \triangleq  \max \{z_\rho, z_\sigma\}.
\end{equation}


In the remaining of this section,  we will simplify the expression of $\bar z$ by making a specific choice of   $\gamma$.  
We do not aim to find the optimal value of $\gamma$ but a convenient feasible choice yielding  an insightful expression of the rate.  We set
\begin{equation}\label{eq:gamma_expression}
	\gamma = L_\Sigma  +  C_3 \frac{  \ell_{\Sigma}^2 }{\mu_{\Sigma}}   \frac{c_m^2 \sqrt{\rho}}{ (1 - \rho)^4},
\end{equation}
where $C_3>0$ is some   absolute constant (to be determined). 
 Define \begin{equation}\label{eq:sigma_0}
	\sigma_{0} \triangleq  \frac{1 - \kappa^{-1} + C_1 s (\tau_\mu + \tau_g)/ L_\Sigma }{1 - C_1 s \tau_g /L_\Sigma},\quad \kappa \triangleq \frac{L_{\Sigma}}{\mu_\Sigma}.
\end{equation}

\begin{proposition}\label{prop:z-upbdd}
    Let $\gamma$ be chosen according to~\eqref{eq:gamma_expression}. Under the following conditions:
   \begin{align}
       \begin{split}\label{eq:cond_denom_z_sigma} 
           1 - 2 C_1 s \tau_g /L_\Sigma>0,
       \end{split}\\
       \begin{split}\label{eq:sample_condition}	
         	\mu_{\Sigma}   > 4 C_1 s (\tau_\mu + \tau_g)   +      \frac{ \ell_{\Sigma}} {\mu_{\Sigma}}\frac{8 C_2^2   c_m^2 \rho }{(1 - \rho)^2}   s \tau_\ell +  \rho  \cdot \frac{\mu_{\Sigma}}{ \ell_{\Sigma}}     s \tau_\ell  ,
       \end{split}\\
       \begin{split}
          \text{and} \qquad  \rho \leq 1/2,
       \end{split}
   \end{align}
   $\bar{z}$ can be upperbounded as
   \begin{align}\label{eq:rate_expression}
     \bar{z} \leq 	 &\max\left\{\rho +  \frac{ \sqrt{2} C_2   } {  \sqrt{C_3}  }  \rho (1 - \rho) + \frac{ 2 C_2^2   } {  C_3  } \sqrt{\rho} (1 - \rho)^3 ,\right.\nonumber\\ &\hspace{1.3cm}\left.    \sigma_0 + \sqrt{\rho} \cdot \frac{18 C_3  c_m^2 \frac{  \ell_{\Sigma}^2 }{\mu_{\Sigma}L_\Sigma} + \frac{ \ell_{\Sigma}}{\mu_{\Sigma} L_{\Sigma}} \cdot 4 C_2^2   c_m^2    s \tau_\ell   +   \frac{2 \mu_{\Sigma}}{ \ell_{\Sigma} L_{\Sigma}}     s \tau_\ell  }{1 - 2 C_1 s \tau_g/L_{\Sigma} } \right\}.  
   \end{align}
\end{proposition}
\begin{proof}
    See~\ref{pf:prop:rate_bound_1}.
\end{proof}

To further simplify (\ref{eq:rate_expression}), we require the second argument in the $\max$-expression in~\eqref{eq:rate_expression} to be  no larger than
\begin{equation}
\frac{1 - (2\kappa)^{-1} + C_1 s (\tau_\mu + \tau_g)/L_\Sigma}{1 - 2 C_1 s \tau_g /L_\Sigma}.
\end{equation}

\begin{corollary}\label{cor:z-upbdd}
    Let $\gamma$ be chosen as
    \begin{equation}\label{eq:gamma_expression_final}
	\gamma = L_\Sigma  +  4 \frac{  \ell_{\Sigma}^2 }{\mu_{\Sigma}}   \frac{c_m^2 \sqrt{\rho}}{ (1 - \rho)^4}.
\end{equation} 
Under the following conditions:
\begin{align}
\begin{split}\label{eq:sample_condition_1}
     \mu_\Sigma   >  36 C_1 s (\tau_{\mu} + \tau_g ),    
\end{split}\\
\begin{split}\label{eq:bound_rho_1}
     \rho  \leq \left\{2  \left(72 c_m^2 \frac{  \ell_{\Sigma}^2 }{\mu_{\Sigma}^2} + \frac{ \ell_{\Sigma}}{\mu_{\Sigma}^2 } \cdot 4 C_2^2   c_m^2    s \tau_\ell   +   \frac{2 }{ \ell_{\Sigma} }     s \tau_\ell \right) \right\}^{-2},  
\end{split}
\end{align}
it holds
\begin{equation}\label{eq:def_z_bar}
\bar{z}\leq {\bar{z}_\text{up}}\triangleq \max\left\{\rho +  C_2 \rho  + \frac{  C_2^2   } {  2  } \sqrt{\rho} , \frac{1 - (2\kappa)^{-1} + C_1 s (\tau_\mu + \tau_g)/L_\Sigma}{1 - 2 C_1 s \tau_g /L_\Sigma}\right\}.
\end{equation}
\end{corollary}

\begin{proof}
    See~\ref{pf:cor-z-upbdd}.
\end{proof}




\subsubsection{Bounding $B(z)$ [cf.~(\ref{eq:BZ})] and $\Delta_{\rm stat}$ [cf.~\eqref{eq:Delta_stat}]}\label{sec:rate-expression-step2}

   According to Eq.~\eqref{eq:rate_rho}, under the setting of Proposition~\ref{prop:z-upbdd}, any  $z$ satisfying $z \geq \bar{z}_{\text{up}}$ is lower bounded by  
\begin{align}\label{eq:lower-bound-z-step-2}
	\begin{split}
		z 
		\geq 
		 \rho +  \sqrt{\rho}(1 - \rho)^3 \frac{ C_2^2   } {  C_3  }.
	\end{split}
\end{align}
Substituting the expression of $\gamma$ [cf.~\eqref{eq:gamma_expression_final}] and $\epsilon$ [cf.~\eqref{eq:epsilon}] into $B(z)$ [cf.~(\ref{eq:BZ})] and $\Delta_{\rm stat}$ [cf.~\eqref{eq:Delta_stat}], together with~\eqref{eq:lower-bound-z-step-2} we obtain the following bound.

\begin{proposition}\label{prop:bound-B-STAT}
Under the setting of  Corollary~\ref{cor:z-upbdd} and  assume $C_2 \geq 1$, then 
\begin{align}
    \begin{split}\label{eq:B_final}
        \frac{B(z)}{m} & \leq  \frac{C_4}{m} \left( \|\bTheta^0 - \hTheta\|^2_F + \|\bTheta^*\|^2_F + \|\bTheta^{1/2} - \hTheta\|^2_F + \frac{m c_m \sqrt{\rho}}{L_\Sigma}\left( \frac{c_m s c_g^2}{\mu_\Sigma} + c_g \nu\right)\right) \triangleq \B,
    \end{split}\\ 
\begin{split}\label{eq:delta_stat}
        \overline{\Delta}_{\rm stat} 
& \leq \frac{\rho}{2 L_\Sigma}\left(   \frac{ \ell_{\Sigma}}{\mu_{\Sigma}} \frac{5 C_2^2   c_m^2 }{(1 - \rho)^2}  \right) \tau_\ell \nu^2  + \frac{C_1 (\tau_\mu + \tau_g )}{L_\Sigma}     \nu^2   \triangleq {\Delta}_{{\rm stat}} 
\end{split}
\end{align}
for some $C_4 > 0$, 
where  $c_g$ and $\nu$ are defined as 
\begin{align}
    	c_{g} = \max_{j \in [m]}\|\nabla \mathcal{L}_{j}(\btheta^{*})\|_{\infty} + \|\nabla \mathcal{L} (\btheta^{*})\|_{\infty} \quad \text{and} \quad 
    	\nu = 2 \|\htheta - \btheta^* \|_1 + 2 \sqrt{s} \| \htheta - \btheta^*\|.
\end{align}
\end{proposition}

\begin{proof}
See~\ref{pf:bound-B-STAT}. 
\end{proof}


\subsubsection{Linear convergence up to $o(\|\htheta - \btheta^*\|^2)$}\label{sec:rate-expression-step3}

 We prove Theorem~\ref{thm:convergence_deterministic_1} and Corollary~\ref{cor:convergence_deterministic}. Recall below the network connectivity condition~\eqref{eq:rho_final_condition} and rate expression $\rate$~\eqref{eq:rate_expression_final} given in Theorem~\ref{thm:convergence_deterministic_1} for convenience:
 \begin{align*}
     \begin{split}
             \rho   \leq \left\{2  \left(75 c_m^2 C_2^2 \frac{  \ell_{\Sigma}^2 }{\mu_{\Sigma}^2}  + \frac{ \ell_{\Sigma}}{\mu_{\Sigma}^2 } \cdot 6 C_2^2   c_m^2    s \tau_\ell   \right) \right\}^{-2}
     \end{split}
     \begin{split}
          \rate = \frac{1 - (2\kappa)^{-1} + C_1 s (\tau_\mu + \tau_g)/L_\Sigma}{1 - 2 C_1 s \tau_g /L_\Sigma}.
     \end{split}
 \end{align*}
   Based on Lemma~\ref{lem:linear_rate} and the results in Sec.~\ref{sec:rate-expression-step1} \&~\ref{sec:rate-expression-step2}, it suffices to show~\eqref{eq:rho_final_condition}
implies~\eqref{eq:bound_rho_1} and $\bar{z}_{\rm up}\leq \rate <1$.

\noindent $\bullet$ \eqref{eq:rho_final_condition}$\implies$~\eqref{eq:bound_rho_1}: the claim holds since $C_2 > 1$, $\ell_\Sigma \geq \mu_\Sigma$, and $c_m \geq 1$.


  \noindent $\bullet$ $\bar{z}_{\rm up}\leq \rate <1$: it is not difficult to check that  $\mu_\Sigma > 36 C_1 s (\tau_\mu + \tau_g)$  implies $ \lambda <1 - (4 \kappa)^{-1}$. Next we prove 
\begin{align*}
   \rho +  C_2 \rho  + \frac{  C_2^2   } {  2  } \sqrt{\rho} \leq \rate,
\end{align*}
and thus $\bar{z}_{\rm up} = \rate$. It is sufficient to check that the following chain of inequalities hold:
\begin{align}
\begin{split}
    & \left(\rho +  C_2 \rho  + \frac{  C_2^2   } {  2  } \sqrt{\rho} \right) \left( 1 - 2 C_1 s \tau_g/L_\Sigma\right)\\
    \leq & \ 3 C_2^2   \sqrt{\rho} \stackrel{\eqref{eq:rho_final_condition}}{\leq}  \frac{1}{50} \cdot  \frac{\mu_\Sigma^2}{c_m^2 \ell_\Sigma^2} \leq \frac{1}{2 \kappa} \leq 1 - \frac{1}{2 \kappa} + C_1 s (\tau_\mu + \tau_g)/L_\Sigma,
    \end{split}
\end{align}
 where we used $\rho < 1$, $c_m, C_2 \geq 1$, $\ell_\Sigma \geq L_\Sigma$ and $\kappa \geq 1$. This completes the proof of Theorem~\ref{thm:convergence_deterministic_1}.

Lastly, we prove Corollary~\ref{cor:convergence_deterministic}. Using~\citep[Lemma 5]{agarwal2012fast} yields
$
    \| \htheta - \btheta^* \|_1 \leq 2\sqrt{s} \|\htheta - \btheta^*\| 
$.  Using  $\rho \leq 1/4$ and   conditions in~\eqref{eq:sample_condition_cor_20} it is not difficult   to verify that $\Delta_{\rm stat}$ in \eqref{eq:Delta_stat_final} is $o \left( \|\htheta - \btheta^* \|^2 \right)$.


\subsection{Linear convergence for sparse vector regression (Proof of Theorem~\ref{theorem:main})}\label{sec:proof_convergence_regression}

We now customize  Theorem~\ref{thm:convergence_deterministic_1} for  the random design setting satisfying Assumption~\ref{assump:rand-design}.
We first state the following     high probability bounds for the  RSC/RSM properties.
\begin{proposition}[\cite{raskutti2010restricted}]~\label{prop:g-RSC}
	Given $m$ i.i.d. distributed random matrices $\bX_i \in \mathbb{R}^{n \times d}$ drawn from the $\Sigma$-Gaussian ensemble. Let $\bX = [\bX_1^\top, \ldots, \bX_m^\top]^\top \in \mathbb{R}^{N \times d}$, with $N = n \cdot m$. Then, there exist universal constants $c_0, c_1>0$  such that 
	\begin{align}
		\text{(Global RSC/RSM) }\qquad
		\begin{split}\label{eq:G_RE_hp}
			\frac{\| \bX \bu\|^2}{N} & \geq \frac{1}{2} \| \Sigma^{1/2} \bu\|^2 - c_1\zeta \frac{\log d}{N} \|\bu\|_1^2,\\
			\frac{\| \bX \bu\|^2}{N} & \leq 2 \| \Sigma^{1/2} \bu\|^2 + c_1 \zeta\frac{\log d }{N} \|\bu\|_1^2, 
		\end{split} \quad \forall \ \bu \in \real^d;
	\end{align}
	with probability greater than $1 - \exp(-c_0 N)$, where $\zeta = \max_{j} \Sigma_{jj}$.
\end{proposition}

By slightly modifying the proof of~\citep[Theorem 1]{raskutti2010restricted}, provided in~\ref{App:proof_prop:LRSS}  for completeness, we can obtain the following bound for the local RSM condition. 

\begin{proposition}~\label{prop:LRSS}
	Given a random matrix $\bX_i \in \mathbb{R}^{n \times d}$ drawn from the $\Sigma$-Gaussian ensemble,  there exists universal constants $c_0, c_1>0$ such that
\begin{align}\label{eq:L_RE}
	\frac{\| \bX_i \bu\|^2}{n} \leq 16 m\|\Sigma^{1/2} \bu\|^2 + c_1 \zeta \frac{m \log d}{n} \|\bu\|_1^2 , \quad \forall \ \bu \in \real^d, i \in [m]
\end{align}
with probability greater than $1 - \exp(-c_0 N)$, where $\zeta = \max_{j} \Sigma_{jj}$.  
\end{proposition}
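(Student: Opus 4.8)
The plan is to obtain the local smoothness bound as a soft consequence of the \emph{global} RSM estimate of Proposition~\ref{prop:g-RSC}, exploiting that each agent's quadratic form is deterministically dominated by the aggregate one. Writing $\bX=[\bX_1^\top,\ldots,\bX_m^\top]^\top$, for every $\bu\in\real^d$ we have $\|\bX\bu\|^2=\sum_{j=1}^m\|\bX_j\bu\|^2\geq\|\bX_i\bu\|^2$ for each fixed $i\in[m]$, since every summand is nonnegative. This one observation is what renders the statement uniform in $i$ at no additional probabilistic cost: the target inequality will hold for \emph{all} agents on the very same event on which the global RSM holds, so no union bound over agents is ever needed.

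Concretely, I would first invoke Proposition~\ref{prop:g-RSC}: with probability at least $1-\exp(-c_0N)$, the upper estimate
\begin{equation*}
\frac{\|\bX\bu\|^2}{N}\leq 2\,\|\Sigma^{1/2}\bu\|^2+c_1\zeta\,\frac{\log d}{N}\,\|\bu\|_1^2
\end{equation*}
holds simultaneously for all $\bu\in\real^d$. On this event, fixing any $i\in[m]$, the domination above together with $N=mn$ yields
\begin{equation*}
\frac{\|\bX_i\bu\|^2}{n}\leq\frac{\|\bX\bu\|^2}{n}=m\,\frac{\|\bX\bu\|^2}{N}\leq 2m\,\|\Sigma^{1/2}\bu\|^2+c_1\zeta\,\frac{\log d}{n}\,\|\bu\|_1^2,
\end{equation*}
where I used $m\log d/N=\log d/n$. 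Since $m\geq 1$ gives $2m\leq 16m$ and $\log d/n\leq m\log d/n$, the asserted bound follows immediately (indeed with strictly sharper constants), uniformly over $i\in[m]$ and over $\bu$, on an event of probability at least $1-\exp(-c_0N)$.

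For this route the only thing to check is the trivial domination step, so I expect no real obstacle; the feature that would otherwise look delicate---securing the strong $\exp(-c_0N)$ tail \emph{and} uniformity across all $m$ agents---is entirely absorbed into Proposition~\ref{prop:g-RSC}. It is worth noting why the looser constants $16m$ and $m\log d/n$ appear in the statement: a self-contained, per-matrix argument produces exactly these. Applying the Gaussian-comparison and peeling machinery of~\cite{raskutti2010restricted} to a single $\bX_i$, one must push the deviation level up to order $m$ in order to upgrade the per-matrix failure probability from $\exp(-c_0 n)$ to $\exp(-c_0 mn)=\exp(-c_0N)$; this large-deviation inflation multiplies both the curvature constant and the tail coefficient by a factor $\Theta(m)$, and a subsequent union bound over $i\in[m]$ costs only an additive $\log m$ in the exponent, absorbed into $c_0$. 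The reduction to Proposition~\ref{prop:g-RSC} sketched above bypasses this deviation bookkeeping entirely and is the approach I would adopt.
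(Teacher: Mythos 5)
Your proof is correct, and it takes a genuinely different---and simpler---route than the paper's. The paper proves Proposition~\ref{prop:LRSS} from scratch, per matrix, by adapting the machinery of~\cite{raskutti2010restricted}: a Sudakov--Fernique comparison to bound $\mathbb{E}\sup_{v}\|\bX_i v\|/\sqrt{n}$, Gaussian concentration with the deviation level inflated by a factor of order $\sqrt{m}$ (which is exactly what upgrades the per-matrix tail from $\exp(-c_0 n)$ to $\exp(-c_0 N)$ and is the source of the constants $16m$ and $m\log d/n$), a peeling argument to pass to all $\bu$, and finally a union bound over the $m$ agents whose $\log m$ cost is absorbed into the exponent since $N\geq m$---precisely the mechanism your closing paragraph diagnoses. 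Your reduction instead uses the deterministic row-block identity $\|\bX\bu\|^2=\sum_{j=1}^m\|\bX_j\bu\|^2\geq\|\bX_i\bu\|^2$, so that on the single event of Proposition~\ref{prop:g-RSC},
\begin{equation*}
\frac{\|\bX_i\bu\|^2}{n}\;=\;m\,\frac{\|\bX_i\bu\|^2}{N}\;\leq\;m\,\frac{\|\bX\bu\|^2}{N}\;\leq\;2m\,\|\Sigma^{1/2}\bu\|^2+c_1\zeta\,\frac{\log d}{n}\,\|\bu\|_1^2,
\end{equation*}
simultaneously for all $i\in[m]$ and all $\bu$, which implies \eqref{eq:L_RE} with room to spare. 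Every step here (nonnegativity of the blocks, $N=mn$, monotonicity in $m\geq 1$) is elementary, and uniformity over agents is free, so there is no gap. As for what each approach buys: yours is shorter, gives strictly sharper constants ($2m$ and $\log d/n$ in place of $16m$ and $m\log d/n$), and places the local and global RSM on the \emph{same} event, so the union bound invoked in the proof of Theorem~\ref{theorem:main} runs over fewer events; since all downstream uses (e.g., condition~\eqref{eq:rho_final_condition} and the definition of $\tau_\ell$) only require upper bounds of the stated form, the smaller $\ell_\Sigma$ and $\tau_\ell$ only weaken the hypotheses. The paper's argument, by contrast, is self-contained at the level of a single design matrix and does not presuppose the global bound, which would matter only if one wanted a local RSM statement decoupled from Proposition~\ref{prop:g-RSC}; for the purposes of this paper the two are interchangeable.
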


Next we derive a high probability bound for $	c_{g} = \max_{j \in [m]}\|\nabla \mathcal{L}_{j}(\btheta^{*})\|_{\infty} + \|\nabla \mathcal{L} (\btheta^{*})\|_{\infty}$.  
  \begin{lemma}\label{lem:gradient_bound}
    Consider the linear   model~\eqref{eq:I/O} under Assumption~\ref{assump:rand-design}.  Then
\begin{align}\label{eq:bound_loc_gradient}
    \begin{split}
    \mathbb{P} \left( \max_{j \in [m]}\left\|\frac{\bX_j^\top \mathbf{n}_j}{n} \right\|_\infty \geq  \sqrt{\zeta} \sigma  \cdot \max \left\{ \sqrt{\frac{2}{c_3}\frac{ \log md}{n}},  
    \frac{2}{c_3}\frac{ \log md}{n}\right\}\right) \leq  2\exp \left( -  \log d\right)
    \end{split}
\end{align}
and
  \begin{align}\label{eq:bound_g_gradient}
      \mathbb{P} \left( \left\| \frac{\bX^\top \mathbf{n}}{N}\right\|_\infty \geq \sqrt{\zeta} \sigma   \cdot \max \left\{ \sqrt{\frac{2}{c_3}\frac{ \log d}{N}},  
    \frac{2}{c_3}\frac{ \log d}{N}\right\}\right) \leq  2\exp \left( -  \log d\right)
  \end{align}
  for some $c_3 >0$.
  \end{lemma}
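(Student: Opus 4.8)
The plan is to prove both tail bounds coordinate-wise and then close with a union bound, exploiting that the entries of $\bX_j^\top\mathbf{n}_j/n$ are normalized sums of products of a Gaussian and a sub-Gaussian. Fix an agent $j\in[m]$ and a coordinate $k\in[d]$, and write the $k$-th entry of $\bX_j^\top\mathbf{n}_j/n$ as
\begin{equation*}
Z_{jk} \triangleq \frac{1}{n}\sum_{l=1}^{n} (\bX_j)_{lk}\,(\mathbf{n}_j)_l .
\end{equation*}
Since each row of $\bX_j$ is drawn from $\mathcal{N}(\mathbf{0},\Sigma)$, the entry $(\bX_j)_{lk}$ is zero-mean Gaussian with variance $\Sigma_{kk}\le \zeta=\max_j\Sigma_{jj}$, hence sub-Gaussian with parameter $\sqrt{\zeta}$; and $(\mathbf{n}_j)_l$ is $\sigma^2$-sub-Gaussian and independent of $\bX_j$. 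Thus $Z_{jk}$ is a $1/n$-normalized sum of $n$ independent, zero-mean summands, each the product of two independent sub-Gaussian variables.

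The next step is to invoke the standard fact that the product of two sub-Gaussian random variables is sub-exponential, with sub-exponential norm bounded (up to a universal constant) by the product of the two sub-Gaussian parameters. Consequently each summand $W_l\triangleq(\bX_j)_{lk}(\mathbf{n}_j)_l$ has $\norm{W_l}_{\psi_1}\le \nu\triangleq c\sqrt{\zeta}\,\sigma$ for some universal $c>0$. Applying Bernstein's inequality for sums of i.i.d. sub-exponential random variables to $Z_{jk}=\tfrac{1}{n}\sum_{l} W_l$ then yields, for every $t>0$,
\begin{equation*}
\P\bigl(\abs{Z_{jk}}\ge t\bigr)\le 2\exp\!\Bigl(-c_3\, n\,\min\bigl\{t^2/\nu^2,\; t/\nu\bigr\}\Bigr),
\end{equation*}
for a universal constant $c_3>0$ (which absorbs $c$ and $\zeta,\sigma$ into $\nu$).

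Finally, I would evaluate this bound at the threshold $t^\star=\sqrt{\zeta}\,\sigma\,\max\{\sqrt{(2/c_3)(\log md)/n},\,(2/c_3)(\log md)/n\}$. The role of the $\max\{\sqrt{\cdot},\cdot\}$ form is precisely to make the two Bernstein regimes coincide: whether $(\log md)/n$ is below or above the crossover, one checks that $n\min\{(t^\star/\nu)^2,\,t^\star/\nu\}=(2/c_3)\log md$, so the exponent collapses to $-2\log md$ and the per-pair probability is at most $2(md)^{-2}$. A union bound over the $md$ pairs $(j,k)$ gives $\P(\max_{j,k}\abs{Z_{jk}}\ge t^\star)\le 2(md)^{-1}\le 2d^{-1}=2\exp(-\log d)$, which is exactly \eqref{eq:bound_loc_gradient}. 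The bound \eqref{eq:bound_g_gradient} follows by the same argument with $\bX_j^\top\mathbf{n}_j/n$ replaced by $\bX^\top\mathbf{n}/N$: each of the $d$ coordinates is a $1/N$-normalized sum of $N$ i.i.d. sub-exponential terms, so Bernstein at the threshold $\sqrt{\zeta}\,\sigma\,\max\{\sqrt{(2/c_3)(\log d)/N},\,(2/c_3)(\log d)/N\}$ produces an exponent of $-2\log d$ and a per-coordinate probability of at most $2d^{-2}$, and a union bound over the $d$ coordinates gives $2d^{-1}=2\exp(-\log d)$. The only delicate point—and the step I would handle most carefully—is the collapse of the two Bernstein regimes under the $\max\{\sqrt{\cdot},\cdot\}$ choice of $t$, together with tracking that the sub-exponential norm of the product scales as $\sqrt{\zeta}\,\sigma$ so that $\zeta=\max_j\Sigma_{jj}$ enters the thresholds correctly; the remaining manipulations are routine.
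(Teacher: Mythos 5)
Your proposal is correct and follows essentially the same route as the paper's proof: each entry of $\bX_j^\top\mathbf{n}_j/n$ is a normalized sum of independent sub-exponential variables (Gaussian times sub-Gaussian) with $\psi_1$-norm of order $\sqrt{\zeta}\,\sigma$, Bernstein's inequality is evaluated at the $\max\{\sqrt{\cdot},\cdot\}$ threshold so that both regimes yield an exponent of order $\log(md)$ (resp. $\log d$), and a union bound over the $md$ (resp. $d$) coordinates gives the stated $2\exp(-\log d)$ tail. The only difference is organizational: the paper folds the union bound into the Bernstein exponent first and then splits explicitly into the cases $\log(md)\gtrless (c_3/2)n$, which is equivalent to your per-coordinate evaluation at the combined threshold followed by the union bound.
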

   \begin{proof}
   	See~\ref{proof_lem:gradient_bound}. 
   \end{proof}
   According to~\eqref{eq:I/O} we  have $\|\nabla \cL(\btheta^*)\|_\infty = \|\bX^\top \mathbf{n}/N\|_\infty$ and $\| \nabla \cL_j (\btheta^*)\|_\infty = \|\bX_j^\top \mathbf{n}_j/n\|_\infty$. Using Lemma~\ref{lem:gradient_bound} we can readily obtain the following bound for $c_g^2$.
  \begin{corollary}\label{Cor_upper_bound_cg}
 Reintate the conditions  of Lemma~\ref{lem:gradient_bound}; there holds: 
  \begin{align}\label{eq:c_g_prob_bound}
      c_g^2 & \leq 4 \zeta \sigma^2 \left( \frac{2}{c_3} \frac{\log md}{n}   + \left( \frac{2}{c_3} \frac{\log md}{n}  \right)^2 + \frac{2}{c_3} \frac{\log d }{N}+  \left(\frac{2}{c_3} \frac{\log d}{N} \right)^2\right)\nonumber\\
      & \leq 4 \zeta \sigma^2 \left( \frac{4}{c_3} \frac{m\log md}{N}   + \left( \frac{4}{c_3} \frac{m\log md}{N}  \right)^2 \right),
  \end{align}
  with probability greater than $1 - 4 \exp{(-c_4 \log d)}$, for some $c_4>0$.
  \end{corollary}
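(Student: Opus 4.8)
The plan is to derive~\eqref{eq:c_g_prob_bound} as a direct consequence of Lemma~\ref{lem:gradient_bound}, combined with the gradient identities $\|\nabla \cL(\btheta^*)\|_\infty = \|\bX^\top \mathbf{n}/N\|_\infty$ and $\|\nabla \cL_j(\btheta^*)\|_\infty = \|\bX_j^\top \mathbf{n}_j/n\|_\infty$ recorded just before the statement. First I would invoke the two tail bounds of Lemma~\ref{lem:gradient_bound}: the first controls $\max_{j}\|\bX_j^\top \mathbf{n}_j/n\|_\infty$ and the second controls $\|\bX^\top\mathbf{n}/N\|_\infty$, each failing with probability at most $2\exp(-\log d)$. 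A union bound over these two events shows that, with probability at least $1 - 4\exp(-\log d)$, both inequalities hold simultaneously; this is the source of the probability $1 - 4\exp(-c_4\log d)$ in the statement, with $c_4 = 1$ suffices.

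On this good event, I would write $c_g = \max_j\|\bX_j^\top \mathbf{n}_j/n\|_\infty + \|\bX^\top\mathbf{n}/N\|_\infty \le A + B$, where $A$ and $B$ denote the two $\max$-expressions appearing on the right-hand sides of Lemma~\ref{lem:gradient_bound}. Squaring and using $(A+B)^2\le 2(A^2+B^2)$ reduces everything to a deterministic computation. Using $\max\{a,b\}^2 = \max\{a^2,b^2\}\le a^2+b^2$ to expand each square, one obtains $A^2\le \zeta\sigma^2(\tfrac{2}{c_3}\tfrac{\log md}{n}+(\tfrac{2}{c_3}\tfrac{\log md}{n})^2)$ and the analogous bound for $B^2$ with $\tfrac{\log d}{N}$ in place of $\tfrac{\log md}{n}$. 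Collecting the four resulting terms yields the first inequality in~\eqref{eq:c_g_prob_bound}, where the sharp factor $2$ is loosened to the stated constant $4$.

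For the second inequality I would use $N = n\cdot m$ to bring the local and global scales to a common one: exactly $\tfrac{\log md}{n} = \tfrac{m\log md}{N}$, while $\tfrac{\log d}{N}\le \tfrac{m\log md}{N}$ since $m\ge1$ and $\log d\le\log md$. Substituting these two comparisons bounds the two linear terms by $\tfrac{4}{c_3}\tfrac{m\log md}{N}$ and the two quadratic terms by $(\tfrac{4}{c_3}\tfrac{m\log md}{N})^2$, which is precisely the claimed form.

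The computation is essentially bookkeeping, so there is no deep obstacle here; the only points requiring care are (i) applying the union bound so that both tail events are simultaneously excluded, giving the factor $4$ in the failure probability, and (ii) tracking constants through the squaring step and the $\max$-to-sum expansion, where the deliberate use of the non-tight constant $4$ (rather than the sharper $2$) absorbs the slack and keeps the accounting clean.
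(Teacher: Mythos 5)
Your proposal is correct and follows essentially the same route as the paper, whose own proof simply notes that the bound follows ``readily'' from Lemma~\ref{lem:gradient_bound} via the gradient identities $\|\nabla \cL(\btheta^*)\|_\infty = \|\bX^\top \mathbf{n}/N\|_\infty$ and $\|\nabla \cL_j(\btheta^*)\|_\infty = \|\bX_j^\top \mathbf{n}_j/n\|_\infty$. You have merely filled in the routine bookkeeping the paper omits (union bound over the two tail events, $(A+B)^2 \le 2(A^2+B^2)$, the $\max$-to-sum expansion, and the $N = nm$ substitution with $\log d \le \log md$), all of which is accurate.
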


  Leveraging  Theorem~\ref{thm:convergence_deterministic_1} with   the parameters  $(\mu_\Sigma,L_{\Sigma}, \ell_\Sigma)$,  $(\tau_{\mu},\tau_{g}, \tau_\ell)$, and   $c_g$ guaranteed by Proposition~\ref{prop:g-RSC},   Proposition~\ref{prop:LRSS}, and   Corollary~\ref{Cor_upper_bound_cg} we are ready to prove Theorem~\ref{theorem:main}. 
 
    Specifically, 
 let \begin{align}\label{eq:global_RSC-RSM_high-prob}\begin{split}
  &  \mu_{\Sigma}=\frac{1}{2} \sigma_{\min}(\Sigma)\quad  \text{and} \quad \tau_{\mu}   =c_1 \zeta \frac{\log d}{N} \quad (\text{for the RSC)},\\ & L_{\Sigma}=  2 \sigma_{\max}(\Sigma)\quad \text{and} \quad   \tau_g   =c_1 \zeta \frac{\log d}{N}  \quad (\text{for the RSM)},\end{split}
\end{align} and \begin{align}\label{eq:locsl_RSM_high-prob}
  &  \ell_{\Sigma}=16 m  \sigma_{\max}(\Sigma)\quad  \text{and} \quad \tau_{\ell}   = c_1 \zeta m^2 \frac{\log d}{N}  \quad (\text{for the local RSM)};  
\end{align} 
Propositions~\ref{prop:g-RSC} \&~\ref{prop:LRSS} and   Corollary~\ref{Cor_upper_bound_cg} guarantee  that, with the above choices for  $(\mu_\Sigma,L_{\Sigma}, \ell_\Sigma)$ and  $(\tau_{\mu},\tau_{g}, \tau_\ell)$, the global RSC and RSM conditions (cf. Assumption~\ref{assump:G_RSM}), the local RSM property (cf. Assumption~\ref{assump:L_RSS}), and the bound (\ref{eq:c_g_prob_bound}) on $c_g$  all hold 
with probability at least $1 - c_8\exp(-c_9 \log d)$, where we invoked the union bound and used  $\log d \leq s \log d \leq c_5 \frac{\mu_\Sigma}{\zeta} N \leq c_5 N$, due to $\frac{\zeta}{\mu_{\Sigma}} \frac{s \log d}{N}< c_5$. 

The rest of the proof is to (i) show ~\eqref{eq:cond_rho_hig-prob-setting} implies \eqref{cond_mu_sigma}-\eqref{eq:rho_final_condition}   in Theorem~\ref{thm:convergence_deterministic_1} and (ii) simplify    the expressions of the  rate expression~\eqref{eq:rate_expression_final}, the   statistical error~\eqref{eq:Delta_stat_final}, and $\B$ in (\ref{eq:B_final}) using \eqref{eq:global_RSC-RSM_high-prob}-\eqref{eq:locsl_RSM_high-prob}. The calculation is left to~\ref{pf:thm-hp}.


 
 \section{Numerical Results}\label{sec:numerical_result}

This section provides some numerical results that validate  our theoretical findings. 

We consider the following problem setup. Given the  distributed linear regression model  \eqref{eq:I/O},  the unknown $s$-sparse vector  $\btheta^*$ is generated with the first $s$ elements being  i.i.d.  $\cN(0,1)$; the remaining coordinates are set to zero. Each row of $\bX_i \in \mathbb{R}^{n \times d}$ is generated according to $\cN(\mathbf{0},\mathbf{I})$; each element of $\mathbf{n}_i$ follows $\cN(0,0.25)$; and $r$ is set $r = \|\btheta^*\|_1.$
Unless otherwise specified, we   the communication network is modeled as  a   Erd\H{o}s-R\'{e}nyi graph $G(m,p)$, where $m$ is the number of nodes  and each edge is included in the graph  with probability $p$, independently  from the others. The specific values of problem parameters $(N,d,s)$ and network parameters $(m,p)$ are specified in each simulation.

Aiming at validating Theorem~\ref{theorem:main}, the reported   experiments show   the following: (i) the scalability of the algorithm with respect to the total sample size $N$ and problem dimension $d$, for a fixed network; and (ii) the impact of network topology on the algorithm. Finally, we validate our findings on real data sets. 

\subsection{Scalability with respect to problem parameters for a fixed network}\label{sec:prob-scalability} 
Theorem~\ref{theorem:main} states that, for a fixed, sufficiently connected network, the average optimization error of  DGT among all agents decreases  at a linear, up to a tolerance  depending on the statistical precision $\|\htheta - \btheta^*\|$. Moreover, the expression of the convergence rate and the tolerance term depend exclusively on the ratio $\alpha = \frac{s \log d}{N}$. To validate numerically these theoretical findings, we conduct the following two simulations over an Erd\H{o}s-R\'{e}nyi graph  with $m = 50$ nodes  and edge activation probability $p = 0.5$. The reported results are the average of 100  trials with the same graph and independently randomly generated data sets.

\smallskip

\noindent\textbf{(i)} \textbf{Fixed    $\alpha$ and increasing dimension $d$:} We set $\alpha \approx 0.2$ and consider three sets of problem parameters, namely:  $d = 5000, 10000,20000$ and $s = \lceil\sqrt{d}\rceil$. Consequently,   the number of total samples $N$ and local sample size $n$ is set to  $N = 3050,4700,7100$ and $n = 61,94,142$, respectively. Observe that $N = m \cdot n$ with $m = 50.$  Since the expression of the proximal parameter $\gamma$ may be  conservative in practice, we tune it manually as described next. We first generate one instance of  the LASSO problem with $d = 5000$ and run DGT using the $\gamma$ that yields the fastest convergence rate. This value is selected via  grid search. The same $\gamma$ is used for all problem instances.
Fig.~\ref{fig:rateinvariance} plots the average normalized estimation error $(1/m) \sum_{i = 1}^m \|\btheta_i^t - \btheta^*\|^2 \|\btheta^*\|^{-2}$ versus the number of iterations, for $d = 5000, 10000,20000$. The  dashed-line curves represent the normalized statistical precision $(\|\htheta - \btheta^*\|^2\|\btheta^*\|^{-2}$ for the three sets of problems. The figure shows that, in all cases, the estimation error  decays at a linear rate before reaching the statistical precision. Further,    the rate is invariant   to the change of  $d$ as long as the ratio $\alpha$ remains fixed. 
This is exactly what Theorem \ref{theorem:main} predicts.\smallskip 



\noindent \textbf{(ii)} \textbf{Increasing $\alpha$:} This experiment explores the relation between the convergence rate of  DGT and $\alpha$. We set $d = 20000$, $s= \left\lceil (\log d)/2 \right\rceil$ and consider three  local sample sizes,  $n = 1,5,$ and $25$. Correspondingly, we have  $\alpha \approx \{1,\, 0.2,\, 0.04\}$. For each value of $n$, we choose the proximal parameter $\gamma$ via grid search.  The one that yields fastest convergence rate is kept fixed for all independent trials.  Fig.~\ref{fig:alphavar_single_plot} shows that  the algorithm's performance degrades as $\alpha$ increases--both the convergence rate of DGT and the statistical precision get worse. Note that this behavior is in agreement with  Theorem \ref{theorem:main}.

\begin{figure*}[t!]
    \centering
    \begin{subfigure}[t]{0.5\textwidth}
        \centering
        \resizebox{7cm}{5cm}{\input{constant_alpha.tikz}}
        \caption{Log-normalized error for $\alpha \approx 0.2.$ In these problems $d \in \{5000,10000,20000\},$ $s = \lceil \sqrt{d} \rceil,$  $n \in \{61,94,142\}.$ \label{fig:rateinvariance}}
    \end{subfigure}%
    ~\
    \begin{subfigure}[t]{0.5\textwidth}
        \centering
        \resizebox{7cm}{5cm}{\input{changing_alpha.tikz}}
        \caption{Log-normalized error for  different values of $\alpha$. In these problems $d = 20000,$ $s = \lceil{\log(d)/2}\rceil,$ $m=50$ and  $\alpha \in \{0.04,0.2,1\}.$ \label{fig:alphavar_single_plot}}
    \end{subfigure}
    \caption{Scalability of DGT for fixed networks.  The network is composed of $m = 50$ agents with connectivity $\rho = 0.0638$. Dashed lines indicate estimation error if using $\htheta$.}\vspace{-0.2cm}
\end{figure*}


\subsection{Impact of network scale and connectivity}\label{sec:net-scalability} 
This set of simulations investigates the scalability of DGT with respect to network parameters $(m, \rho)$, for fixed problem parameters $(N,d,s)$. In particular, 
Theorem~\ref{theorem:main} established the convergence of DGT under condition $\rho \lesssim m^{-8} \kappa^{-4}$, which requires better connectivity for a network involving more agents. To test the necessity of this condition, we  run DGT on  networks with increasing size $m = 50, 625, 1250, 2500$. Correspondingly  we varied the edge connectivity probability $p = 0.87, 0.4, 0.23, 0.15$ in order to keep   $\rho$ roughly constant, equal to  $\approx 0.18$.
The problem parameters are set to be $N = 2500$, $d = 5000$, $s = \lceil \sqrt{d} \rceil$. The step size is chosen via grid search and set to be the one yielding the fastest convergence. All curves reported for this simulation in one trial are conducted  on the same problem instance for different network configurations. The result is averaged over $100$ trials and is reported in Fig~\ref{fig:impossibility}. Observe that  keeping the graph's connectivity constant while increasing the number of agents leads to a degradation of the achievable    rate and estimation error.

Additionally, we empirically demonstrate that for Erd\H{o}s-R\'{e}nyi graphs with connectivity  $\rho = \mathcal{O}(1/\sqrt{m}))$--achieved  by fixing the network connectivity probability $p$--DGT converges  under a growing network size. The result  is reported in Fig.~\ref{fig:empirical_rho}. The simulation is conducted under the same setting as Fig.~\ref{fig:impossibility} except for  $p$   set to be $0.87$. 
The simulation suggests that the convergence condition $\rho \lesssim m^{-8} \kappa^{-4}$ might be  more restrictive than what is needed in practice, which could be  an artifact of our  analysis. 

\begin{figure}
\centering
\begin{subfigure}{.5\textwidth}
  \centering
  \resizebox{7cm}{5cm}{\input{fix_connectivity.tikz}}
  \caption{ Erd\H{o}s-R\'{e}nyi graphs with $\rho \approx 0.18.$   \label{fig:impossibility}}
\end{subfigure}%
\begin{subfigure}{.5\textwidth}
  \centering
  \resizebox{7cm}{5cm}{\input{correct_scaling.tikz}}
  \caption{ Erd\H{o}s-R\'{e}nyi graphs with link probability 0.87.}\label{fig:empirical_rho}
\end{subfigure}
\caption{Plot of the log-normalized estimation error for $m \in \{50, 625, 1250, 2500\}.$ In this problem $d = 5000,$ $s = \lceil \sqrt{d} \rceil,$ $n \in \{50,4,2,1\}$ and $N = 2500.$ All curves in each panel are obtained using the same data points. Black horizontal lines indicate the normalized estimation error if using $\htheta.$ }
\label{fig:net_study}\vspace{-0.6cm}
\end{figure}

{}{To conclude the assessment of the algorithm performance as function of the network connectivity parameter $\rho$, in  Table~\ref{table:scalability}, we report  the number of communication rounds required per iteration to satisfy the condition $\rho^K \leq m^{-8}.$ Observe that the number remains approximately constant for Erd\H{o}s-R\'{enyi} graphs while increasing dramatically for   line-graphs. Similarly, in Fig.~\ref{fig:twograph}, we display the number of total communication rounds required to solve a problem instance with $d = 5000,$ $s = \lceil\sqrt{d}\rceil,$ $m \in \{50,625,1250,2500\}$, and $n \in \{50,4,2,1\}.$ The results are consistent with those displayed in Table~\ref{table:scalability} and demonstrate that the quantity $\rho$ is predictive of the number of communication rounds required for convergence.}  \vspace{-0.5cm}

\begin{figure}[t!]
    \centering
    \begin{tabular}[width = 0.9\linewidth]{|c|c|c|}
\hline
     &  Erd\H{o}s-R\'{e}nyi ($p = 0.87$) & Line graph \\ \hline
     m = 50 & 18 &  23775  \\ 
     m = 625 &18 & 6115118\\
     m = 1250 & 18 & 27094153 \\
     m = 2500 & 19 & 118911225 \\
\hline
\end{tabular}
    \caption{Number of communication rounds required for each network to satisfy $\rho^k \leq m^{-8}$.}
    \label{table:scalability}
\end{figure}

\begin{figure}[t!]
\centering
\begin{subfigure}{.49\textwidth}
\centering
\scalebox{0.5}{\input{random.tikz}}
\caption{\label{fig:ergos_scalability}Erd\H{o}s-R\'{e}nyi graphs.}
\end{subfigure}
\begin{subfigure}{0.49\textwidth}%
\centering
\scalebox{0.5}{\input{line.tikz}}
\caption{\label{fig:line_scalability} Line graphs.}
\end{subfigure}
\caption{\label{fig:twograph} Plots of the log-normalized estimation error   for  Erd\H{o}s-R\'{e}nyi and line graphs, with $m \in \{50,625,1250,2500\}$. In this problem, $d = 5000,$ $s = \lceil \sqrt{d} \rceil,$ $n \in \{50,4,2,1\}$ and $N = 2500.$}\vspace{-0.4cm}
\end{figure}

{}{\subsection{Comparison with other distributed algorithms}
We test the performance of DGT when compared to other decentralized algorithms with known guarantees in the high-dimensional regime. Specifically,  we consider DGD-CTA \citep{Ji-DGD21}, DGD-ATC \citep{ji2023distributed}, and DGD$^2$ proposed in our companion work \citep{maros2022dgd}.  DGD-ATC, and DGD$^2$ are known to achieve centralized statistical precision at a rate  comparable that of the   centralized PGD, provided the network is sufficiently connected. Conversely, DGD-CTA tends to slow down when aiming for centralized statistical precision. Given that, except for DGD-CTA, all algorithms can potentially perform equivalently in terms of iterations--contingent on meeting specific network connectivity requirements--we focus on the number of communication rounds required to reach centralized statistical precision rather than just the total iteration count.  For this analysis, we create problem instances with  $d \in \{400,2000,4000,20000\},$ $s \in \{5,4,7,4\},$ and $N \in \{240,240,480,360\}.$ This yields $\alpha \approx 0.11$ for all instances. We solve each problem instance up to statistical precision using the centralized PGD and record the total number of performed iterations $T_{\mathrm{cent}}$ for each problem. In the decentralized setting, we generate a base mixing matrix $W$ with $m = 120$ and $p = 0.02,$ which results in a  poorly connected  base mixing matrix. For DGD-ATC, DGD$^2$ and DGT we select  a mixing matrix $\bar{W} = W^K$ with the smallest possible $K$ that allows the algorithms to achieve centralized statistical precision in at most $2T_{\mathrm{cent}}$ iterations. For DGD-ATC, DGD$^2$ and DGT we measure the total   communication rounds as iterations $\times$ K. For DGD-CTA we set the network to be fully connected with the largest possible step-size that yields convergence to the order of centralized statistical precision and record the total number of network uses, which in this case also corresponds to the number of iterations. The results of this experiment are reported in Fig.~\ref{fig:comm_cost}.}
\begin{figure}
\begin{center}
\includegraphics[width = 0.6\textwidth]{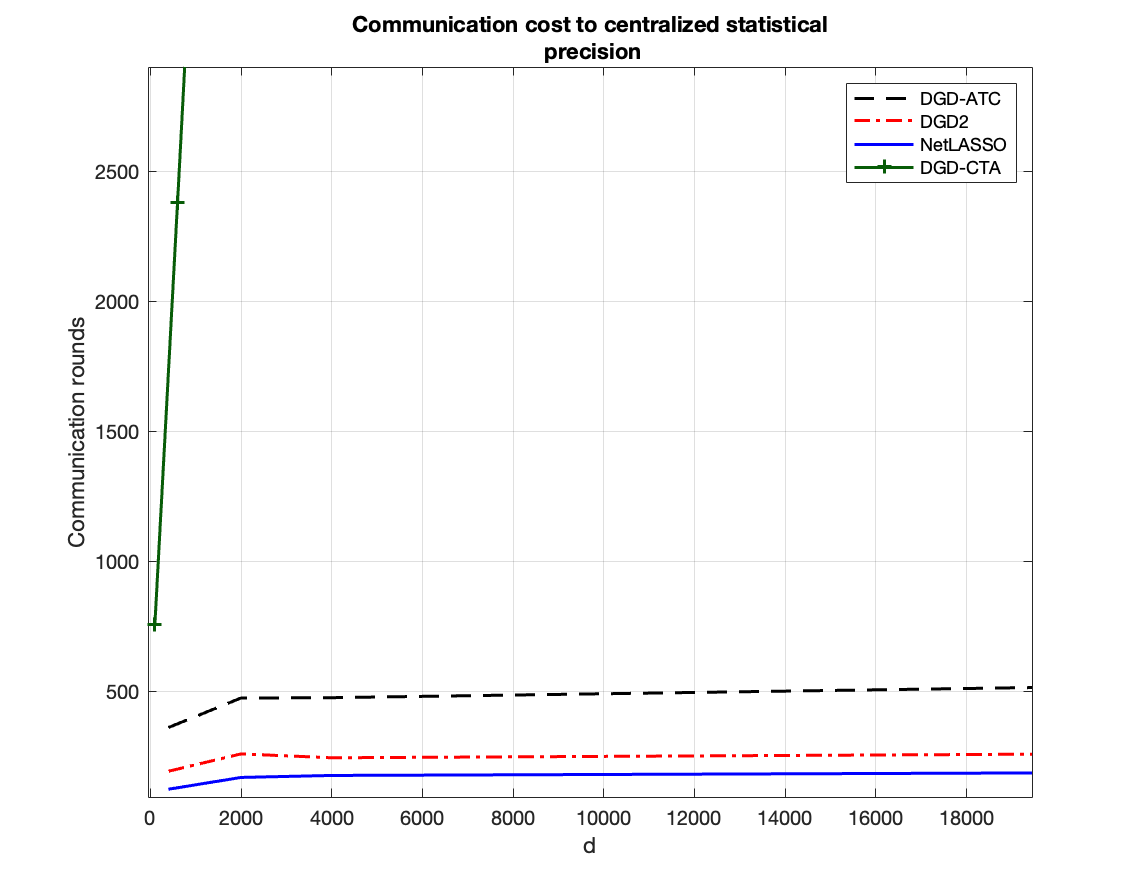}
\end{center}
\caption{\label{fig:comm_cost} Number of communication rounds for $m = 120,$ $d \in \{400,2000,4000,20000\},$ $s \in \{5,4,7,4\},$ and $N \in \{240, 240, 480, 360\}.$ All curves are obtained using the same data points and consist of 20 averages each.}\vspace{-0.7cm}
\end{figure}
{}{We observe that despite employing a fully connected network, DGD-CTA requires the most communication rounds of all schemes. This is because the network connectivity does not significantly aid the performance of DGD-CTA. Meanwhile, we observe that for DGD-ATC the number of communication rounds   increases moderately with the dimension $d$ while for DGD$^2$ and DGT the number of communication rounds remains invariant as the problem dimension increases. The statistical error achieved by   DGD$^2$ and DGT while of similar order is beneficial towards DGT.}

\subsection{Simulations on real data}
We test the DGT on the \texttt{Communities and Crime} data set (UC Irvine Machine Learning Repository) where we have removed data points with missing attributes (covariates), and removed attributes corresponding to community number/name or zip-code (attributes 1 to 4 in the data set) yielding a regression problem with $d = 123$ and total sample size of 123 which we have split into $N_{\mathrm{train}} = 82$ and $N_{\mathrm{test} = 41}.$ For the decentralized implementation, we have a network of $m=41$ agents corresponding to $n=2$ samples per agent. The agents communicate via a network built upon an Erd\H{o}s-R\'{e}nyi where the probability of an edge being present is set to $p = 0.5.$ Because the resulting graph has connectivity $\bar{\rho} = 0.5236$ we have agents perform three communication rounds per gradient computation yielding an effective connectivity $\rho = 0.1435 \approx \frac{1}{\sqrt{m}}$ which corresponds to the connectivity that is sufficient to yield centralized performance according to numerical simulations (c.f. Figure~\ref{fig:empirical_rho}). Observe that our theoretical results yield more conservative values for $\rho$; consequently, employing the connectivity predicted by the theory can only yield improved results. Our results are reported in Figure~\ref{fig:real_data} where the performance of the centralized projected gradient descent (PGD) is reported to the left, and the performance of the DGT is reported on the right panel. Denote by $(X_{\mathrm{train}},y_{\mathrm{train}})$ the data pairs included in the training data set, and analogously by $(X_{\mathrm{test}},y_{\mathrm{test}})$ the data pairs included in the testing data set. The training error is measured as $\frac{1}{2N_{\mathrm{train}}}\|X_{\mathrm{train}}\theta^t-y_{\mathrm{train}}\|^2_2$ where $\theta^t$ are iterates of PGD generated by minimizing the function $\frac{1}{2N_{\mathrm{train}}}\|X_{\mathrm{train}}\theta-y_{\mathrm{train}}\|^2$ over the set $\|\theta\|_1 \leq 0.85$ using step-size 0.09. The testing error is measured by $\frac{1}{2N_{\mathrm{test}}}\|X_{\mathrm{test}}\theta^t - y_{\mathrm{test}}\|^2.$ The right panel reports the performance of DGT with step-size set to be $0.05$. The training and testing errors are measured by 
$\frac{1}{m}\sum_{i=1}^m\frac{1}{2N_{train}}\|X_{\mathrm{train}}\theta_i^t-y_{\mathrm{train}}\|^2$ and $\frac{1}{m} \sum_{i=1}^m \|X_{\mathrm{test}}\theta_i^t - y_{\mathrm{test}}\|^2$,  respectively. {}{We report in the right panel the difference between the centralized and decentralized train and test errors respectively.} We conclude that the performance of centralized and decentralized schemes is practically identical (thanks to multiple but finite consensus rounds), both in training and testing errors.

\begin{figure}
\centering
\includegraphics[width = \textwidth]{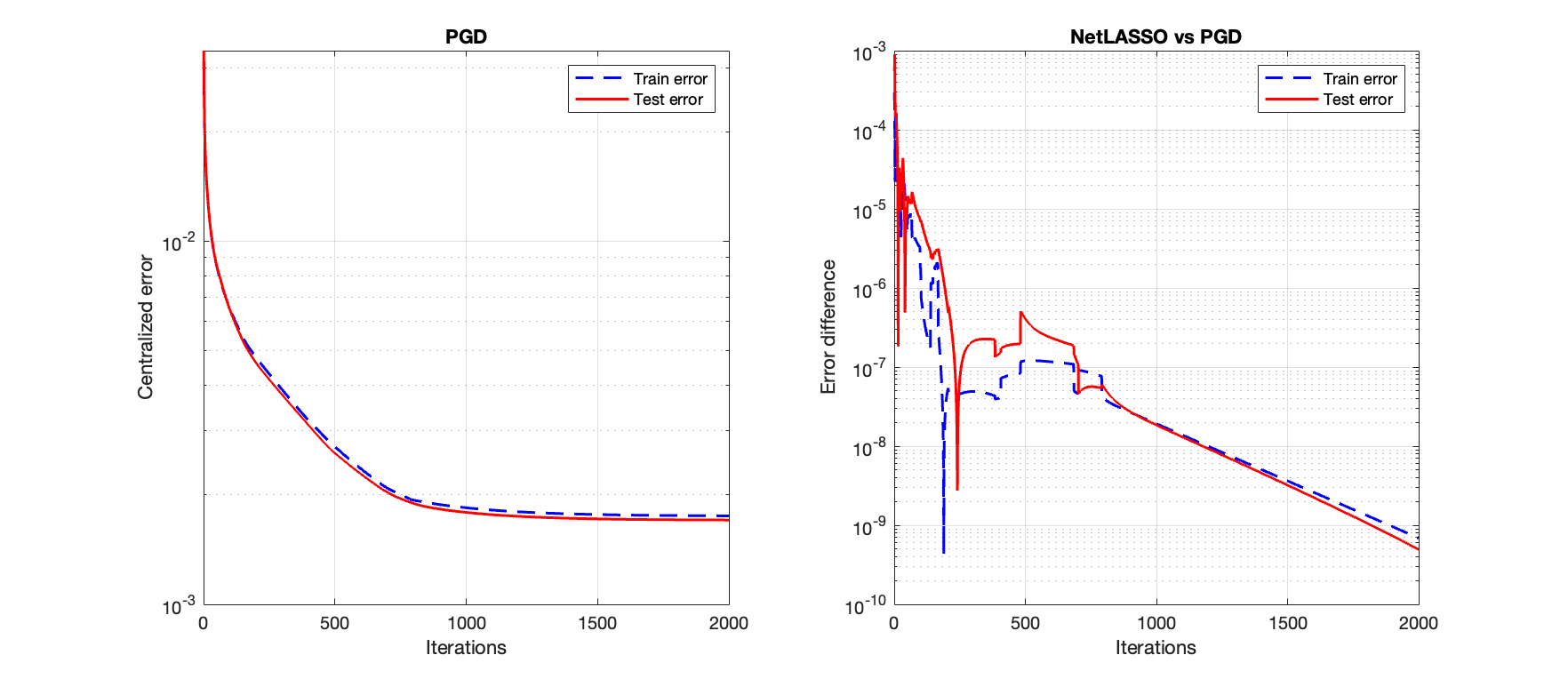}
\caption{\label{fig:real_data}Plots of train and test error on the \texttt{Communities and Crime} data set for centralized projected gradient descent (PGD) (left) and DGT (right). }
\end{figure}

    


\section{Conclusion}\label{sec:conclusions}

We studied sparse linear regression over mesh networks. We established statistical and computational guarantees in  high-dimensions for DGT, a decentralization of the PGD based on gradient tracking.  We proved that (near) optimal sample complexity  $\mathcal{O}(s\log d/N)$  for the distributed estimator is achievable over networks, even when the local sample size is not sufficient for statistical consistency. Convergence at a linear rate was proved, which is of the same order as that of PGD. This improves on   DGD-like methods \citep{Ji-DGD21,ji2023distributed}, which instead suffer from the speed-accuracy dilemma. 

Our work, together with the study of DGD~\citep{Ji-DGD21,ji2023distributed} and DGD$^2$~\citep{maros2022dgd}, reveal the (non)-convergence of decentralized algorithms is significantly affected by the gradient estimator used in the local optimization step. A future investigation will be the study of other distributed methods; of particular interest is understanding the role of other   forms of gradient corrections towards statistical, computation, and communication tradeoffs.

\appendix

\section{Proof of Proposition~\ref{eq:inexact_opt}}\label{pf:inexact_opt}
First note that the optimality of $\btheta_i^{t+\frac{1}{2}}$ implies:
\begin{equation}\label{eq:FOC_loc_opt_regularize}
	\Big(\btheta - \btheta_i^{t+\frac{1}{2}} \Big)^\top \left(\bg_i^t + \gamma \Delta \btheta_i^t \right) \geq 0, \quad \forall \ \|\btheta\|_1 \leq r,
\end{equation}
where we recall $\Delta \btheta_i^t \triangleq \btheta_i^{t + \frac{1}{2}} - \btheta_i^t$. 

We use now the global RSC/RSM property of $\cL$ (cf. Assumption~\ref{assump:G_RSM}) to compute the reduction of optimality gap $ \cL (\btheta) - \cL (\htheta)$ by executing step~\eqref{eq:loc_opt}. 

\begin{align*}
	\begin{split}
		\cL (\btheta_i^{t+\frac{1}{2}})& 
		=  \cL (\btheta_i^t)  +   \nabla \cL (\btheta_i^t)^\top (\btheta_i^{t + \frac{1}{2}} - \btheta_i^{t}) + \frac{1}{2 } (\btheta_i^{t + \frac{1}{2}} - \btheta_i^{t})^\top \left(\frac{\bX^\top \bX}{N}\right) (\btheta_i^{t + \frac{1}{2}} - \btheta_i^{t}) \\
		\stackrel{(RSM)}{\leq} & \cL (\btheta_i^t)  +   \nabla \cL (\btheta_i^t)^\top (\btheta_i^{t + \frac{1}{2}} - \btheta_i^{t}) + \frac{L_\Sigma}{2} \| \btheta_i^{t + \frac{1}{2}} - \btheta_i^{t} \|^2 +  \frac{\tau_g}{2}\| \btheta_i^{t + \frac{1}{2}} - \btheta_i^{t} \|_1^2\\
		= &\cL (\htheta) + \nabla \cL (\btheta_i^t)^\top (\btheta_i^{t + \frac{1}{2}} - \htheta) - \frac{1}{2} ( \btheta_i^{t} - \htheta )^\top \left( \frac{\bX^\top \bX}{N} \right) ( \btheta_i^{t} - \htheta ) \\
		 & +  \frac{L_\Sigma}{2} \| \btheta_i^{t + \frac{1}{2}} - \btheta_i^{t} \|^2 +  \frac{\tau_g}{2} \| \btheta_i^{t + \frac{1}{2}} - \btheta_i^{t} \|_1^2 \\
		\stackrel{(RSC)}{\leq}  & \cL (\htheta) + \nabla \cL (\btheta_i^t)^\top (\btheta_i^{t + \frac{1}{2}} - \htheta) - \frac{\mu_\Sigma}{2} \| \btheta_i^{t} - \htheta\|^2  +  \frac{L_\Sigma}{2} \| \btheta_i^{t + \frac{1}{2}} - \btheta_i^{t} \|^2  \\
		& + \frac{\tau_\mu}{2} \| \btheta_i^{t} - \htheta\|_1^2 +  \frac{\tau_g}{2}\| \btheta_i^{t + \frac{1}{2}} - \btheta_i^{t} \|_1^2.
	\end{split}
\end{align*}
Letting $\btheta = \htheta$ in~\eqref{eq:FOC_loc_opt_regularize} and adding to the above inequality we have
\begin{align*}
	\begin{split}
		\cL (\btheta_i^{t+\frac{1}{2}})  
		\leq & \cL (\htheta) - \frac{\gamma}{2} \left( \|\btheta_i^{t+\frac{1}{2}} -  \htheta \|^2 - \|\btheta_i^{t} -  \htheta\|^2 + \| \btheta_i^{t + \frac{1}{2}} - \btheta_i^{t}\|^2\right)+ ( \nabla \cL (\btheta_i^t) - \bg_i^t)^\top( \btheta_i^{t + \frac{1}{2}} - \htheta)\\
		& - \frac{\mu_\Sigma}{2} \| \btheta_i^{t} - \htheta\|^2  +  \frac{L_\Sigma}{2} \| \btheta_i^{t + \frac{1}{2}} - \btheta_i^{t} \|^2   + \frac{\tau_\mu}{2} \| \btheta_i^{t} - \htheta\|_1^2  + \frac{\tau_g}{2} \| \btheta_i^{t + \frac{1}{2}} - \btheta_i^{t} \|_1^2.
	\end{split}
\end{align*}
Now applying the triangle inequality to the last term and Lemma~\ref{lem:norm_bound} yields:
\begin{align*}
		& \cL (\btheta_i^{t+\frac{1}{2}}) \\
		\leq & \cL (\htheta) - \frac{\gamma}{2} \left( \|\btheta_i^{t+\frac{1}{2}} -  \htheta \|^2 - \|\btheta_i^{t} -  \htheta\|^2 + \| \btheta_i^{t + \frac{1}{2}} - \btheta_i^{t}\|^2\right)+ ( \nabla \cL (\btheta_i^t) - \bg_i^t)^\top( \btheta_i^{t + \frac{1}{2}} - \htheta )\\
		& - \frac{\mu_\Sigma}{2} \| \btheta_i^{t} - \htheta\|^2  +  \frac{L_\Sigma}{2} \| \btheta_i^{t + \frac{1}{2}} - \btheta_i^{t} \|^2  + \frac{\tau_\mu}{2} (8 s \| \btheta_i^{t} - \htheta\|^2 + 2 \nu^2)\\
		& 
		+ \frac{\tau_g}{2} (16 s \| \btheta_i^{t + \frac{1}{2}} - \htheta \|^2  + 16s \|  \btheta_i^{t} - \htheta\|^2 + 8 \nu^2)\\
		\leq & \cL (\htheta) - \frac{\gamma}{2} \left( \|\btheta_i^{t+\frac{1}{2}} -  \htheta \|^2 - \|\btheta_i^{t} -  \htheta\|^2 + \| \btheta_i^{t + \frac{1}{2}} - \btheta_i^{t}\|^2\right)+ ( \nabla \cL (\btheta_i^t) - \bg_i^t)^\top( \btheta_i^{t + \frac{1}{2}} - \htheta )\\
		& - \frac{\mu_\Sigma}{2} \| \btheta_i^{t} - \htheta\|^2  +  \frac{L_\Sigma}{2} \| \btheta_i^{t + \frac{1}{2}} - \btheta_i^{t} \|^2   + 8 s \tau_g \| \btheta_i^{t + \frac{1}{2}} - \htheta \|^2  + (4 s \tau_\mu + 8 s \tau_g) \| \btheta_i^{t} - \htheta\|^2 \\
		& + ( \tau_\mu + 4\tau_g )\nu^2.
\end{align*}
Rearranging terms and using the fact that $\cL (\btheta_i^{t+\frac{1}{2}}) \geq \cL (\htheta)$, yields
\begin{multline}\label{eq:decent_w_err}
		\left( 1 -\frac{C_1 s \tau_g}{\gamma}  \right)  \| \btheta_i^{t + \frac{1}{2}} - \htheta \|^2 \leq  \left( 1 - \frac{\mu_\Sigma}{\gamma} +  \frac{C_1 s (\tau_\mu + \tau_g)}{\gamma} \right) \| \btheta_i^{t} - \htheta\|^2 \\
	 - \left( 1 - \frac{L_\Sigma}{\gamma}\right)\| \btheta_i^{t + \frac{1}{2}} - \btheta_i^{t} \|^2 + \frac{2}{\gamma} \cdot ( \nabla \cL (\btheta_i^t) - \bg_i^t)^\top( \btheta_i^{t + \frac{1}{2}} - \htheta ) 
	 +  \frac{C_1 (\tau_\mu + \tau_g)}{\gamma} \nu^2.
	\end{multline}
Summing over 	$i=1,\ldots,m$, 
	completes the proof. \hfill $\square$

\section{Proof of Proposition~\ref{prop:track_error}}~\label{sec:pf_track_error}
The difference $\frac{1}{m} \sum_{j=1}^m \nabla \cL_j(\btheta_i^t ) - \frac{1}{m} \sum_{j=1}^m \nabla \cL_j (\btheta_j^t)$ can be  expressed  as
	\begin{align*}
	\begin{split}
	&\frac{1}{m} \sum_{j=1}^m \nabla \cL_j(\btheta_i^t ) - \frac{1}{m} \sum_{j=1}^m \nabla \cL_j (\btheta_j^t)
	=  \frac{1}{N} \bX^\top \bX (\btheta_i^t  - \bar{\btheta}^t) + \frac{1}{m} \sum_{j=1}^m \frac{1}{n} \bX_j^\top \bX_j   (\bar{\btheta}^t - \btheta_j^t).
	\end{split}
	\end{align*}
		Substituting into the expression of $\delta^t$ we can split it into three error terms:
	
	\begin{align}\label{eq:delta_3_terms}
	\begin{split}
	\delta^t 
	= 
	&  \underbrace{\sum_{i=1}^m   ({\btheta}_i^{t +\frac{1}{2}} - \widehat{\btheta})^\top \Big(\frac{1}{N} \bX^\top \bX \Big) (\btheta_i^t  - \bar{\btheta}^t)}_{T_1}  + \underbrace{\frac{1}{m} \sum_{i=1}^m \sum_{j=1}^m ({\btheta}_i^{t +\frac{1}{2}} - \widehat{\btheta})^\top \Big( \frac{1}{n} \bX_j^\top \bX_j  \Big) (\bar{\btheta}^t - \btheta_j^t)}_{T_2}\\
	&+ \underbrace{ \sum_{i=1}^m \left(\bar{\bg}^t- \bg_i^t \right)^\top ({\btheta}_i^{t + \frac{1}{2}} - \widehat{\btheta})}_{T_3}.
	\end{split}
	\end{align} 
	We are now ready to bound $\delta^t$. For this, we  substitute the expressions of   $\bTheta_\bot^t$ and $\bG_\bot^t$ [cf. (\ref{eq:update_theta_perp})-(\ref{eq:update_err_y})] into~\eqref{eq:delta_3_terms}. We then use the RSM (cf. Assumption \ref{assump:G_RSM}) to upper bound $\delta^{t}$ by quantities that depend exclusively on the sequences  $\|\bTheta^{t+\frac{1}{2}} - \hTheta\|$ and $\|\Delta \bTheta^t\|.$
	
	\textbf{1) Bounding $T_1$:}  Iterating (\ref{eq:update_theta_perp}) telescopically to $t=0$, yields
	\begin{equation}\bTheta_{\perp}^{t}    = (\bW - \mathbf{J})\bTheta^0_{\perp} + \sum_{s=0}^{t-1}(\bW - \mathbf{J})^{t-s}\Delta \bTheta^s,\quad t=1,2,\ldots, .\label{eq:update_err_theta}\end{equation}
	This shows that $\bTheta_{\perp}^{t}$ lies in the span of $\{\bTheta_{\perp}^{0},\Delta\bTheta^{0},\ldots ,\Delta\bTheta^{t-1}\}$. Therefore, 
\begin{equation}
\label{eq:update_err_y_vector}
\btheta^t_i - \bar{\btheta}^t = \sum_{j=1}^m \ell_{ij}^{(t)}(\btheta_{j,\perp}^{0}) + \sum_{j=1}^m\sum_{s=0}^{t-1}\ell_{ij}^{(t-s)}\Delta \btheta^s_{j}, \quad ,\quad t=1,2,\ldots, \end{equation}
where $\ell_{ij}^{(t)}$ denotes the $ij-$th element of the matrix $(\bW - \mathbf{J})^t$. 
	 
	Substituting \eqref{eq:update_err_y_vector} in the expression of $T_1$, we can bound it as
	\begin{align}\label{eq:T_1_bound}
	\begin{split}
	T_ 1 
	= &\sum_{i=1}^m ({\btheta}_i^{t +\frac{1}{2}} - \widehat{\btheta})^\top \Big(\frac{1}{N} \bX^\top \bX \Big) \left( \sum_{j=1}^{m} \ell_{ij}^{(t)} \btheta_{j,\bot}^0 + \sum_{j=1}^{m}  \sum_{s=0}^{t-1} \ell_{ij}^{(t-s)} \Delta \btheta_j^s\right)\\
	\leq & \sum_{i = 1}^{m}  \sum_{j = 1}^{m}  |\ell_{ij}^{(t)}|\left\|\frac{1}{\sqrt{N}}\bX (\btheta_i^{t+\frac{1}{2}} - \htheta) \right\| \left\| \frac{1}{\sqrt{N}}\bX \btheta_{j,\bot}^0\right\| \\
	& + \sum_{i = 1}^{m}  \sum_{j = 1}^{m} \sum_{s=0}^{t-1}  |\ell_{ij}^{(t-s)}|\left\| \frac{1}{\sqrt{N}}\bX (\btheta_i^{t+\frac{1}{2}} - \htheta) \right\| \left\| \frac{1}{\sqrt{N}}\bX \Delta\btheta_{j}^s\right\|\\
	\leq &\sum_{i = 1}^{m}  \sum_{j = 1}^{m}  |\ell_{ij}^{(t)}| \left( \frac{\epsilon}{2} \norm{\frac{1}{\sqrt{N}}\bX (\btheta_i^{t+\frac{1}{2}} - \htheta) }^2 + \frac{1}{2 \epsilon} \norm{\frac{1}{\sqrt{N}}\bX \btheta_{j,\bot}^0}^2\right)\\
	& + \sum_{i = 1}^{m}  \sum_{j = 1}^{m} \sum_{s=0}^{t-1}  |\ell_{ij}^{(t-s)}| \left( \frac{\epsilon}{2} \norm{\frac{1}{\sqrt{N}}\bX (\btheta_i^{t+\frac{1}{2}} - \htheta)}^2 + \frac{1}{2 \epsilon} \norm{\frac{1}{\sqrt{N}}\bX \Delta\btheta_{j}^s}^2\right),
	\end{split}
	\end{align}
	where we have applied Cauchy-Schwartz and Young's inequality with $\epsilon >0$.
	Invoking the RSM (Assumption~\ref{assump:G_RSM}) along with Assumption \ref{assump:weight} and  Lemma~\ref{lem:norm_bound}, we can bound the RHS of (\ref{eq:T_1_bound}) as (cf. ~\ref{app:bound_T1}): for all $t \geq 1$, 
	\begin{align*}
T_1 \leq \ & \frac{  \rho \cdot c_m \epsilon}{1 - \rho} \left(L_\Sigma + 8 s \tau_g\right) \norm{\bTheta^{t+\frac{1}{2}} - \hTheta}_F^2  + \frac{ c_m L_\Sigma}{2 \epsilon}\sum_{s=0}^{t-1} \rho^{t-s}  \| \Delta \bTheta^s\|_F^2 \notag\\
& + c_m   \cdot 8 s \tau_g \epsilon^{-1} \sum_{s=0}^{t-1}\rho^{t-s}  \| \bTheta^{s+\frac{1}{2}} - \hTheta\|_F^2 
+ c_m \cdot 8 s \tau_g \epsilon^{-1} \sum_{s=0}^{t-1} \rho^{t-s}    \| \bTheta^{s} - \hTheta\|_F^2 \notag\\
& + \frac{  \rho \cdot m c_m }{1 - \rho} \cdot 4 \tau_g \nu^2 (\epsilon + \epsilon^{-1}) +  \underbrace{\frac{1}{2 \epsilon}  c_m \rho^t \left(  L_\Sigma \|  \bTheta_\bot^0  \|_F^2 + \tau_g \sum_{j = 1}^m \| \btheta_{j}^0 - \bar{\btheta}^0 \|_1^2 \right)}_{=0 \text{ by initialization}}.
	\end{align*}
	
	\textbf{2) Bounding $T_2$:} 
	Recall that
	\begin{align*}
	T_2 & =  \frac{1}{m } \sum_{i=1}^m \sum_{j=1}^m \underbrace{({\btheta}_i^{t +\frac{1}{2}} - \widehat{\btheta})^\top \Big( \frac{1}{n} \bX_j^\top \bX_j  \Big) (\bar{\btheta}^t - \btheta_j^t)}_{\triangleq T_2^{ij}}.
	\end{align*}
		Focusing on the $ij$-th summand and using~\eqref{eq:update_err_y_vector} we can write 
	\begin{align}\label{eq:T_2ij}
	T_2^{ij} & 
	  = ({\btheta}_i^{t +\frac{1}{2}} - \widehat{\btheta})^\top \Big( \frac{1}{n} \bX_j^\top \bX_j  \Big) \left(\sum_{k=1}^m \big(-\ell_{jk}^{(t)} \big)\btheta_{k,\bot}^0  + \sum_{k=1}^m \sum_{s=0}^{t-1} \big(-\ell_{jk}^{(t-s)}\big) \Delta \btheta_k^s\right).
	\end{align}
	Note that $T_2^{ij}$ has the same form as $T_1$ and thus following similar steps  as in \eqref{eq:bound_T1} while using the local RSM (Assumption \ref{assump:L_RSS}),  we can bound $T_2$ as (cf.~\ref{app:bound_T2}): for all $t \geq 1$,   
	\begin{align}\label{eq:T2_bound}
\begin{split}
T_2 \ \leq &  \frac{ \rho \cdot  c_m   \epsilon}{1 - \rho} \left( \ell_\Sigma + 8 s \tau_\ell \right) \norm{\bTheta^{t+\frac{1}{2}} - \hTheta}_F^2   +  \frac{ c_m \ell_\Sigma}{2\epsilon}  \sum_{s=0}^{t-1}  \rho^{t-s}  \| \Delta \bTheta^s\|_F^2 \\
& +  c_m 8 s \tau_\ell \epsilon^{-1} \sum_{s=0}^{t-1}  \rho^{t-s}  \| \bTheta^{s+\frac{1}{2}} - \hTheta\|_F^2
+ c_m 8 s \tau_\ell \epsilon^{-1} \sum_{s=0}^{t-1}  \rho^{t-s}  \| \bTheta^{s} - \hTheta\|_F^2 \\
& +  \frac{ \rho \cdot m c_m}{ 1 - \rho} 4  \tau_\ell \nu^2 \cdot (\epsilon +  \epsilon^{-1}) + \underbrace{\frac{1}{2 \epsilon}  c_m \rho^t \left(  \ell_\Sigma \|\bTheta_{\bot}^0\|_F^2 + \tau_\ell \sum_{k =1}^m \| \btheta_k^0 - \bar{\btheta}^0\|_1^2\right)}_{ = 0}.
\end{split}
	\end{align}

	\textbf{3) Bounding $T_3$:}
	Using \eqref{eq:update_err_y} we can express $\big({\btheta}_i^{t + \frac{1}{2}} - \widehat{\btheta}\big)^\top\left(\bar{\bg}^t- \bg_i^t \right) $ in $T_3$ as
	\begin{align*}
	\begin{split}
	& \big({\btheta}_i^{t + \frac{1}{2}} - \widehat{\btheta}\big)^\top\left(\bar{\bg}^t- \bg_i^t \right) \\
	= &  \left({\btheta}_i^{t + \frac{1}{2}}  - \widehat{\btheta} \right)^\top \left(\sum_{j=1}^m \big(- \ell_{ij}^{(t)}\big) \bg_{j,\bot}^0 + \sum_{s=0}^{t-1} \sum_{j=1}^m \big(- \ell_{ij}^{(t-s)}\big) \Big( \frac{1}{n}\bX_j^\top \bX_j \Big) (\btheta_j^{s+1} - \btheta_j^{s}) \right)\\
	\leq &  \sum_{j=1}^m \vert \ell_{ij}^{(t)} \vert \underbrace{ \left\vert \left({\btheta}_i^{t + \frac{1}{2}} - \widehat{\btheta} \right)^\top\bg_{j,\bot}^0 \right\vert}_{T_{3,1}} + \sum_{s=0}^{t-1} \sum_{j=1}^m \vert \ell_{ij}^{(t-s)} \vert \underbrace{\left\vert  \left({\btheta}_i^{t + \frac{1}{2}} - \widehat{\btheta} \right)^\top \Big( \frac{1}{n}\bX_j^\top \bX_j \Big)  (\btheta_j^{s+1} - \btheta_j^{s})  \right\vert}_{T_{3,2}}.
	\end{split}
	\end{align*}
	Following again  similar steps as in the bounds of   $T_1$ and $T_2$, we can obtain the following bound for  $T_3$ (cf.~\ref{app:bound_T3}): 
	\begin{align*}
    \begin{split}
    	T_3 \leq & \frac{3}{2 } \cdot \frac{\rho \cdot c_m \epsilon}{1 - \rho} \left( 2 \ell_\Sigma + 8 s \tau_\ell  \right) \| {\bTheta}^{t + \frac{1}{2}} - \widehat{\bTheta} \|_F^2 + \frac{c_m \ell_\Sigma}{2 \epsilon} \sum_{s=0}^{t-1} \rho^{t-s} \|\bTheta^{s+1} - \bTheta^s\|_F^2\\
    	& +c_m 8 s \tau_\ell \epsilon^{-1} \sum_{s=0}^{t-1}  \rho^{t-s}  \|\bTheta^{s+1} - \hTheta\|_F^2
    	+ c_m 8 s \tau_\ell \epsilon^{-1} \sum_{s=0}^{t-1}  \rho^{t-s}  \|\bTheta^{s} - \hTheta\|_F^2\\
    	& + m c_m \rho^t \left( A_3  \epsilon^{-1}  + s c_g^2 \ell_{\Sigma}^{-1} \epsilon^{-1} + c_g \nu \right) + \frac{\rho \cdot m c_m }{1 - \rho} 4 \tau_\ell\nu^2 (\epsilon + \epsilon^{-1}),
    \end{split}
	\end{align*}
	where 
	\begin{equation}
	\begin{array}{ll}
		A_3  &\triangleq  \frac{1}{2}\left( \ell_\Sigma \| \btheta^* \|^2 + \tau_\ell\|\btheta^* \|_1^2 + \frac{1}{m} \ell_\Sigma\|  \bTheta^*\|_F^2 + \tau_\ell\| \btheta^*\|_1^2\right),\medskip \\
c_g & \triangleq \|\nabla \cL_j (\btheta^*)\|_\infty + \| \nabla \cL (\btheta^*) \|_\infty.
	\end{array}
	\end{equation}

	\textbf{4) Combining the bounds of $T_1$, $T_2$, and $T_3$:} Adding up the bounds of $T_1, T_2$,  and $T_3$, and using the following two inequalities 
\begin{equation}\label{eq:bound_tetha_plus_thata_half}
\| \bTheta^{t+1} - \widehat{\bTheta} \|_F^2   = \| \mathbf{W}\big(\bTheta^{t+\frac{1}{2}} - \widehat{\bTheta}\big) \|_F^2  \leq \| \bTheta^{t+\frac{1}{2}} - \widehat{\bTheta} \|_F^2\end{equation}
and
	\begin{align}\label{eq:bound_Delta_theta}
	\| \bTheta^{s+1} - \bTheta^s \|_F^2 & \stackrel{\eqref{eq:update_theta_matrix}}{=} \|  \bW \big(  \bTheta^{s} + \Delta \bTheta^s \big)  - \bTheta^s  \|_F^2  \leq 4 \| \bTheta_\bot^s\|_F^2 + 2 \|  \Delta \bTheta^s \|_F^2,
	\end{align} 
 yield the desired bound, detailed calculations are provided in~\ref{app:bound_delta}. $\hfill \square$

\section{Supplement to~\ref{sec:pf_track_error} }
In this section, we upperbound the tracking errors $T_1$, $T_2$, and $T_3$ in \eqref{eq:delta_3_terms}. 
\subsection{Bounding $T_1$ in \eqref{eq:T_1_bound}}\label{app:bound_T1}
Invoking the RSC property (Assumption \ref{assump:G_RSM}) and  Lemma~\ref{lem:norm_bound}, we have
\begin{align*}
\norm{\frac{1}{\sqrt{N}}\bX (\btheta_i^{t+\frac{1}{2}} - \htheta) }^2 & \leq  L_\Sigma \norm{\btheta_i^{t+\frac{1}{2}} - \htheta}^2 +   \tau_g \norm{\btheta_i^{t+\frac{1}{2}} - \htheta}_1^2\\
& \leq  \left(L_\Sigma + 8 s \tau_g \right) \norm{\btheta_i^{t+\frac{1}{2}} - \htheta}^2 +  2 \tau_g \nu^2.
\end{align*}
Similarly, the other two terms in~\eqref{eq:T_1_bound} can be bounded respectively as
\begin{align*}
\norm{\frac{1}{\sqrt{N}}\bX \btheta_{j,\bot}^0}^2 \leq L_\Sigma \|  \btheta_{j}^0 - \bar{\btheta}^0 \|^2 +  \tau_g \| \btheta_{j}^0 - \bar{\btheta}^0 \|_1^2
\end{align*}
and 
\begin{align*}
\norm{\frac{1}{\sqrt{N}}\bX \Delta\btheta_{j}^s}^2 & \leq L_\Sigma \| \Delta\btheta_{j}^s \|^2 +  \tau_g \|\btheta_{j}^{s+\frac{1}{2}} - \btheta_{j}^{s}\|_1^2 \\
& \leq  L_\Sigma \| \Delta\btheta_{j}^s \|^2 + 2 \tau_g \|\btheta_{j}^{s+\frac{1}{2}} - \htheta\|_1^2  +  2 \tau_g \|\btheta_{j}^{s} - \htheta\|_1^2\\
& \leq L_\Sigma \| \Delta\btheta_{j}^s \|^2 +  16s \tau_g \|\btheta_{j}^{s+\frac{1}{2}} - \htheta\|^2  +  16s \tau_g \|\btheta_{j}^{s} - \htheta\|^2 +  8 \tau_g \nu^2.
\end{align*}
Summing up the terms and using $\vertiii{(\bW - \mathbf{J})^{t}}_\infty \leq c_m \rho^t$ (due to Assumption~\ref{assump:weight}), we arrive to the following bound for $T_1$:
\begin{align*}
T_1 \leq &\sum_{i = 1}^{m} \sum_{j = 1}^{m}  |\ell_{ij}^{(t)}| \frac{\epsilon}{2}\left(   \left(L_\Sigma + 8 s \tau_g \right) \norm{\btheta_i^{t+\frac{1}{2}} - \htheta}^2 + 2 \tau_g \nu^2 \right)\\
& + \sum_{i = 1}^{m} \sum_{j = 1}^{m}  |\ell_{ij}^{(t)}| \frac{1}{2 \epsilon} \left(  L_\Sigma \|  \btheta_{j}^0 - \bar{\btheta}^0 \|^2 + \tau_g \| \btheta_{j}^0 - \bar{\btheta}^0 \|_1^2 \right)\\
& + \sum_{i = 1}^{m}  \sum_{j = 1}^{m} \sum_{s=0}^{t-1}  |\ell_{ij}^{(t-s)}| \frac{\epsilon}{2} \left(  \left(L_\Sigma + 8 s \tau_g  \right) \norm{\btheta_i^{t+\frac{1}{2}} - \htheta}^2 + 2 \tau_g  \nu^2\right)\\
& + \sum_{i = 1}^{m}  \sum_{j = 1}^{m} \sum_{s=0}^{t-1}  |\ell_{ij}^{(t-s)}| \frac{1}{2 \epsilon} \left( L_\Sigma \| \Delta\btheta_{j}^s \|^2 + 16 s \tau_g \|\btheta_{j}^{s+\frac{1}{2}} - \htheta\|^2  + 16 s \tau_g  \|\btheta_{j}^{s} - \htheta\|^2 + 8 \tau_g \nu^2\right)\\
\leq & \frac{\epsilon}{2} c_m \rho^t \left(L_\Sigma + 8 s \tau_g  \right) \norm{\bTheta^{t+\frac{1}{2}} - \hTheta}_F^2 +  \frac{\epsilon}{2} \cdot m c_m \rho^t  \cdot 2 \tau_g  \nu^2\\
& + \frac{1}{2 \epsilon}  c_m \rho^t \left(  L_\Sigma \|  \bTheta_\bot^0  \|_F^2 + \tau_g \sum_{j = 1}^m \| \btheta_{j}^0 - \bar{\btheta}^0 \|_1^2 \right) \\
& + \sum_{s=0}^{t-1} c_m \rho^{t-s} \frac{\epsilon}{2} \left(L_\Sigma + 8 s \tau_g \right) \norm{\bTheta^{t+\frac{1}{2}} - \hTheta}_F^2 + \sum_{s=0}^{t-1} m c_m \rho^{t-s} \cdot \frac{\epsilon}{2} \cdot 2 \tau_g \nu^2\\
& + \sum_{s=0}^{t-1}c_m \rho^{t-s} \frac{1}{ 2 \epsilon} L_\Sigma \| \Delta \bTheta^s\|_F^2 + \sum_{s=0}^{t-1}c_m \rho^{t-s} \cdot \frac{1}{ 2 \epsilon} \cdot 16 s \tau_g \| \bTheta^{s+\frac{1}{2}} - \hTheta\|_F^2 \\
& + \sum_{s=0}^{t-1}c_m \rho^{t-s} \frac{1}{ 2 \epsilon} \cdot 16 s \tau_g \| \bTheta^{s} - \hTheta\|_F^2
+ \sum_{s=0}^{t-1} m c_m \rho^{t-s} \frac{1}{ 2 \epsilon} 8 \tau_g \nu^2.\\
\end{align*}
Combining terms yields
\begin{align*}
T_1\leq & \frac{  \rho \cdot c_m \epsilon}{1 - \rho} \left(L_\Sigma + 8 s \tau_g\right) \norm{\bTheta^{t+\frac{1}{2}} - \hTheta}_F^2  + \frac{ c_m L_\Sigma}{2 \epsilon}\sum_{s=0}^{t-1} \rho^{t-s}  \| \Delta \bTheta^s\|_F^2 \\
& + c_m   \cdot 8 s \tau_g \epsilon^{-1} \sum_{s=0}^{t-1}\rho^{t-s}  \| \bTheta^{s+\frac{1}{2}} - \hTheta\|_F^2 
+ c_m \cdot 8 s \tau_g \epsilon^{-1} \sum_{s=0}^{t-1} \rho^{t-s}    \| \bTheta^{s} - \hTheta\|_F^2\\
& + \frac{  \rho \cdot m c_m }{1 - \rho} \cdot 4 \tau_g \nu^2 (\epsilon + \epsilon^{-1}) +  \frac{1}{2 \epsilon}  c_m \rho^t \left(  L_\Sigma \|  \bTheta_\bot^0  \|_F^2 + \tau_g \sum_{j = 1}^m \| \btheta_{j}^0 - \bar{\btheta}^0 \|_1^2 \right).
\end{align*}

\subsection{Bounding $T_2$ in \eqref{eq:delta_3_terms}}\label{app:bound_T2}
Using the local RSM (cf. Assumption \ref{assump:L_RSS}) and  Lemma~\ref{lem:norm_bound}, we can bounds the summands in $T_2^{ij}$ [cf. \eqref{eq:T_2ij}] as 
\begin{align*}
\begin{split}
& ({\btheta}_i^{t +\frac{1}{2}} - \widehat{\btheta})^\top \Big( \frac{1}{n} \bX_j^\top \bX_j  \Big) \ell_{jk}^{(t)} \btheta_{k,\bot}^0 \\
\leq & |\ell_{jk}^{(t)} | \norm{\frac{1}{\sqrt{n}} \bX_j ({\btheta}_i^{t +\frac{1}{2}} - \widehat{\btheta})} \cdot \norm{\frac{1}{\sqrt{n}} \bX_j (\btheta_k^0 - \bar{\btheta}^0)}\\
\leq & |\ell_{jk}^{(t)}|\  \frac{\epsilon}{2} \left( \left( \ell_\Sigma + 8s \tau_\ell\right) \| {\btheta}_i^{t +\frac{1}{2}} - \widehat{\btheta} \|^2 + 2 \tau_\ell \nu^2\right)
 +  |\ell_{jk}^{(t)}| \ \frac{1}{2 \epsilon} \left( \ell_\Sigma \|\btheta_k^0 - \bar{\btheta}^0 \|^2 + \tau_\ell \|\btheta_k^0 - \bar{\btheta}^0 \|_1^2\right)
\end{split}
\end{align*}
and
\begin{align*}
\begin{split}
& ({\btheta}_i^{t +\frac{1}{2}} - \widehat{\btheta})^\top \Big( \frac{1}{n} \bX_j^\top \bX_j  \Big)\ell_{jk}^{(t-s)} \Delta \btheta_k^s\\
\leq & | \ell_{jk}^{(t-s)} | \ \norm{\frac{1}{\sqrt{n}} \bX_j ({\btheta}_i^{t +\frac{1}{2}} - \widehat{\btheta})} \norm{\frac{1}{\sqrt{n}} \bX_j \Delta \btheta_k^s}\\
\leq & |\ell_{jk}^{(t - s)}|\  \frac{\epsilon}{2} \left( \left( \ell_\Sigma + 8 s \tau_\ell\right) \| {\btheta}_i^{t +\frac{1}{2}} - \widehat{\btheta} \|^2 + 2 \tau_\ell\nu^2\right)\\
& +  |\ell_{jk}^{(t - s)}|\  \frac{1}{2 \epsilon} \left( \ell_\Sigma \|\Delta \btheta_k^{s} \|^2 + 16 s\tau_\ell  \|\btheta_k^{s+\frac{1}{2}} - \htheta \|^2 +  16 s\tau_\ell \|{\btheta}_k^{s} - \htheta \|^2 + 8 \tau_\ell \nu^2\right),
\end{split}
\end{align*}
respectively. 

Summing over $i,j,k$ and using $\vertiii{(\bW - \mathbf{J})^{t}}_\infty \leq c_m \rho^t$, we can bound $T_2$ as:
\begin{align*}
& m \cdot T_2 \notag\\
\leq & \sum_{i=1}^m  \sum_{j=1}^m \sum_{k=1}^m |\ell_{jk}^{(t)}|\  \frac{\epsilon}{2} \left( \left( \ell_\Sigma + 8 s \tau_\ell \right) \| {\btheta}_i^{t +\frac{1}{2}} - \widehat{\btheta} \|^2 + 2 \tau_\ell \nu^2\right)\notag\\
& 
+ \sum_{i=1}^m  \sum_{j=1}^m\sum_{k=1}^m |\ell_{jk}^{(t)}| \ \frac{1}{2 \epsilon} \left( \ell_\Sigma \|\btheta_k^0 - \bar{\btheta}^0 \|^2 + \tau_\ell \|\btheta_k^0 - \bar{\btheta}^0 \|_1^2\right) \notag\\
&
+ \sum_{i=1}^m  \sum_{j=1}^m\sum_{k=1}^m \sum_{s=0}^{t-1}  |\ell_{jk}^{(t - s)}|\  \frac{\epsilon}{2} \left( \left( \ell_\Sigma + 8 s \tau_\ell \right) \| {\btheta}_i^{t +\frac{1}{2}} - \widehat{\btheta} \|^2 + 2 \tau_\ell \nu^2\right) \notag\\
& +\sum_{i=1}^m  \sum_{j=1}^m\sum_{k=1}^m \sum_{s=0}^{t-1} |\ell_{jk}^{(t - s)}|\  \frac{1}{2 \epsilon} \left( \ell_\Sigma \|\Delta \btheta_k^s \|^2 + 16 s \tau_\ell \|\btheta_k^{s+\frac{1}{2}} - \htheta \|^2 +  16 s \tau_\ell \|{\btheta}_k^{s} - \htheta \|^2 + 8 \tau_\ell \nu^2\right). \notag\\
\leq &  m c_m \rho^t \cdot \frac{\epsilon}{2}\left( \ell_\Sigma + 8 s \tau_\ell \right) \norm{\bTheta^{t+\frac{1}{2}} - \hTheta}_F^2 + m^2 c_m \rho^t  \cdot \tau_\ell \nu^2 \cdot \epsilon \notag \\
& + m c_m \rho^t \cdot \frac{1}{2 \epsilon} \ell_\Sigma \ \| \bTheta_\bot^0\|_F^2 + m c_m \rho^t \cdot \frac{1}{2 \epsilon} \tau_\ell \sum_{k = 1}^m \| \btheta_k^0 - \bar{\btheta}^0\|_1^2 \notag\\
& + \sum_{s = 0}^{t-1} m c_m \rho^{t-s} \frac{\epsilon}{2} \left( \ell_\Sigma +8 s \tau_\ell \right) \| \bTheta^{t+\frac{1}{2}} - \hTheta\|_F^2 +  \sum_{s = 0}^{t-1} m^2 c_m \rho^{t-s} \tau_\ell \nu^2 \cdot \epsilon \notag\\
& + \sum_{s=0}^{t-1} m c_m \rho^{t-s} \frac{1}{2\epsilon} \ell_\Sigma \| \Delta \bTheta^s\|_F^2 + \sum_{s=0}^{t-1} m c_m \rho^{t-s} \frac{1}{\epsilon} 8 s \tau_\ell\| \bTheta^{s+\frac{1}{2}} - \hTheta\|_F^2 \notag\\
& + \sum_{s=0}^{t-1} m c_m \rho^{t-s}  \frac{1}{\epsilon} 8 s \tau_\ell \| \bTheta^{s} - \hTheta\|_F^2 + \sum_{s=0}^{t-1} m^2 c_m \rho^{t-s} \frac{1}{\epsilon} 4 \tau_\ell \nu^2\notag\\
\leq &  \frac{ \rho \cdot m c_m   \epsilon}{1 - \rho} \left( \ell_\Sigma + 8 s \tau_\ell \right) \norm{\bTheta^{t+\frac{1}{2}} - \hTheta}_F^2   +  \frac{m c_m \ell_\Sigma}{2\epsilon}  \sum_{s=0}^{t-1}  \rho^{t-s}  \| \Delta \bTheta^s\|_F^2 \notag\\
& + m c_m \frac{1}{\epsilon} 8 s \tau_\ell   \sum_{s=0}^{t-1}  \rho^{t-s}  \| \bTheta^{s+\frac{1}{2}} - \hTheta\|_F^2
+m c_m \frac{1}{\epsilon} 8 s \tau_\ell   \sum_{s=0}^{t-1}  \rho^{t-s}  \| \bTheta^{s} - \hTheta\|_F^2 \notag\\
& +  \frac{ \rho \cdot m^2 c_m}{ 1 - \rho} 4  \tau_\ell \nu^2 \cdot (\epsilon +  \epsilon^{-1}) + \frac{1}{2 \epsilon} m c_m \rho^t \left(  \ell_\Sigma \|\bTheta_{\bot}^0\|_F^2 + \tau_\ell \sum_{k =1}^m \| \btheta_k^0 - \bar{\btheta}^0\|_1^2\right).
\end{align*}

\subsection{Bounding $T_3$ in \eqref{eq:delta_3_terms}}\label{app:bound_T3}
We bound first $T_{3,1}$ and $T_{3,2}$.
Invoking the definition of $\bg_{j,\bot}^0$  we have
\begin{align*}
T_{3,1}= &  \left\vert  \left({\btheta}_i^{t + \frac{1}{2}} - \widehat{\btheta} \right)^\top\left(\nabla\cL_j (\btheta_j^0) - \frac{1}{m}\sum_{k=1}^m \nabla \cL_k (\btheta_k^0) \right) \right\vert\\
=  & \left\vert   \left({\btheta}_i^{t + \frac{1}{2}} - \widehat{\btheta} \right)^\top \left( \frac{1}{n} \bX_j^\top (\bX_j \btheta_j^0 - \by_j) - \frac{1}{mn} \sum_{k=1}^{m} \bX_k^\top (\bX_k \btheta_k^0 - \by_k) \right) \right\vert \\
= &  \left\vert   \left({\btheta}_i^{t + \frac{1}{2}} - \widehat{\btheta} \right)^\top \left( \frac{1}{n} \bX_j^\top (\bX_j \btheta_j^0 - \bX_j\btheta^* + \bX_j\btheta^*  - \by_j) - \frac{1}{mn} \sum_{k=1}^{m} \bX_k^\top (\bX_k \btheta_k^0 - \bX\btheta^* + \bX\btheta^*- \by_k) \right) \right\vert \\
\leq &  \left\vert   \left({\btheta}_i^{t + \frac{1}{2}} - \widehat{\btheta} \right)^\top \left( \frac{1}{n} \bX_j^\top \bX_j (\btheta_j^0 - \btheta^*) - \frac{1}{mn} \sum_{k=1}^{m} \bX_k^\top \bX_k (\btheta_k^0 - \btheta^* ) \right) \right\vert \\
& + \left\vert  \left({\btheta}_i^{t + \frac{1}{2}} - \widehat{\btheta} \right)^\top\nabla \cL_j (\btheta^*) \right\vert +  \left\vert  \left({\btheta}_i^{t + \frac{1}{2}} - \widehat{\btheta} \right)^\top \nabla \cL (\btheta^*)\right\vert.
\end{align*}
Therefore,
\begin{align}\label{eq:bound_T31}
T_{3,1} 
\leq & \frac{\epsilon}{2} \left(\ell_\Sigma \| {\btheta}_i^{t + \frac{1}{2}} - \widehat{\btheta} \|^2 + \tau_\ell \| {\btheta}_i^{t + \frac{1}{2}} - \widehat{\btheta} \|_1^2 \right) + \frac{1}{2 \epsilon} \left( \ell_\Sigma \| \btheta_j^0 - \btheta^* \|^2 + \tau_\ell \| \btheta_j^0 - \btheta^* \|_1^2 \right) \notag\\
& + \frac{1}{m} \sum_{k=1}^m \left( \frac{\epsilon}{2} \big( \ell_\Sigma \| {\btheta}_i^{t + \frac{1}{2}} - \widehat{\btheta} \|^2 + \tau_\ell\| {\btheta}_i^{t + \frac{1}{2}} - \widehat{\btheta} \|_1^2\big) + \frac{1}{2 \epsilon} \left(\ell_\Sigma \| \btheta_k^0 - \btheta^* \|^2 + \tau_\ell \|\btheta_k^0 - \btheta^* \|_1^2\right) \right) \notag\\
& + \| {\btheta}_i^{t + \frac{1}{2}} - \widehat{\btheta} \|_1 \underbrace{\left( \|\nabla \cL_j (\btheta^*)\|_\infty + \| \nabla \cL (\btheta^*) \|_\infty\right)}_{c_g} \notag\\
 \stackrel{(a)}{\leq} & \frac{\epsilon}{2} \left( \left( \ell_\Sigma + 8 s\tau_\ell \right) \| {\btheta}_i^{t + \frac{1}{2}} - \widehat{\btheta} \|^2 + 2 \tau_\ell \nu^2 \right) +  \frac{1}{2 \epsilon} \left( \ell_\Sigma \| \btheta_j^0 - \btheta^* \|^2 + \tau_\ell \| \btheta_j^0 - \btheta^* \|_1^2 \right) \notag\\
& + \frac{1}{m} \sum_{k=1}^m \left( \frac{\epsilon}{2} \left( \left( \ell_\Sigma + 8 s \tau_\ell \right)  \| {\btheta}_i^{t + \frac{1}{2}} - \widehat{\btheta} \|^2 + 2 \tau_\ell \nu^2\right) \right)\\
 & + \frac{1}{2 \epsilon} \frac{1}{m}  \left( \ell_\Sigma \| \bTheta^0 - \bTheta^*\|_F^2 + \tau_\ell \sum_{k = 1}^m \|\btheta_k^0 - \btheta^*\|_1^2\right)  
 +  c_g \left( 2\sqrt{s} \| {\btheta}_i^{t + \frac{1}{2}} - \widehat{\btheta} \| + \nu \right)  \notag\\
\stackrel{(b)}{\leq} & \epsilon \left( \left( \ell_\Sigma + 8 s \tau_\ell \right) \| {\btheta}_i^{t + \frac{1}{2}} - \widehat{\btheta} \|^2 + 2 \tau_\ell \nu^2 \right) \notag\\
& +  \frac{1}{2 \epsilon} \left( \ell_\Sigma \| \btheta_j^0 - \btheta^* \|^2 + \tau_\ell \| \btheta_j^0 - \btheta^* \|_1^2 + \frac{1}{m} \ell_\Sigma \| \bTheta^0 - \bTheta^*\|_F^2 + \tau_\ell \frac{1}{m} \sum_{k = 1}^m \| \btheta_k^0 - \btheta^*\|_1^2\right) \notag \\
&  + \epsilon \ell_{\Sigma} \cdot   \| {\btheta}_i^{t + \frac{1}{2}} - \widehat{\btheta} \|^2 + s c_g^2 \ell_{\Sigma}^{-1}\epsilon^{-1}  + c_g \nu \notag\\
\leq & \epsilon \left( \left( 2 \ell_\Sigma + 8 s \tau_\ell  \right) \| {\btheta}_i^{t + \frac{1}{2}} - \widehat{\btheta} \|^2 + 2 \tau_\ell \nu^2 \right)  \notag\\
&  +   \frac{1}{ \epsilon} \underbrace{ \frac{1}{2}\left( \ell_\Sigma \| \btheta_j^0 - \btheta^* \|^2 + \tau_\ell\| \btheta_j^0 - \btheta^* \|_1^2 + \frac{1}{m} \ell_\Sigma\| \bTheta^0 - \bTheta^*\|_F^2 + \tau_\ell\frac{1}{m} \sum_{k = 1}^m \| \btheta_k^0 - \btheta^*\|_1^2\right)}_{A_{3}} \notag\\
&  + s c_g^2 \ell_{\Sigma}^{-1} \epsilon^{-1} + c_g \nu,\notag
\end{align}
where   (a) follows from  Lemma~\ref{lem:norm_bound} while in (b) used the bound $2 \cdot (c_g \sqrt{s})\cdot  \| {\btheta}_i^{t + \frac{1}{2}} - \widehat{\btheta} \|\leq \epsilon \ell_{\Sigma} \cdot   \| {\btheta}_i^{t + \frac{1}{2}} - \widehat{\btheta} \|^2 + s c_g^2 \ell_{\Sigma}^{-1}\epsilon^{-1}$. 

Similarly,  we can bound  $T_{3,2}$  as
\begin{equation}\label{eq:bound_T32}\begin{aligned} 
T_{3,2} = & \left\vert  \left({\btheta}_i^{t + \frac{1}{2}} - \widehat{\btheta} \right)^\top \Big( \frac{1}{n}\bX_j^\top \bX_j \Big)  (\btheta_j^{s+1} - \btheta_j^{s})  \right\vert \\
\leq & \frac{\epsilon}{2} \left( \ell_\Sigma \| {\btheta}_i^{t + \frac{1}{2}} - \widehat{\btheta} \|^2 + \tau_\ell\| {\btheta}_i^{t + \frac{1}{2}} - \widehat{\btheta} \|_1^2\right) + \frac{1}{2 \epsilon} \left( \ell_\Sigma \|\btheta_j^{s+1} - \btheta_j^{s} \|^2 + \tau_\ell \|\btheta_j^{s+1} - \btheta_j^{s}\|_1^2\right)\\
\leq & \frac{\epsilon}{2}  \left(\ell_\Sigma + 8 s \tau_\ell \right) \| {\btheta}_i^{t + \frac{1}{2}} - \widehat{\btheta} \|^2   \\
&
+ \frac{1}{2 \epsilon}\left(\ell_\Sigma   \|\btheta_j^{s+1} - \btheta_j^{s}\|^2 + 16 s \tau_\ell \|{\btheta}_j^{s+1} - {\htheta}\|^2 + 16 s \tau_\ell \|{\btheta}_j^{s} - \widehat{\btheta}\|^2 \right)
+ \tau_\ell \nu^2 (\epsilon + 4 \epsilon^{-1})
\end{aligned}\end{equation}
Using (\ref{eq:bound_T31}) and (\ref{eq:bound_T32}) along with $\vertiii{(\bW - \mathbf{J})^{t}}_\infty \leq c_m \rho^t$, we can bound $T_3$ as follows:
\begin{align*}
& T_3 = \sum_{i=1}^m \left(\bar{\bg}^t- \bg_i^t \right)^T ({\btheta}_i^{t + \frac{1}{2}} - \widehat{\btheta}) 
\leq   \sum_{i=1}^m \sum_{j=1}^m \vert \ell_{ij}^{(t)}\vert \cdot T_{3,1} + \sum_{s=0}^{t-1}\sum_{i=1}^m \sum_{j=1}^m \vert \ell_{ij}^{(t-s)}\vert \cdot T_{3,2}
\end{align*}
and thus
\begin{align*}
T_3 \leq & \sum_{i=1}^m \sum_{j=1}^m \vert \ell_{ij}^{(t)}\vert \left(  \epsilon  \left( 2\ell_\Sigma + 8 s \tau_\ell \right) \| {\btheta}_i^{t + \frac{1}{2}} - \widehat{\btheta} \|^2   \right)\\
& +   \sum_{i=1}^m \sum_{j=1}^m \vert \ell_{ij}^{(t)}\vert \left( \epsilon \cdot 2 \tau_\ell \nu^2 + \epsilon^{-1} A_3  + s c_g^2 \ell_{\Sigma}^{-1} \epsilon^{-1} + c_g \nu\right)\\
& + \sum_{s=0}^{t-1}\sum_{i=1}^m \sum_{j=1}^m \vert \ell_{ij}^{(t-s)}\vert \left( \frac{\epsilon}{2}  \left(\ell_\Sigma + 8 s \tau_\ell\right) \| {\btheta}_i^{t + \frac{1}{2}} - \widehat{\btheta} \|^2   \right)\\
& + \sum_{s=0}^{t-1}\sum_{i=1}^m \sum_{j=1}^m \vert \ell_{ij}^{(t-s)}\vert \frac{1}{2 \epsilon}\left( \ell_\Sigma   \|\btheta_j^{s+1} - \btheta_j^{s}\|^2 + 16 s \tau_\ell \|{\btheta}_j^{s+1} - {\htheta}\|^2 + 16 s \tau_\ell\|{\btheta}_j^{s} - \widehat{\btheta}\|^2  \right)\\
& +  \sum_{s=0}^{t-1}\sum_{i=1}^m \sum_{j=1}^m \vert \ell_{ij}^{(t-s)}\vert \tau_\ell \nu^2 (\epsilon + 4 \epsilon^{-1})\\
 \leq & c_m \rho^t  \epsilon  \left( 2\ell_\Sigma + 8 s \tau_\ell  \right) \| {\bTheta}^{t + \frac{1}{2}} - \widehat{\bTheta} \|_F^2 
 + m c_m \rho^t \left( \epsilon \cdot 2 \tau_\ell \nu^2 + \epsilon^{-1} A_3   + s c_g^2 \ell_{\Sigma}^{-1} \epsilon^{-1} + c_g \nu\right)\\
& + \sum_{s=0}^{t-1} c_m \rho^{t-s} \frac{\epsilon}{2} \left(\ell_\Sigma + 8 s \tau_\ell\right) \| \bTheta^{t+\frac{1}{2}} - \hTheta\|_F^2 \\
& + \sum_{s=0}^{t-1} c_m \rho^{t-s} \frac{1}{2 \epsilon} \ell_\Sigma \|\bTheta^{s+1} - \bTheta^s\|_F^2 + \sum_{s=0}^{t-1} c_m \rho^{t-s} \frac{1}{\epsilon} 8 s \tau_\ell \|\bTheta^{s+1} - \hTheta\|^2\\
&  + \sum_{s=0}^{t-1} c_m \rho^{t-s} \frac{1}{ \epsilon} 8 s \tau_\ell \|\bTheta^{s} - \hTheta\|^2 + \sum_{s=0}^{t-1}m  c_m \rho^{t-s} \tau_\ell \nu^2 (\epsilon + 4 \epsilon^{-1})  \\
\leq & \frac{3}{2 } \cdot \frac{\rho \cdot c_m \epsilon}{1 - \rho} \left( 2 \ell_\Sigma + 8 s \tau_\ell  \right) \| {\bTheta}^{t + \frac{1}{2}} - \widehat{\bTheta} \|_F^2 + \frac{c_m \ell_\Sigma}{2 \epsilon} \sum_{s=0}^{t-1} \rho^{t-s} \|\bTheta^{s+1} - \bTheta^s\|_F^2\\
& +c_m 8 s \tau_\ell \epsilon^{-1} \sum_{s=0}^{t-1}  \rho^{t-s}  \|\bTheta^{s+1} - \hTheta\|_F^2
+ c_m 8 s \tau_\ell \epsilon^{-1} \sum_{s=0}^{t-1}  \rho^{t-s}  \|\bTheta^{s} - \hTheta\|_F^2\\
& + m c_m \rho^t \left( A_3  \epsilon^{-1}  + s c_g^2 \ell_{\Sigma}^{-1} \epsilon^{-1} + c_g \nu \right) + \frac{\rho \cdot m c_m }{1 - \rho} 4 \tau_\ell\nu^2 (\epsilon + \epsilon^{-1})\\
\leq & \frac{3}{2 } \cdot \frac{\rho \cdot c_m \epsilon}{1 - \rho} \left( 2 \ell_\Sigma + 8 s \tau_\ell  \right) \| {\bTheta}^{t + \frac{1}{2}} - \widehat{\bTheta} \|_F^2 + \frac{c_m \ell_\Sigma}{2 \epsilon} \sum_{s=0}^{t-1} \rho^{t-s} \|\bTheta^{s+1} - \bTheta^s\|_F^2\\
& +c_m 8 s \tau_\ell \epsilon^{-1} \sum_{s=0}^{t-1}  \rho^{t-s}  \|\bTheta^{s+1} - \hTheta\|_F^2
+ c_m 8 s \tau_\ell \epsilon^{-1} \sum_{s=0}^{t-1}  \rho^{t-s}  \|\bTheta^{s} - \hTheta\|_F^2\\
& + m c_m \rho^t \left( A_3  \epsilon^{-1}  + s c_g^2 \ell_{\Sigma}^{-1} \epsilon^{-1} + c_g \nu \right) + \frac{\rho \cdot m c_m }{1 - \rho} 4 \tau_\ell\nu^2 (\epsilon + \epsilon^{-1}) .
\end{align*}  
\subsection{Bounding $\delta^t$ in \eqref{eq:delta_3_terms}}\label{app:bound_delta}
Using the bounds on $T_1$, $T_2$, and $T_3$, we can finally bound $\delta^t$:
\begin{align*}
\delta^t & = \sum_{i=1}^m(\nabla \cL(\btheta_i^t )- \by_i^t )^\top ({\btheta}_i^{t + \frac{1}{2}} - \widehat{\btheta})\\
&\leq  T_1 +  T_2 + T_3\\
\leq & \frac{  \rho \cdot c_m \epsilon}{1 - \rho} \left(L_\Sigma + 8 s \tau_g \right) \norm{\bTheta^{t+\frac{1}{2}} - \hTheta}_F^2  + \frac{ c_m L_\Sigma}{2 \epsilon}\sum_{s=0}^{t-1} \rho^{t-s}  \| \Delta \bTheta^s\|_F^2 \\
& + c_m \cdot  8 s \tau_g \epsilon^{-1} \sum_{s=0}^{t-1}\rho^{t-s}  \| \bTheta^{s+\frac{1}{2}} - \hTheta\|_F^2 
+ c_m \cdot 8 s \tau_g \epsilon^{-1} \sum_{s=0}^{t-1} \rho^{t-s}    \| \bTheta^{s} - \hTheta\|_F^2\\
& + \frac{  \rho \cdot m c_m }{1 - \rho} 4 \tau_g \nu^2 (\epsilon + \epsilon^{-1})  \\
& + \frac{ \rho \cdot  c_m   \epsilon}{1 - \rho} \left( \ell_\Sigma + 8 s \tau_\ell\right) \norm{\bTheta^{t+\frac{1}{2}} - \hTheta}_F^2   +  \frac{ c_m \ell_\Sigma}{2\epsilon}  \sum_{s=0}^{t-1}  \rho^{t-s}  \| \Delta \bTheta^s\|_F^2 \\
& + c_m 8 s \tau_\ell \epsilon^{-1} \sum_{s=0}^{t-1}  \rho^{t-s}  \| \bTheta^{s+\frac{1}{2}} - \hTheta\|_F^2
+ c_m 8 s \tau_\ell \epsilon^{-1} \sum_{s=0}^{t-1}  \rho^{t-s}  \| \bTheta^{s} - \hTheta\|_F^2 \\
& +    \frac{ \rho \cdot m c_m}{ 1 - \rho} 4 \tau_\ell \nu^2 \cdot (\epsilon +  \epsilon^{-1}) \\
& + \frac{3}{2 } \cdot \frac{\rho \cdot c_m \epsilon}{1 - \rho} \left( 2 \ell_\Sigma + 8 s \tau_\ell  \right) \| {\bTheta}^{t + \frac{1}{2}} - \widehat{\bTheta} \|_F^2 + \frac{c_m \ell_\Sigma}{2 \epsilon} \sum_{s=0}^{t-1} \rho^{t-s} \|\bTheta^{s+1} - \bTheta^s\|_F^2\\
& +c_m 8s \tau_\ell \epsilon^{-1} \sum_{s=0}^{t-1}  \rho^{t-s}  \|\bTheta^{s+1} - \hTheta\|_F^2
+ c_m 8 s \tau_\ell \epsilon^{-1} \sum_{s=0}^{t-1}  \rho^{t-s}  \|\bTheta^{s} - \hTheta\|_F^2\\
& + m c_m \rho^t \left( A_3 \epsilon^{-1}  + s c_g^2 \ell_{\Sigma}^{-1} \epsilon^{-1} + 1 \right) + \frac{\rho \cdot m c_m }{1 - \rho} 4 \tau_\ell \nu^2 (\epsilon + \epsilon^{-1}) + \frac{\rho \cdot m c_m}{1 - \rho} c_g^2 \nu^2.
\end{align*}
\begin{align*}
\delta^t \stackrel{(a)}{\leq} & \frac{7}{2 } \cdot \frac{\rho \cdot c_m \epsilon}{1 - \rho} \left( 2 \ell_\Sigma + 8 s \tau_\ell \right) \| {\bTheta}^{t + \frac{1}{2}} - \widehat{\bTheta} \|_F^2 \\
& +  \frac{ c_m \ell_\Sigma}{ \epsilon}\sum_{s=0}^{t-1} \rho^{t-s}  \| \Delta \bTheta^s\|_F^2 
+ \frac{ c_m \ell_\Sigma}{ 2\epsilon}\sum_{s=0}^{t-1} \rho^{t-s}  \|  \bTheta^{s+1} - \bTheta^s\|_F^2\\
& +c_m 24 \tau_\ell \epsilon^{-1} \sum_{s=0}^{t-1}  \rho^{t-s}  \| \bTheta^{s+\frac{1}{2}} - \hTheta\|_F^2
+ c_m 24 \tau_\ell \epsilon^{-1} \sum_{s=0}^{t-1}  \rho^{t-s}  \| \bTheta^{s} - \hTheta\|_F^2 \\
& +  m c_m \rho^t \left( A_3\cdot  \epsilon^{-1}  + s c_g^2 \ell_{\Sigma}^{-1} \epsilon^{-1} + 1\right) + \frac{\rho \cdot m c_m }{1 - \rho} 12 \tau_\ell \nu^2 (\epsilon + \epsilon^{-1}) + \frac{\rho \cdot m c_m}{1 - \rho} c_g^2 \nu^2\\
\stackrel{(b)}{\leq} &\frac{7}{2 } \cdot \frac{\rho \cdot c_m \epsilon}{1 - \rho} \left( 2 \ell_\Sigma + 8 s \tau_\ell  \right) \| {\bTheta}^{t + \frac{1}{2}} - \widehat{\bTheta} \|_F^2 \\
& +  \frac{ 2c_m \ell_\Sigma}{ \epsilon}\sum_{s=0}^{t-1} \rho^{t-s}  \| \Delta \bTheta^s\|_F^2 
+ \frac{ 2 c_m \ell_\Sigma}{ \epsilon}\sum_{s=0}^{t-1} \rho^{t-s}  \|  \bTheta_\bot^{s}\|_F^2\\
& +c_m 24 s\tau_\ell \epsilon^{-1} \sum_{s=0}^{t-1}  \rho^{t-s}  \| \bTheta^{s+\frac{1}{2}} - \hTheta\|_F^2
+ c_m 24 s \tau_\ell \epsilon^{-1} \sum_{s=0}^{t-1}  \rho^{t-s}  \| \bTheta^{s - \frac{1}{2}} - \hTheta\|_F^2 \\
&  + m c_m \rho^t \left( A_3 \cdot  \epsilon^{-1}  + s c_g^2 \ell_{\Sigma}^{-1} \epsilon^{-1} + 1 \right) + \frac{\rho \cdot m c_m }{1 - \rho} 12 \tau_\ell \nu^2 (\epsilon + \epsilon^{-1}) + \frac{\rho \cdot m c_m}{1 - \rho} c_g^2 \nu^2,
\end{align*}
where in $(a)$ we used $L_\Sigma \leq \ell_\Sigma$, $\tau_g \leq \tau_\ell$, and 
\eqref{eq:bound_tetha_plus_thata_half}; 
and in $(b)$ we used again \eqref{eq:bound_tetha_plus_thata_half}
 and \eqref{eq:bound_Delta_theta}. \hfill $\square$

\section{Proof of Proposition~\ref{prop:consenus_err}}~\label{sec:pf_consensus_err}
Using Assumptions \ref{assump:net1} and \ref{assump:weight} on (\ref{eq:update_err_theta})
we have
	\begin{align*}
	\|\bTheta_\bot^{t}\| \leq \rho^{t} \| \bTheta_\bot^{0}\| + \sum_{s=0}^{t-1} \rho^{t-s} \|\Delta \bTheta^s \|.
	\end{align*}
Squaring both sides	gives
	\begin{align*}
	\|\bTheta_\bot^{t}\|^2 & \leq \left( \rho^{t} \| \bTheta_\bot^{0}\| + \sum_{s=0}^{t-1} \rho^{t-s} \|\Delta \bTheta^s \| \right)^2
 \leq 2 \rho^{2t} \| \bTheta_\bot^{0}\|^2 + 2 \left(\sum_{s=0}^{t-1} \rho^{t-s} \|\Delta \bTheta^s \|\right)^2 \\
& \leq 2 \rho^{2t} \| \bTheta_\bot^{0}\|^2 +  \frac{2\rho}{1-\rho} \cdot \sum_{s=0}^{t-1} \rho^{t-s} \|\Delta \bTheta^s \|^2,
	\end{align*}
	where the last inequality follows from Cauchy-Schwartz inequality. $\hfill \square$

\section{Proof of Lemma \ref{lem:rate_condition_2}}\label{sec:proof_lem:rate_condition_2}
	Recall $G_{{\rm net}}(z) = \frac{\rho}{z - \rho}$. Since   \eqref{eq:rate_condition_2} is quadratic in $G_{{\rm net}}(z)$, we obtain  the following feasible range  of $G_{{\rm net}}(z)$:
	\begin{align*}
		\begin{split}
			G_{{\rm net}}(z) & \leq \frac{ \frac{C_2}{\gamma}\frac{ c_m \ell_\Sigma}{ \epsilon} -\sqrt{\left(\frac{C_2}{\gamma}\frac{ c_m \ell_\Sigma}{ \epsilon} \right)^2+ \frac{4 C_2}{\gamma} \frac{ c_m \ell_\Sigma}{ \epsilon}\frac{2\rho}{1-\rho} \left(1 - \frac{L_\Sigma}{\gamma} \right)}}{- \frac{2C_2}{\gamma} \frac{ c_m \ell_\Sigma}{ \epsilon}\frac{2\rho}{1-\rho}}\\
			& = \frac{ 1 - \rho}{4 \rho} \left( \sqrt{1 + \frac{8\rho}{1-\rho} \left(1 - \frac{L_\Sigma}{\gamma} \right) \left(\frac{\gamma}{C_2}\frac{ \epsilon} { c_m \ell_\Sigma}\right)  } - 1 \right)\\
			& \stackrel{\eqref{eq:epsilon}}{=} \frac{ 1 - \rho}{4 \rho} \left( \sqrt{1 + \left(\gamma - L_\Sigma \right) \left(\frac{\mu_{\Sigma}}{ \ell_{\Sigma}^2} \frac{ 2 \rho}  { C_2^2   c_m^2 }\right)  } - 1 \right).
		\end{split}
	\end{align*}
	Therefore, condition~\eqref{eq:rate_condition_2} is fulfilled for all $z$ in the form 
	\begin{align*}
		z & = \rho + \frac{\rho}{G_{{\rm net}}(z) }
		\geq \rho + \frac{4 \rho^2}{1 - \rho} \left( \sqrt{1 + \left(\gamma - L_\Sigma \right) \left(\frac{\mu_{\Sigma}}{ \ell_{\Sigma}^2} \frac{ 2 \rho}  { C_2^2   c_m^2 }\right)  } - 1 \right)^{-1}.
	\end{align*} $\hfill \square$
 
\section{Proof of Lemma~\ref{lem:rate_condition_1}}\label{sec:proof_lem:rate_condition_1}
	We substitute $G_{{\rm net}}(z) = \frac{\rho}{z - \rho}$ into~\eqref{eq:z_ub_2} and solve the equation
	\begin{align}\label{eq:def_z_0}
		z = \frac{A + \frac{\rho}{z - \rho} }{D - \frac{\rho}{z - \rho}}.
	\end{align}
	Let $\Delta$ denote the discriminant of the second order equation in $z $ \eqref{eq:z_ub_2}. 
Since 
	\begin{align*}
		\begin{split}
			\Delta & = (D\rho  + \rho + A )^2 - 4D(A- 1)\rho
			 = (D\rho  + \rho - A )^2 + 4 (A + D) \rho \geq 0 ,
		\end{split}
	\end{align*}
	Eq.~\eqref{eq:def_z_0} has two real roots ($z_1 \leq z_2$)
	\begin{align*}
		z_{1,2} = \frac{(D +1)\rho  + A  \pm \sqrt{(D\rho  + \rho + A )^2 - 4D(A- 1)\rho}}{2D }.
	\end{align*}
	By inspection, we have that $z_1 \geq 0.$ Hence, \eqref{eq:z_ub_2} is satisfied for all $z \in (0,z_1] \cup [z_2, 1)$.
	
	Next we  show  $z_1 \leq \rho$ and $z_2 \geq \rho$. As $z > \rho $ must be satisfied,  $z \in [z_2, 1)$. Suppose on the contrary that $z_1 > \rho$, which implies that
	\begin{align*}
		\begin{split}
			& \rho + A - D \rho > \sqrt{(D \rho + \rho + A)^2 - 4D(A  - 1)\rho}\\
			\Leftrightarrow & (\rho+A-D\rho)^2 > (\rho+A+D\rho)^2 - 4D(A - 1)\rho \quad \text{and} \quad \rho +  A - D \rho >0.
		\end{split}
	\end{align*}
	After simplification, the first inequality becomes
	$
		4 D \rho^2 + 4 D \rho<0,
	$
	which is impossible. Therefore we have proved that $z_1 \leq \rho$.
	
	As both $z_1$ and $z_2$ are nonnegative, we can simply upperbound $z_2$ as $z_2 \leq z_1 + z_2 $. This completes the proof. $\hfill \square$

\section{Proof of Proposition~\ref{prop:z-upbdd}}\label{pf:prop:rate_bound_1}

Substituting the expression of $\gamma$ given by~\eqref{eq:gamma_expression} into~\eqref{eq:z_rho} we obtain:
\begin{align}\label{eq:rate_rho}
	\begin{split}
		z _\rho & = \rho +  \sqrt{\rho}(1 - \rho)^3 \frac{ C_2^2   } {  C_3  } \left(  \sqrt{1 +   \frac{ \rho \sqrt{\rho} }{ (1 - \rho)^4}  \left( \frac{ 2 C_3  }  { C_2^2   }\right)  } + 1 \right)\\
		& \leq \rho +  \sqrt{\rho}(1 - \rho)^3 \frac{  C_2^2   } {  C_3  } \left(  \frac{ \sqrt{\rho \sqrt{\rho}}}{ (1 - \rho)^2} \cdot  \frac{\sqrt{2   C_3 }}{C_2} + 2 \right)\\
		& \leq  \rho +  \frac{ \sqrt{2} C_2   } {  \sqrt{C_3}  }  \rho (1 - \rho) + \frac{ 2 C_2^2   } {  C_3  } \sqrt{\rho} (1 - \rho)^3 \triangleq \bar{z}_\rho.
	\end{split}
\end{align}

The rest of the proof bounds $z_\sigma$ based on Lemma~\ref{lem:rate_condition_1}.
Substituting  $\gamma$ into the expression of $A$ and $D$ we get
	\begin{align}
		\begin{split}
			\frac{A}{D} 
			& = \frac{L_\Sigma  -  \mu_\Sigma+ C_1 s (\tau_\mu + \tau_g) +  C_3 \frac{  \ell_{\Sigma}^2 }{\mu_{\Sigma}}   \frac{c_m^2 \sqrt{\rho}}{ (1 - \rho)^4} }
			{L_\Sigma  -  C_1 s \tau_g  - \frac{\rho}{2} \mu_{\Sigma} - \frac{\rho}{2} \frac{\mu_{\Sigma}}{ \ell_{\Sigma}}    s \tau_\ell   +  C_3 \frac{  \ell_{\Sigma}^2 }{\mu_{\Sigma}}   \frac{c_m^2 \sqrt{\rho}}{ (1 - \rho)^4} }\\
			& = \frac{1  -  \kappa^{-1}+ C_1 s (\tau_\mu + \tau_g)/L_\Sigma +  C_3 \frac{  \ell_{\Sigma}^2 }{\mu_{\Sigma} L_\Sigma}   \frac{c_m^2 \sqrt{\rho}}{ (1 - \rho)^4} }
			{1  -  C_1 s \tau_g/L_\Sigma  - \frac{\rho}{2} \kappa^{-1} - \frac{\rho}{2} \frac{\mu_{\Sigma}}{ \ell_{\Sigma} L_\Sigma}   s \tau_\ell   +  C_3 \frac{  \ell_{\Sigma}^2 }{\mu_{\Sigma}L_\Sigma}   \frac{c_m^2 \sqrt{\rho}}{ (1 - \rho)^4} }.
		\end{split}
	\end{align}
Dividing  the numerator and denominator by $1- C_1 s \tau_g/L_\Sigma$ and by the definition of $\sigma_0$:
	\begin{align}\label{eq:A/D}
			\frac{A}{D}= & \frac{\sigma_{0}  + C_3 \frac{  \ell_{\Sigma}^2 }{\mu_{\Sigma}L_\Sigma}   \frac{c_m^2 \sqrt{\rho}}{ (1 - \rho)^4} (1 - C_1 s \tau_g /L_\Sigma)^{-1}}{1 + \left(  - \frac{\rho}{2} \kappa^{-1} - \frac{\rho}{2} \frac{\mu_{\Sigma}}{ \ell_{\Sigma} L_\Sigma}   s \tau_\ell  +  C_3 \frac{  \ell_{\Sigma}^2 }{\mu_{\Sigma}L_\Sigma}   \frac{c_m^2 \sqrt{\rho}}{ (1 - \rho)^4} \right) (1 - C_1 s \tau_g /L_\Sigma)^{-1} } \notag\\
			\stackrel{A\leq D}{\leq}  & \,\,\frac{\sigma_{0}  + C_3 \frac{  \ell_{\Sigma}^2 }{\mu_{\Sigma}L_\Sigma}   \frac{c_m^2 \sqrt{\rho}}{ (1 - \rho)^4} (1 - C_1 s \tau_g /L_\Sigma)^{-1} + \left(   \frac{\rho}{2} \kappa^{-1} + \frac{\rho}{2} \frac{\mu_{\Sigma}}{ \ell_{\Sigma} L_\Sigma}    s \tau_\ell     \right) (1 - C_1 s \tau_g /L_\Sigma)^{-1}  }
			{1 + \left(    C_3 \frac{  \ell_{\Sigma}^2 }{\mu_{\Sigma}L_\Sigma}   \frac{c_m^2 \sqrt{\rho}}{ (1 - \rho)^4} \right) (1 - C_1 s \tau_g /L_\Sigma)^{-1} } \notag\\
			\leq & \sigma_{0}  + C_3 \frac{  \ell_{\Sigma}^2 }{\mu_{\Sigma}L_\Sigma}   \frac{c_m^2 \sqrt{\rho}}{ (1 - \rho)^4} (1 - C_1 s \tau_g /L_\Sigma)^{-1} + \frac{\rho}{2} \left(   \kappa^{-1} +  \frac{\mu_{\Sigma}}{ \ell_{\Sigma} L_\Sigma}    s \tau_\ell     \right) (1 - C_1 s \tau_g /L_\Sigma)^{-1} \notag \\
			= & \sigma_{0} + \sqrt{\rho} \cdot \frac{C_3 \frac{  \ell_{\Sigma}^2 }{\mu_{\Sigma}L_\Sigma}   \frac{c_m^2 }{ (1 - \rho)^4}  + \frac{1}{2} \left(   \kappa^{-1} +  \frac{\mu_{\Sigma}}{ \ell_{\Sigma} L_\Sigma}    s \tau_\ell     \right) }{1 - C_1 s \tau_g /L_\Sigma} \notag\\
			\stackrel{\rho \leq 1/2}{\leq }  & \sigma_{0} + \sqrt{\rho} \cdot \frac{16 C_3  c_m^2 \frac{  \ell_{\Sigma}^2 }{\mu_{\Sigma}L_\Sigma}     + \frac{1}{2} \left(   \kappa^{-1} +  \frac{\mu_{\Sigma}}{ \ell_{\Sigma} L_\Sigma}    s \tau_\ell     \right) }{1 - C_1 s \tau_g /L_\Sigma},
	\end{align}
	where  for $A \leq D$ to hold, we require 
	\begin{align}\label{S.1}
		\left( 1-  \frac{\rho}{2} \right)\mu_{\Sigma}  \geq C_1 s \tau_g   + C_1 s (\tau_\mu + \tau_g) + \frac{\rho}{2} \frac{\mu_{\Sigma}}{ \ell_{\Sigma} }    s \tau_\ell .   
	\end{align}
	
	Similarly, we can bound $ \rho + \rho/D$ as 
	\begin{align}\label{eq:bound_rho_B}
			\rho + \frac{\rho}{D} 
			& =  \rho + \frac{ \rho\ \frac{ \ell_{\Sigma}}{\mu_{\Sigma} L_{\Sigma}} \frac{2 C_2^2   c_m^2 }{1 - \rho}   s \tau_\ell }{1 - C_1 s \tau_g/L_{\Sigma} +  C_3 \frac{  \ell_{\Sigma}^2 }{\mu_{\Sigma}L_{\Sigma}}   \frac{c_m^2 \sqrt{\rho}}{ (1 - \rho)^4} -   \left\{  \frac{\rho}{2} \kappa^{-1} +   \frac{\rho}{2} \frac{\mu_{\Sigma}}{ \ell_{\Sigma} L_{\Sigma}}     s \tau_\ell  \right\}} \notag\\
			& 	\stackrel{\rho \leq D}{\leq} \rho + \frac{ \rho\ \frac{ \ell_{\Sigma}}{\mu_{\Sigma} L_{\Sigma}} \frac{2 C_2^2   c_m^2 }{1 - \rho}   s \tau_\ell   +   \frac{\rho}{2} \frac{\mu_{\Sigma}}{ \ell_{\Sigma} L_{\Sigma}}     s \tau_\ell }{1 - C_1 s \tau_g/L_{\Sigma} -    \frac{\rho}{2} \kappa^{-1} +  C_3 \frac{  \ell_{\Sigma}^2 }{\mu_{\Sigma}L_{\Sigma}}   \frac{c_m^2 \sqrt{\rho}}{ (1 - \rho)^4} }\\
			& \stackrel{\rho \kappa^{-1}\leq 1}\leq \rho + \frac{ \rho\ \frac{ \ell_{\Sigma}}{\mu_{\Sigma} L_{\Sigma}} \frac{2 C_2^2   c_m^2 }{1 - \rho}   s \tau_\ell   +   \frac{\rho}{2} \frac{\mu_{\Sigma}}{ \ell_{\Sigma} L_{\Sigma}}     s \tau_\ell }{\frac{1}{2} - C_1 s \tau_g/L_{\Sigma}  +  C_3 \frac{  \ell_{\Sigma}^2 }{\mu_{\Sigma}L_{\Sigma}}   \frac{c_m^2 \sqrt{\rho}}{ (1 - \rho)^4}},\notag
	\end{align}
	where   $\rho \leq D $ requires 	\begin{align*}
		L_{\Sigma}  +  C_3 \frac{  \ell_{\Sigma}^2 }{\mu_{\Sigma}}   \frac{c_m^2 \sqrt{\rho}}{ (1 - \rho)^4} \geq  \rho\ \frac{ \ell_{\Sigma}}{\mu_{\Sigma} } \frac{2 C_2^2   c_m^2 }{1 - \rho}   s \tau_\ell  + C_1 s \tau_g +  \frac{\rho}{2} \mu_{\Sigma} +   \frac{\rho}{2} \frac{\mu_{\Sigma}}{ \ell_{\Sigma} }     s \tau_\ell .
	\end{align*}
	Using the fact that $\rho \leq 1$ and $L_{\Sigma} \geq \mu_{\Sigma}$, it suffices  
	\begin{align}\label{S.2}
		\mu_{\Sigma}   \geq  \rho\ \frac{ \ell_{\Sigma}}{\mu_{\Sigma} } \frac{4 C_2^2   c_m^2 }{1 - \rho}   s \tau_\ell  +2 C_1 s \tau_g  +   \rho \frac{\mu_{\Sigma}}{ \ell_{\Sigma} }    s \tau_\ell.
	\end{align}
	Note that  \eqref{eq:sample_condition} implies both (\ref{S.1}) and (\ref{S.2}). 
	
	Finally, we further simply the bound  in~\eqref{eq:bound_rho_B} as follows. Multiplying both the numerator and denominator by $(1 - \rho)^4$ we get
	\begin{align*}
		\begin{split}
			\rho + \frac{\rho}{D} \leq \rho + \frac{ \rho\ \frac{ \ell_{\Sigma}}{\mu_{\Sigma} L_{\Sigma}} \cdot 2 C_2^2   c_m^2    s \tau_\ell  (1 - \rho)^3 +   \frac{\rho}{2} \frac{\mu_{\Sigma}}{ \ell_{\Sigma} L_{\Sigma}}     s \tau_\ell (1 - \rho)^4}{(\frac{1}{2} - C_1 s \tau_g/L_{\Sigma} ) (1 - \rho)^4 +  C_3 c_m^2 \sqrt{\rho} \frac{  \ell_{\Sigma}^2 }{\mu_{\Sigma}L_{\Sigma}}  }.
		\end{split}
	\end{align*}
	Define function 
	\begin{align*}
		\begin{split}
			f(\rho) = \left(\frac{1}{2} - C_1 s \tau_g/L_{\Sigma} \right) (1 - \rho)^4 +  C_3 c_m^2 \sqrt{\rho} \frac{  \ell_{\Sigma}^2 }{\mu_{\Sigma}L_{\Sigma}}.
		\end{split}
	\end{align*}
	Differentiating $f$ with respect to $\rho$ yields
	\begin{align*}
		\begin{split}
			f'(\rho) = - 4 \left(\frac{1}{2} - C_1 s \tau_g/L_{\Sigma} \right) (1 - \rho)^3 +  C_3 c_m^2  \frac{  \ell_{\Sigma}^2 }{\mu_{\Sigma}L_{\Sigma}} \frac{1}{2 \sqrt{\rho}} >0,
		\end{split}
	\end{align*}
	for $C_3 \geq 4$, due to $c_m \geq 1$, $\ell_{\Sigma} \geq \mu_{\Sigma}$, and $\ell_{\Sigma} \geq L_{\Sigma}$. Therefore, $f(\rho)$ is monotonically increasing in $(0,1)$ and it holds
	\begin{align}\label{eq:bound_rho_B-1}
		\begin{split}
			\rho + \frac{\rho}{D} & \leq  \rho +    \rho\cdot\frac{  \frac{ \ell_{\Sigma}}{\mu_{\Sigma} L_{\Sigma}} \cdot 4 C_2^2   c_m^2    s \tau_\ell   +   \frac{\mu_{\Sigma}}{ \ell_{\Sigma} L_{\Sigma}}     s \tau_\ell }{1 - 2 C_1 s \tau_g/L_{\Sigma}}.
		\end{split}
	\end{align}

 Combining~\eqref{eq:A/D} and~\eqref{eq:bound_rho_B-1}, 
 $z_\sigma$   can be readily bounded as 
\begin{align}\label{eq:z_sigma_bound}
		\ z_\sigma 
		\leq & \sigma_{0} + \sqrt{\rho} \cdot \frac{16 C_3  c_m^2 \frac{  \ell_{\Sigma}^2 }{\mu_{\Sigma}L_\Sigma}     + \frac{1}{2} \left(   \kappa^{-1} +  \frac{\mu_{\Sigma}}{ \ell_{\Sigma} L_\Sigma}    s \tau_\ell     \right) }{1 - C_1 s \tau_g /L_\Sigma} + \rho +  \rho \cdot \frac{  \frac{ \ell_{\Sigma}}{\mu_{\Sigma} L_{\Sigma}} \cdot 4 C_2^2   c_m^2    s \tau_\ell   +   \frac{\mu_{\Sigma}}{ \ell_{\Sigma} L_{\Sigma}}     s \tau_\ell }{1 - 2 C_1 s \tau_g/L_{\Sigma}  } \notag\\
		\leq \ & \sigma_0 + \sqrt{\rho} \cdot \frac{18 C_3  c_m^2 \frac{  \ell_{\Sigma}^2 }{\mu_{\Sigma}L_\Sigma} + \frac{ \ell_{\Sigma}}{\mu_{\Sigma} L_{\Sigma}} \cdot 4 C_2^2   c_m^2    s \tau_\ell   +   \frac{2 \mu_{\Sigma}}{ \ell_{\Sigma} L_{\Sigma}}     s \tau_\ell  }{1 - 2 C_1 s \tau_g/L_{\Sigma} },
\end{align}
where in the last inequality we have used the fact that $C_3  c_m^2 \frac{  \ell_{\Sigma}^2 }{\mu_{\Sigma}L_\Sigma}  \geq 1$, $\rho \leq 1$ and $\kappa^{-1} \leq 1$. To make the bound not vacuous, we need to  assume  $1 - 2 C_1 s \tau_g /L_\Sigma>0$ [cf.~\eqref{eq:cond_denom_z_sigma}].
Combining  \eqref{eq:rate_rho} and \eqref{eq:z_sigma_bound} completes the proof. $\hfill \square$

 \section{Proof of Corollary~\ref{cor:z-upbdd} }\label{pf:cor-z-upbdd}
 
Recall the definition of $\sigma_{0}$ as in \eqref{eq:sigma_0}:
 \begin{align*}
     	\sigma_{0} \triangleq  \frac{1 - \kappa^{-1} + C_1 s (\tau_\mu + \tau_g)/ L_\Sigma }{1 - C_1 s \tau_g /L_\Sigma},\quad \kappa \triangleq \frac{L_{\Sigma}}{\mu_\Sigma}.
 \end{align*}
 By requiring 
 \begin{equation}\label{eq:bound_rho}
	\sqrt{\rho} \leq \left\{2 \kappa \left(18 C_3  c_m^2 \frac{  \ell_{\Sigma}^2 }{\mu_{\Sigma}L_\Sigma} + \frac{ \ell_{\Sigma}}{\mu_{\Sigma} L_{\Sigma}} \cdot 4 C_2^2   c_m^2    s \tau_\ell   +   \frac{2 \mu_{\Sigma}}{ \ell_{\Sigma} L_{\Sigma}}     s \tau_\ell \right) \right\}^{-1}
\end{equation}
we have 
\begin{align*}
    \sigma_0 + \sqrt{\rho} \cdot \frac{18 C_3  c_m^2 \frac{  \ell_{\Sigma}^2 }{\mu_{\Sigma}L_\Sigma} + \frac{ \ell_{\Sigma}}{\mu_{\Sigma} L_{\Sigma}} \cdot 4 C_2^2   c_m^2    s \tau_\ell   +   \frac{2 \mu_{\Sigma}}{ \ell_{\Sigma} L_{\Sigma}}     s \tau_\ell  }{1 - 2 C_1 s \tau_g/L_{\Sigma} }  \leq \frac{1 - (2\kappa)^{-1} + C_1 s (\tau_\mu + \tau_g)/L_\Sigma}{1 - 2 C_1 s \tau_g /L_\Sigma}.
\end{align*}
Choosing $C_3 = 4$ we obtain an upper bound of RHS of \eqref{eq:rate_expression} given by~\eqref{eq:def_z_bar}, under network connectivity condition~\label{eq:bound_rho-1}.

Thus far,  we can conclude that (\ref{eq:rate_condition_1})-(\ref{eq:rate_condition_2}) are satisfied by  any $z$ and $\gamma$ such that  $z>\bar{z}_{\text{up}}$, with $\gamma$ and $\bar{z}_{\text{up}}$ defined in~\eqref{eq:gamma_expression_final} and (\ref{eq:def_z_bar}), respectively,
as long as the following conditions hold [note that  conditions  \eqref{eq:cond_1} and (\ref{eq:cond_denom_z_sigma}) are already implied by \eqref{eq:gamma_expression_final} and  \eqref{eq:sample_condition}, respectively]: 
\begin{itemize}
    \item step size lower bound \eqref{eq:cond_2},  with $\epsilon$ chosen according to \eqref{eq:epsilon};
    \item  RSC/RSM tolerance parameters condition \eqref{eq:sample_condition}; 
    \item  and network connectivity condition~\eqref{eq:bound_rho}. 
\end{itemize}

Finally, we  simplify these conditions. First,  \eqref{eq:cond_2},   with $\epsilon$ given by \eqref{eq:epsilon}, is a consequence of   \eqref{eq:sample_condition} and \eqref{eq:gamma_expression_final}. In fact,  
substituting the expression of $\gamma$ and $\epsilon$ into~\eqref{eq:cond_2},  \eqref{eq:cond_2} reads
\begin{align*}
	& L_\Sigma  +  4 \frac{  \ell_{\Sigma}^2 }{\mu_{\Sigma}}   \frac{c_m^2 \sqrt{\rho}}{ (1 - \rho)^4} > C_1 s \tau_g + \frac{\rho}{2} \frac{\mu_\Sigma}{\ell_\Sigma}(\ell_{\Sigma} + s\tau_l) = C_1 s \tau_g + \frac{\rho}{2} \mu_\Sigma + \frac{\rho}{2}  \frac{\mu_\Sigma}{\ell_\Sigma}( s\tau_l)\\
	\Leftrightarrow \qquad & 2L_\Sigma -\rho \mu_\Sigma +  8 \frac{  \ell_{\Sigma}^2 }{\mu_{\Sigma}}   \frac{c_m^2 \sqrt{\rho}}{ (1 - \rho)^4} > 2 C_1 s \tau_g  + \rho \frac{\mu_\Sigma}{\ell_\Sigma}( s\tau_l).
\end{align*}
Since $\rho \leq 1$ and $\mu_\Sigma \leq L_\Sigma$, clearly\eqref{eq:sample_condition} implies~\eqref{eq:cond_2}.  
 Second, we claim that \eqref{eq:sample_condition} is implied by~\eqref{eq:bound_rho} and~\eqref{eq:sample_condition_1}.
Using the facts that $c_m \geq 1$ and $\ell_\Sigma/\mu_\Sigma \geq 1$, \eqref{eq:bound_rho} implies the following:   $\rho \leq 1/4$ and  
\begin{align}\label{eq:whatever}
    2 \sqrt{\rho}   \left( \frac{ \ell_{\Sigma}}{\mu_{\Sigma}^2 } \cdot 4 C_2^2   c_m^2    s \tau_\ell   +   \frac{2 }{ \ell_{\Sigma} }     s \tau_\ell \right)  \leq 1.
\end{align}
Consequently, we can upperbound the second and third  summands on the RHS of~\eqref{eq:sample_condition} as 
\begin{align*}
    & \frac{ \ell_{\Sigma}} {\mu_{\Sigma}}\frac{8 C_2^2   c_m^2 \rho }{(1 - \rho)^2}   s \tau_\ell +  \rho  \cdot \frac{\mu_{\Sigma}}{ \ell_{\Sigma}} \frac{}  {  }    s \tau_\ell \\
    \leq \ & 
    \frac{ \ell_{\Sigma}} {\mu_{\Sigma}}\frac{8 C_2^2   c_m^2 \rho }{9/16}   s \tau_\ell +  \rho  \cdot \frac{\mu_{\Sigma}}{ \ell_{\Sigma}}     s \tau_\ell \\
    \leq \ & \frac{16}{9} \cdot  \rho \left( \frac{\ell_{\Sigma}}{\mu_{\Sigma}} 8 C_2^2   c_m^2    s \tau_\ell +  \frac{ 4 \mu_{\Sigma}}{\ell_{\Sigma}} s \tau_{\ell}\right) \leq \frac{16}{9} \sqrt{\rho} \mu_\Sigma \leq \frac{8}{9} \mu_\Sigma.
\end{align*}
This together with~\eqref{eq:sample_condition_1} implies~\eqref{eq:sample_condition}. $\hfill \square$

\section{Proof of Proposition~\ref{prop:bound-B-STAT}} \label{pf:bound-B-STAT}

      We first bound $B(z)$. Substituting the  $\gamma$ [cf.~\eqref{eq:gamma_expression_final}] and $\epsilon$ [cf.~\eqref{eq:epsilon}] into into~\eqref{eq:BZ}:
\begin{align*}
		B(z) 
		\leq & 
		\frac{ C_2 s \tau_\ell }{L_\Sigma }   
		\frac{ \ell_{\Sigma}}{\mu_{\Sigma}} \frac{2 C_2   c_m^2 }{1 - \rho}  
		\frac{\rho}{z - \rho}  \|\bTheta^{0}-\hTheta\|^2 \notag\\
		& 
		+\frac{ C_2}{L_\Sigma  } m c_m \left( A_3 \cdot  \frac{ \ell_{\Sigma}}{\mu_{\Sigma}} \frac{2 C_2   c_m }{1 - \rho}    + s c_g^2  \frac{ 1}{\mu_{\Sigma}} \frac{2 C_2   c_m }{1 - \rho}  + c_g \nu\right) \frac{\rho}{z - \rho}
		+   \|\bTheta^{1/2} - \hTheta\|^2 \notag\\
	\overset{(\ref{eq:lower-bound-z-step-2})}{\leq}	  & 
		\frac{  s \tau_\ell }{L_\Sigma }   
		\frac{ \ell_{\Sigma}}{\mu_{\Sigma}} \frac{2 C_3   c_m^2 \sqrt{\rho}}{(1 - \rho)^4}  
	 \|\bTheta^{0}-\hTheta\|^2 \notag\\
		& 
		+\frac{ 1}{L_\Sigma  } m c_m \sqrt{\rho} \left( A_3 \cdot  \frac{ \ell_{\Sigma}}{\mu_{\Sigma}} \frac{2 C_2   c_m }{1 - \rho}    + s c_g^2  \frac{ 1}{\mu_{\Sigma}} \frac{2 C_2   c_m }{1 - \rho}  + c_g \nu\right) \frac{ C_3}{(1 - \rho)^3 C_2}
		+   \|\bTheta^{1/2} - \hTheta\|^2.
\end{align*}
Substituting the expression of $A_3$ [cf. Prop.~\ref{prop:track_error}] in the above bound  yields
\begin{align*}
    B(z) \leq &	 \sqrt{\rho}	\frac{  s \tau_\ell }{L_\Sigma }   
		\frac{ \ell_{\Sigma}}{\mu_{\Sigma}} \frac{2 C_3   c_m^2 }{(1 - \rho)^4}  
	 \|\bTheta^{0}-\hTheta\|^2 \notag\\
	 & 
		+ m  \sqrt{\rho}  \frac{ \ell_{\Sigma}}{\mu_{\Sigma}L_\Sigma }   \frac{2  C_3 c_m^2 }{(1 - \rho)^4 } \left( \ell_\Sigma \| \btheta^* \|^2 + \tau_\ell\|\btheta^* \|_1^2 \right) \notag \\
		& 
		+\frac{ 1}{L_\Sigma  } m c_m \sqrt{\rho} \left( s c_g^2  \frac{ 1}{\mu_{\Sigma}} \frac{2 C_2   c_m }{1 - \rho}  + c_g \nu\right) \frac{ C_3}{(1 - \rho)^3 C_2}
		+   \|\bTheta^{1/2} - \hTheta\|^2\\
		\leq &	\sqrt{\rho}	\frac{  s \tau_\ell }{L_\Sigma }   
		\frac{ \ell_{\Sigma}}{\mu_{\Sigma}} \frac{2 C_3   c_m^2 }{(1 - \rho)^4}  
	 \|\bTheta^{0}-\hTheta\|^2 \notag\\
	 & 
		+ m  \sqrt{\rho}  \frac{ \ell_{\Sigma}}{\mu_{\Sigma}L_\Sigma }   \frac{2 C_3 c_m^2  }{(1 - \rho)^4 } \left( \ell_\Sigma \| \btheta^* \|^2 + s\tau_\ell\|\btheta^* \|^2 \right)  \quad \text{($\btheta^*$ is $s$-sparse)} \notag\\
		& 
		+\frac{ 1}{L_\Sigma  } m c_m \sqrt{\rho} \left( s c_g^2  \frac{ 1}{\mu_{\Sigma}} \frac{2 C_2   c_m }{1 - \rho}  + c_g \nu\right) \frac{ C_3}{(1 - \rho)^3 C_2}
		+   \|\bTheta^{1/2} - \hTheta\|^2.
\end{align*}
Since~\eqref{eq:bound_rho_1} is equivalent to
\begin{align*}
    	2\sqrt{\rho} \kappa \left(18 C_3  c_m^2 \frac{  \ell_{\Sigma}^2 }{\mu_{\Sigma}L_\Sigma} + \frac{ \ell_{\Sigma}}{\mu_{\Sigma} L_{\Sigma}} \cdot 4 C_2^2   c_m^2    s \tau_\ell   +   \frac{2 \mu_{\Sigma}}{ \ell_{\Sigma} L_{\Sigma}}     s \tau_\ell \right) \leq 1,
\end{align*}
with $C_3 = 4$, 
together with the fact that $\kappa \geq 1$, we have
\begin{align*}
\sqrt{\rho}  \frac{  \ell_{\Sigma}^2 }{\mu_{\Sigma} L_\Sigma} C_3  c_m^2  \leq \frac{1}{36}
    \quad \text{and} \quad 
     \sqrt{\rho}   \frac{ \ell_{\Sigma}}{\mu_{\Sigma} L_{\Sigma}} \cdot  C_2^2   c_m^2    s \tau_\ell  \leq \frac{1}{8}.
\end{align*}
In addition, from~\eqref{eq:bound_rho_1} we get $\rho \leq 1/4$, and thus we obtain~\eqref{eq:B_final}.

We bound now $\overline{\Delta}_{\rm stat}$.  Using the expression of $\gamma$ [cf.~\eqref{eq:gamma_expression_final}] and $\epsilon$ [cf.~\eqref{eq:epsilon}], we have
\begin{align*}
\begin{split}
        \overline{\Delta}_{\rm stat} & =\frac{ C_2}{\gamma}\frac{\rho \cdot  c_m }{1 - \rho}  \tau_\ell \nu^2 (\epsilon + \epsilon^{-1})  + \frac{C_1 (\tau_\mu + \tau_g )}{\gamma}     \nu^2\\
& \leq \frac{ C_2}{L_\Sigma}\frac{\rho \cdot  c_m }{1 - \rho}  \tau_\ell \nu^2 \left(\frac{\mu_{\Sigma}}{ \ell_{\Sigma}} \frac{1 - \rho}  {2 C_2   c_m } + \frac{ \ell_{\Sigma}}{\mu_{\Sigma}} \frac{2 C_2   c_m }{1 - \rho}  \right)  + \frac{C_1 (\tau_\mu + \tau_g )}{L_\Sigma}     \nu^2\\
& =  \left(\frac{\rho    }{2}   \frac{\mu_{\Sigma}}{ \ell_{\Sigma}L_\Sigma}  +  \rho  \frac{ \ell_{\Sigma}}{\mu_{\Sigma}L_\Sigma} \frac{2 C_2^2   c_m^2 }{(1 - \rho)^2}  \right) \tau_\ell \nu^2 + \frac{C_1 (\tau_\mu + \tau_g )}{L_\Sigma}     \nu^2\\
& \leq \frac{\rho}{2 L_\Sigma}\left( 1  +    \frac{ \ell_{\Sigma}}{\mu_{\Sigma}} \frac{4 C_2^2   c_m^2 }{(1 - \rho)^2}  \right) \tau_\ell \nu^2  + \frac{C_1 (\tau_\mu + \tau_g )}{L_\Sigma}     \nu^2
 \leq {\Delta}_{{\rm stat}}     \quad \text{(given $C_2 \geq 1$).}
\end{split}
\end{align*} $\hfill \square$

  \section{Proof of Lemma \ref{lem:gradient_bound}}  \label{proof_lem:gradient_bound} 
  We first bound $\max_{j \in [m]}\|\bX_j^\top \mathbf{n}_j/n\|_\infty$.  Each column of $\bX_j$ is an $n$-dimensional Gaussian random vector with independent entries. The maximum variance of these entries is bounded by $ \zeta =\max_{j \in [d]} \Sigma_{jj} $. 
Since $\bX_j$ is independent of $\mathbf{n}_j$ and the elements of $\mathbf{n}_j$ are i.i.d. $\sigma^2$-sub-Gaussian,   each element of $\bX_j^\top \mathbf{n}_j$ is the sum of $n$ independent  sub-exponential random variables with sub-exponential norm no larger than $\sqrt{\zeta} \sigma$. Applying the Bernstein's inequality and the union bound we get
\begin{align*}
    \begin{split}
        \mathbb{P} \left( \max_{j \in [m]}\left\|\frac{\bX_j^\top \mathbf{n}_j}{n} \right\|_\infty \geq t\right) \leq  2 \exp \left( - c_3 \min \left\{ \frac{t^2}{\zeta \sigma^2}, \frac{t}{ \sqrt{\zeta} \sigma}\right\} n + \log md\right),
    \end{split}
\end{align*}
for some $c_3 >0$. 
When $\log md \geq (c_3/2 ) \cdot n$, we choose $t = (2/c_3) \sqrt{\zeta} \sigma  \cdot \log md/n \geq \sqrt{\zeta} \sigma$, which leads to 
\begin{align*}
    \begin{split}
        \mathbb{P} \left( \max_{j \in [m]}\left\|\frac{\bX_j^\top \mathbf{n}_j}{n} \right\|_\infty \geq  \sqrt{\zeta} \sigma  \cdot \frac{2}{c_3}\frac{\log md}{n}\right) \leq  2 \exp \left( -  \log md\right).
    \end{split}
\end{align*}
On the other hand,  when $\log md < (c_3/2 ) \cdot n$, we choose $t =  \sqrt{\zeta} \sigma  \cdot \sqrt{(2/c_3)\log md/n}<  \sqrt{\zeta} \sigma$, which leads to 
\begin{align*}
    \begin{split}
    \mathbb{P} \left( \max_{j \in [m]}\left\|\frac{\bX_j^\top \mathbf{n}_j}{n} \right\|_\infty \geq  \sqrt{\zeta} \sigma  \cdot \sqrt{\frac{2}{c_3}\frac{ \log md}{n}}\right) \leq  2 \exp \left( -  \log md\right).
    \end{split}
\end{align*}
Combining the two cases we obtain~\eqref{eq:bound_loc_gradient}.   Applying the same procedure to $\|\nabla \cL (\btheta^*)\|_\infty$, proves~\eqref{eq:bound_g_gradient}. $\hfill \square$

\section{Proof of Proposition \ref{prop:LRSS}}\label{App:proof_prop:LRSS}
		The proof is a  modification of \citep{raskutti2010restricted}. Next, we only highlight the key differences, for completeness. Define the set $$V(r) = \{ r \in \mathbb{R}^d \,|\, \|\Sigma^{\frac{1}{2}} v \|_2 = 1, \|v\|_1 \leq r \},$$ for   fixed radius $r > 0$. Accordingly, define the random variable 
		$$
		M (r, \bX_i) = \sup_{v \in V(r) }\frac{\| \bX_i v\|}{\sqrt{n}}.$$
		
		Following similar steps as in \citep[Lemma 1]{raskutti2010restricted}, and using the Sudakov-Fernique inequality for Gaussian processes, we can bound the expectation of $M (r, \bX_i) $ as
		\begin{align}\label{eq:mean-ub}
			\begin{split}
				\mathbb{E}M (r, \bX_i) \leq 1 + 3 \rho(\Sigma) \sqrt{\frac{\log d}{n}} r  \triangleq t(r),
			\end{split}
		\end{align}
		where $\rho^2(\Sigma) = \max_j \Sigma_{jj}$. 
		Then applying the concentration inequality in the same way as in \citep[Lemma 2]{raskutti2010restricted} we obtain
		\begin{align}\label{eq:M-hpb}
			\mathbb{P} (M (r, \bX_i) > t(r) + \sqrt{m} t(r) ) \leq 2 \exp(- Nt(r)^2/2) \leq 2 \exp(-n ((\sqrt{m} + 1) t(r))^2/8).
		\end{align}
		Finally,   define the event 
		$$\mathcal{T}_i = \{ \exists \ v \in \mathbb{R}^d \text{ s.t.} \| \Sigma^{1/2} v \|_2 = 1 \text{ and } \|\bX_i v\|_2/\sqrt{n}> 2( \sqrt{m} + 1) t(\|v\|_1)\}.$$
		Applying \citep[Lemma 3]{raskutti2010restricted} with $$f(v, \bX_i) = \|\bX_i v\|_2/\sqrt{n}, \quad h(v) = \|v\|_1,\quad  g(r) = ( \sqrt{m} + 1) t(r),\quad  a_n = n/8  \quad \text{and} \quad  \mu = \sqrt{m},$$ we have $$\mathbb{P}(\mathcal{T}_i^c) \geq 1- c_2'\exp(-c_0' N),$$ for some $c_0' \geq1/2$.
		
		Let us use now the union bound: 
		\begin{align*}
			\mathbb{P}(\cup_i \mathcal{T}_i) \leq \sum_{i = 1}^{m} \mathbb{P} (\mathcal{T}_i) \leq c_2' m \exp (-c_0' N) = c_2' \exp (-c_0' N + \log m).
		\end{align*}
		Therefore 
		\begin{align}~\label{eq:RSM_loc_i}
			\frac{\|\bX_i v\|}{\sqrt{n}} \leq 2 (\sqrt{m} + 1) \|\Sigma^{1/2}v\| + 6 (\sqrt{m} + 1) \rho(\Sigma) \sqrt{\frac{\log d}{n}} \|v\|_1, \quad \forall v \in \mathbb{R}^d,\quad \forall i \in [m],
		\end{align}
		with probability larger than $1- c_2'\exp(-c_0' N + \log m)$. Since $N/2  \geq m/2 > \log m$, it holds $1- \exp(-c_0' N + \log m) \geq 1- \exp(-c_0'' N)$, for some $c_0'' >0$. $\hfill \square$
 
 \section{Supplement to the proof of Theorem~\ref{theorem:main}}\label{pf:thm-hp}
 \noindent\textbf{(i)} We show~\eqref{eq:cond_rho_hig-prob-setting}, i.e.,
\begin{align*}
    s \log d/N < c_5 \cdot \frac{\mu_\Sigma}{\zeta},\quad \text{and}\quad \rho < (c_6 m^8 \kappa^4)^{-1}
\end{align*}
implies \eqref{cond_mu_sigma}-\eqref{eq:rho_final_condition}. 
Substituting the expression of $\tau_\mu$ and $\tau_g$ it is  not hard to verify~\eqref{eq:cond_rho_hig-prob-setting} implies ~\eqref{cond_mu_sigma}: $\mu_\Sigma > 36 C_1 s (\tau_\mu + \tau_g)$,
with $c_5$ sufficiently small. Substituting $\tau_\ell$ into~\eqref{eq:rho_final_condition} we can see 
a sufficient condition for~\eqref{eq:rho_final_condition}   is  
  \begin{align}\label{eq:condtion_rho_hp}
     \rho 
      \leq C_6 \left(    m^3 \kappa^2 +  m^4 \kappa^2  \frac{ \zeta}{L_\Sigma}   \frac{s \log d}{N}   \right)^{-2},
  \end{align}
for some $C_6>0$; \eqref{eq:condtion_rho_hp}  is implied by~\eqref{eq:cond_rho_hig-prob-setting} for sufficiently large   $c_6>0$. 


\noindent\textbf{(ii)} We  use \eqref{eq:global_RSC-RSM_high-prob}-\eqref{eq:locsl_RSM_high-prob} to further simplify    the expressions of the  rate expression~\eqref{eq:rate_expression_final}, the   statistical error~\eqref{eq:Delta_stat_final}, and $\B$ in (\ref{eq:B_final}).  

The rate $\rate$ in ~\eqref{eq:rate_expression_final} can be bounded as 
   \begin{align*}
      \rate 
      & \leq \left(1 - (2 \kappa)^{-1} + 2 C_1  c_1 \frac{\zeta}{L_\Sigma} \frac{s \log d}{N} \right)\left(1 - 2 C_1 c_1 \frac{\zeta}{L_\Sigma} \frac{s \log d}{N}\right)^{-1}.
  \end{align*} 
The statistical error can be bounded as \begin{align}\label{eq:stat_err_hp}
    \begin{split}
     \Delta_{\rm stat} & = \frac{\rho}{2 L_\Sigma}\left(   \frac{ \ell_{\Sigma}}{\mu_{\Sigma}} \frac{5 C_2^2   c_m^2 }{(1 - \rho)^2}  \right) \tau_\ell \nu^2  + \frac{C_1 (\tau_\mu + \tau_g )}{L_\Sigma} \nu^2 \\
     & = {\rho}\left(   \frac{ 4}{\mu_{\Sigma}} \frac{5 C_2^2   m^2 }{(1 - \rho)^2}  \right) c_1 \zeta \frac{m^2 \log d}{N} \nu^2 + \frac{C_1 }{L_\Sigma} \left(2 c_1 \zeta  \frac{\log d}{N} \right) \nu^2\\
     & \leq C_5 \left(\frac{m^4 \rho \,\kappa }{(1 - \rho)^2} \frac{\zeta}{L_\Sigma} \frac{\log d}{N}   +  \frac{\zeta}{L_\Sigma} \frac{\log d}{N} \right)\nu^2\\
     & \leq  c_7\left( \frac{\zeta}{L_\Sigma}\frac{s \log d}{N}  \right) \|\htheta - \btheta^*\|^2,
     \end{split}
  \end{align}
for some $C_5, c_7 >0$, where in the last inequality we used $c_6 \rho m^8 \kappa^4 < 1$, $\nu^2=\mathcal{O}(s \|\htheta - \btheta^*\|^2)$ (due to   $
    \| \htheta - \btheta^* \|_1 \leq 2\sqrt{s} \|\htheta - \btheta^*\| 
$  \citep[Lemma 5]{agarwal2012fast}), and  $\rho < (3C_2^2)^{-2}$ implied by~\eqref{eq:rho_final_condition}.

   Finally,  for $B$ in  (\ref{eq:B_final}), we have 
   \begin{align}\label{eq:stat_bound_B}
\begin{split}
        \B  = &  \frac{C_4}{m} \left( \|\bTheta^0 - \hTheta\|^2 + \|\bTheta^*\|^2 + \|\bTheta^{1/2} - \hTheta\|^2 + \frac{m c_m \sqrt{\rho}}{L_\Sigma}\left( \frac{c_m s c_g^2}{\mu_\Sigma} + c_g \nu\right)\right)\\
     \leq & \frac{C_4}{m} \left(\|\bTheta^0 - \hTheta\|^2 + \|\bTheta^*\|^2 + \|\bTheta^{1/2} - \hTheta\|^2 
    \right)\\
    & + C_4 \frac{ 4 \zeta \sigma^2 }{\mu_\Sigma L_\Sigma}
     m s\sqrt{\rho}  \left( \frac{4}{c_3} \frac{m\log md}{N}   + \left( \frac{4}{c_3} \frac{m\log md}{N}  \right)^2 \right)\\
     & + C_4 \frac{ 12\sqrt{\zeta} \sigma }{L_\Sigma} \sqrt{m \rho s}  \left(     \sqrt{\frac{4}{c_3}\frac{ m \log md}{N}}+  
    \frac{4}{c_3}\frac{ m \log md}{N}\right)    \|\htheta - \btheta^*\|,
\end{split}
\end{align}
  where   we used  again $\|\htheta - \btheta^*\|_1 \leq 2 \sqrt{s} \|\htheta - \btheta^*\|$.  $\hfill \square$

  \section{Properties of The $z$-Transform}\label{app:z-transform}
\subsection*{Proof of Lemma~\ref{lem:z_trans}}  
{\bf (i):} For the time-shifted sequence $\{a(t)\}_{t=2}^{K+1}$, it holds
	\begin{align*}
	\sum_{t=1}^K a(t+1) z^{-t} & = z\sum_{t=1}^{K} a(t+1)z^{-t-1}\\
	& = z \left(\sum_{t=1}^K a(t)z^{-t} + a(K+1)z^{-K-1} - a(1)z^{-1}\right)\\
	& = z A^K(z) -  a(1) + a(K+1)z^{-K} \geq z A^K(z) -  a(1).
	\end{align*}

	\noindent {\bf (ii):} For the  sequence $\{\sum_{s=0}^{t-1}\rho^{t-s} a(s)\}_{t=1}^{K}$, we have  $\forall z \in (\rho,1)$:
	\begin{align*}
	\sum_{t=1}^K\sum_{s=0}^{t-1} z^{s-t} \rho^{t-s} a(s) z^{-s}
	& = \sum_{s=0}^{K-1} a(s) z^{-s} \sum_{t=s+1}^K z^{s-t} \rho^{t-s} \leq \frac{\rho}{z - \rho} \cdot \left( A^K (z) + a(0) \right).
	\end{align*}
	
	\noindent{\bf (iii):} Similarly,  the  sequence $\{\sum_{s=0}^{t-1}\rho^{t-s} a(s+1)\}_{t=1}^{K}$, we have
	\begin{align*}
	\sum_{t=1}^K  \sum_{s=0}^{t-1}\rho^{t-s} a(s+1) z^{-t} = \sum_{s=0}^{K-1}  a (s+1) z^{-s -1} \sum_{t=s+1}^K  \rho^{t-s} z^{-t + s + 1} \leq z \cdot \frac{\rho}{z - \rho} A^K(z).
	\end{align*}$\hfill \square$

\subsection*{Proof of Lemma~\ref{lem:linear_rate}} 
	Since $a(t) \geq 0$, \eqref{eq:condition} implies that for all $K \geq 1$ and $z \in (\bar{z},1)$, 
	$$
	\label{eq:maynothold}
	a(K) z^{-K} \leq B + c \cdot \sum_{t=1}^K z^{-t}.$$	Therefore, for all $z \in (\bar{z},1),$
	 $$a(K)  \leq B \cdot z^{K} + c \cdot \sum_{t=1}^K z^{K-t} \leq  B \cdot z^{K} +  \frac{c}{1-z}.$$ 
	Letting $z \to \bar{z}^+$ concludes the proof. $\hfill \square$

\vskip 0.2in
\bibliographystyle{plain}
\bibliography{note.bib,Network_learning.bib}

\begin{thebibliography}{10}

\bibitem{agarwal2012fast}
Alekh Agarwal, Sahand Negahban, Martin~J Wainwright, et~al.
\newblock Fast global convergence of gradient methods for high-dimensional
  statistical recovery.
\newblock {\em The Annals of Statistics}, 40(5):2452--2482, 2012.

\bibitem{battey2018distributed}
Heather Battey, Jianqing Fan, Han Liu, Junwei Lu, and Ziwei Zhu.
\newblock Distributed testing and estimation under sparse high dimensional
  models.
\newblock {\em Annals of Statistics}, 46(3):1352, 2018.

\bibitem{Berthier2020}
Raphael Berthier, Francis Bach, and Pierre Gaillard.
\newblock Accelerated gossip in networks of given dimension using jacobi
  polynomial iterations.
\newblock {\em SIAM J. on Mathematics of Data Science}, 1:24--47, 2020.

\bibitem{boyd2011distributed}
Stephen Boyd, Neal Parikh, and Eric Chu.
\newblock {\em Distributed optimization and statistical learning via the
  alternating direction method of multipliers}.
\newblock Now Publishers Inc, 2011.

\bibitem{charron2020}
Bernadette Charron-Bost.
\newblock Geometric bounds for convergence rates of averaging algorithms.
\newblock {\em arXiv preprint arXiv:2007.04837}, 2020.

\bibitem{chen2012fast}
Annie~I. Chen and A.~Ozdaglar.
\newblock A fast distributed proximal-gradient method.
\newblock In {\em 2012 50th Annual Allerton Conference on Communication,
  Control, and Computing (Allerton)}, pages 601--608, October 2012.

\bibitem{Chen-Sayed}
Jianshu Chen and Ali~H. Sayed.
\newblock Diffusion adaptation strategies for distributed optimization and
  learning over networks.
\newblock {\em IEEE Transactions on Signal Processing}, 60(8):4289--4305,
  August 2012.

\bibitem{chen2019quantile}
Xi~Chen, Weidong Liu, and Yichen Zhang.
\newblock Quantile regression under memory constraint.
\newblock {\em The Annals of Statistics}, 47(6):3244--3273, 2019.

\bibitem{chen2014split}
Xueying Chen and Min-ge Xie.
\newblock A split-and-conquer approach for analysis of extraordinarily large
  data.
\newblock {\em Statistica Sinica}, pages 1655--1684, 2014.

\bibitem{NEXT16}
Paolo {Di Lorenzo} and Gesualdo Scutari.
\newblock {NEXT: I}n-network nonconvex optimization.
\newblock {\em IEEE Transactions on Signal and Information Processing over
  Networks}, 2(2):120--136, June 2016.

\bibitem{CEASE-JASA}
Jianqing Fan, Yongyi Guo, and Kaizheng Wang.
\newblock Communication-efficient accurate statistical estimation.
\newblock {\em J. of the American Statistical Association}, 0(0):1--11, 2021.

\bibitem{fan2019distributed}
Jianqing Fan, Dong Wang, Kaizheng Wang, and Ziwei Zhu.
\newblock Distributed estimation of principal eigenspaces.
\newblock {\em Annals of statistics}, 47(6):3009, 2019.

\bibitem{hendrikx2020}
Hadrien Hendrikx, Lin Xiao, Sebastien Bubeck, Francis Bach, and Laurent
  Massoulie.
\newblock Statistically preconditioned accelerated gradient method for
  distributed optimization.
\newblock In {\em International Conference on Machine Learning}, pages
  4203--4227. PMLR, 2020.

\bibitem{ji2023distributed}
Yao Ji, Gesualdo Scutari, Ying Sun, and Harsha Honnappa.
\newblock Distributed ({ATC}) gradient descent for high dimension sparse
  regression.
\newblock {\em IEEE Transactions on Information Theory}, (69):5253--5276, 2023.

\bibitem{Ji-DGD21}
Yao Ji, Gesualdo Scutari, Ying Sun, and Harsha Honnappa.
\newblock Distributed sparse regression via penalization.
\newblock {\em J. of Machine Learning Research}, (24):1--62, 2023.

\bibitem{jordan2018communication}
Michael~I Jordan, Jason~D Lee, and Yun Yang.
\newblock Communication-efficient distributed statistical inference.
\newblock {\em J. of the American Statistical Association}, 2018.

\bibitem{kleiner2014scalable}
Ariel Kleiner, Ameet Talwalkar, Purnamrita Sarkar, and Michael~I Jordan.
\newblock A scalable bootstrap for massive data.
\newblock {\em J. of the Royal Statistical Society: Series B (Statistical
  Methodology)}, 76(4):795--816, 2014.

\bibitem{lee2017communication}
Jason~D Lee, Qiang Liu, Yuekai Sun, and Jonathan~E Taylor.
\newblock Communication-efficient sparse regression.
\newblock {\em J. of Machine Learning Research}, 18(1):115--144, 2017.

\bibitem{smith_fedLen_SPMag20}
Tian Li, Anit~Kumar Sahu, Ameet Talwalkar, and Virginia Smith.
\newblock Federated learning: challenges, methods, and future directions.
\newblock {\em IEEE Signal Processing Magazine}, 37(3):50--60, 2020.

\bibitem{lin2014adaptive}
Qihang Lin and Lin Xiao.
\newblock An adaptive accelerated proximal gradient method and its homotopy
  continuation for sparse optimization.
\newblock In {\em International Conference on Machine Learning}, pages 73--81.
  PMLR, 2014.

\bibitem{mackey2011divide}
Lester Mackey, Michael Jordan, and Ameet Talwalkar.
\newblock Divide-and-conquer matrix factorization.
\newblock In {\em Advances in Neural Information Processing Systems},
  volume~24, pages 1134--1142. Curran Associates, Inc., 2011.

\bibitem{maros2022dgd}
Marie Maros and Gesualdo Scutari.
\newblock Dgd$^2$: A linearly convergent distributed algorithm for
  high-dimensional statistical recovery.
\newblock In {\em Advances in Neural Information Processing Systems},
  volume~35, pages 3475--3487. PMLR, 2022.

\bibitem{pmlr-v54-mcmahan17a}
Brendan McMahan, Eider Moore, Daniel Ramage, Seth Hampson, and Blaise Aguera~y
  Arcas.
\newblock {Communication-efficient learning of deep networks from decentralized
  data}.
\newblock In Aarti Singh and Jerry Zhu, editors, {\em Proceedings of the 20th
  International Conference on Artificial Intelligence and Statistics},
  volume~54, pages 1273--1282. PMLR, 20--22 Apr 2017.

\bibitem{nedic10distributed_tutorial}
Angelia Nedic.
\newblock Distributed gradient methods for convex machine learning problems in
  networks.
\newblock {\em IEEE Signal Processing Magazine}, 10:92--101, 2020.

\bibitem{nedic2014distributed}
Angelia Nedi{\'c} and Alex Olshevsky.
\newblock Distributed optimization over time-varying directed graphs.
\newblock {\em IEEE Transactions on Automatic Control}, 60(3):601--615, 2014.

\bibitem{Nedic_Olshevsky_Rabbat2018}
Angelia Nedi\'{c}, Alex Olshevsky, and G.~Rabbat, Michael.
\newblock Network topology and communication-computation tradeoffs in
  decentralized optimization.
\newblock {\em Proceedings of the IEEE}, 106:953--976, 2018.

\bibitem{nedich2016achieving}
Angelia Nedi\'c, Alex Olshevsky, and Wei Shi.
\newblock Achieving geometric convergence for distributed optimization over
  time-varying graphs.
\newblock {\em SIAM J. on Optimization}, 27:2597--2633, July 2016.

\bibitem{Nedic2009}
Angelia Nedi{\'c} and Asuman Ozdaglar.
\newblock Distributed subgradient methods for multi-agent optimization.
\newblock {\em IEEE Transactions on Automatic Control}, 54(1):48--61, January
  2009.

\bibitem{Nedic2010}
Angelia Nedi{\'c}, Asuman Ozdaglar, and Pablo~A. Parrilo.
\newblock Constrained consensus and optimization in multi-agent networks.
\newblock {\em IEEE Transactions on Automatic Control}, 55(4):922--938, April
  2010.

\bibitem{neiswanger2014asymptotically}
Willie Neiswanger, Chong Wang, and Eric~P. Xing.
\newblock Asymptotically exact, embarrassingly parallel mcmc.
\newblock In {\em Proceedings of the Thirtieth Conference on Uncertainty in
  Artificial Intelligence}, UAI'14, page 623–632, 2014.

\bibitem{raskutti2010restricted}
Garvesh Raskutti, Martin~J Wainwright, and Bin Yu.
\newblock Restricted eigenvalue properties for correlated gaussian designs.
\newblock {\em J. of Machine Learning Research}, 11:2241--2259, 2010.

\bibitem{Sayed}
Ali~H. Sayed.
\newblock Adaptation, learning, and optimization over networks.
\newblock {\em Foundations and Trends in Machine Learning}, 7:311--801, January
  2014.

\bibitem{scaman2017optimal}
Kevin Scaman, Francis Bach, S{\'e}bastien Bubeck, Yin~Tat Lee, and Laurent
  Massouli{\'e}.
\newblock Optimal algorithms for smooth and strongly convex distributed
  optimization in networks.
\newblock In {\em Proceedings of the 34th International Conference on Machine
  Learning}, volume~70, pages 3027--3036, 2017.

\bibitem{shang2017computational}
Zuofeng Shang and Guang Cheng.
\newblock Computational limits of a distributed algorithm for smoothing spline.
\newblock {\em J. of Machine Learning Research}, 18(108):1--37, 2017.

\bibitem{shi2018massive}
Chengchun Shi, Wenbin Lu, and Rui Song.
\newblock A massive data framework for {M}-estimators with cubic-rate.
\newblock {\em J. of the American Statistical Association},
  113(524):1698--1709, 2018.

\bibitem{shi2015proximal}
Wei Shi, Qing Ling, Gang Wu, and Wotao Yin.
\newblock A proximal gradient algorithm for decentralized composite
  optimization.
\newblock {\em IEEE Transactions on Signal Processing}, 63(22):6013--6023,
  2015.

\bibitem{NetLASSOarXiv}
Ying Sun, Marie Maros, Gesualdo Scutari, and Guang Chang.
\newblock High-dimensional inference over networks: Linear convergence and
  statistical guarantees.
\newblock {\em Arxiv preprint: arXiv:2201.08507}, pages 1--50, Jan. 2022.

\bibitem{sun2019distributed}
Ying Sun, Gesualdo Scutari, and Amir Daneshmand.
\newblock Distributed optimization based on gradient tracking revisited:
  Enhancing convergence rate via surrogation.
\newblock {\em SIAM J. on Optimization}, 32(2):354--385, 2022.

\bibitem{ScutariSunMathProg}
Ying Sun, Gesualdo Scutari, and Amir Daneshmand.
\newblock Distributed optimization based on gradient tracking revisited:
  Enhancing convergence rate via surrogation.
\newblock {\em SIAM Journal on Optimization}, 32(2):354--385, 2022.

\bibitem{volgushev2019distributed}
Stanislav Volgushev, Shih-Kang Chao, and Guang Cheng.
\newblock Distributed inference for quantile regression processes.
\newblock {\em The Annals of Statistics}, 47(3):1634--1662, 2019.

\bibitem{Wainwright_2019}
Martin~J. Wainwright.
\newblock {\em High-Dimensional Statistics: A Non-Asymptotic Viewpoint}.
\newblock Cambridge Series in Statistical and Probabilistic Mathematics.
  Cambridge University Press, New York, NY, 1st ed edition, 2019.

\bibitem{wang2017efficient}
Jialei Wang, Mladen Kolar, Nathan Srebro, and Tong Zhang.
\newblock Efficient distributed learning with sparsity.
\newblock In {\em International Conference on Machine Learning}, pages
  3636--3645. PMLR, 2017.

\bibitem{wang2014optimal}
Zhaoran Wang, Han Liu, and Tong Zhang.
\newblock Optimal computational and statistical rates of convergence for sparse
  nonconvex learning problems.
\newblock {\em Annals of statistics}, 42(6):2164, 2014.

\bibitem{auzinger2011iterative}
Auzinger Wien.
\newblock {\em Iterative solution of large linear systems}.
\newblock Lecture Notes, TU Wien, 2011.

\bibitem{xiao2013proximal}
Lin Xiao and Tong Zhang.
\newblock A proximal-gradient homotopy method for the sparse least-squares
  problem.
\newblock {\em SIAM J. on Optimization}, 23(2):1062--1091, 2013.

\bibitem{xu2021distributed}
Jinming Xu, Ye~Tian, Ying Sun, and Gesualdo Scutari.
\newblock Distributed algorithms for composite optimization: Unified framework
  and convergence analysis.
\newblock {\em IEEE Transactions on Signal Processing}, 69:3555--3570, 2021.

\bibitem{Xu-TAC:hs}
Jinming Xu, Shanying Zhu, Yeng~Chai Soh, and Lihua Xie.
\newblock Convergence of asynchronous distributed gradient methods over
  stochastic networks.
\newblock {\em IEEE Transactions on Automatic Control}, 63(2):434--448, 2018.

\bibitem{yuan2018exact}
Kun Yuan, Bicheng Ying, Xiaochuan Zhao, and Ali~H Sayed.
\newblock Exact diffusion for distributed optimization and learning—part {I}:
  Algorithm development.
\newblock {\em IEEE Transactions on Signal Processing}, 67(3):708--723, 2018.

\bibitem{zhang2013divide}
Yuchen Zhang, John Duchi, and Martin Wainwright.
\newblock Divide and conquer kernel ridge regression.
\newblock In {\em Conference on Learning Theory}, pages 592--617. PMLR, 2013.

\bibitem{zhang2013communication}
Yuchen Zhang, John~C Duchi, and Martin~J Wainwright.
\newblock Communication-efficient algorithms for statistical optimization.
\newblock {\em J. of Machine Learning Research}, 14(1):3321--3363, 2013.

\bibitem{zhao2016partially}
Tianqi Zhao, Guang Cheng, and Han Liu.
\newblock A partially linear framework for massive heterogeneous data.
\newblock {\em Annals of Statistics}, 44(4):1400, 2016.

\end{thebibliography}

\end{document}